\DeclareMathOperator*{\argmin}{argmin}
\DeclareMathOperator*{\argmax}{argmax}
\newcommand{\regret}{\mathrm{Regret}}
\newcommand{\grad}{\nabla}
\newtheorem{assumption}{Assumption}
\begin{document}

\title{Infinite-Horizon Reinforcement Learning with Multinomial Logistic Function Approximation}

\author{\name Jaehyun Park  \email jhpark@kaist.ac.kr
       \AND
       \name Junyeop Kwon  \email junyeopk@kaist.ac.kr
       \AND
       \name Dabeen Lee$^{\dagger}$ \email dabeenl@kaist.ac.kr
       \AND
       \addr 
        Department of Industrial and Systems Engineering, KAIST, Daejeon 34141, South Korea\\
        $^\dagger$ corresponding author
       }

\editor{My editor}

\maketitle

\begin{abstract}
We study model-based reinforcement learning with non-linear function approximation where the transition function of the underlying Markov decision process (MDP) is given by a multinomial logistic (MNL) model. We develop a provably efficient discounted value iteration-based algorithm that works for both infinite-horizon average-reward and discounted-reward settings. For average-reward communicating MDPs, the algorithm guarantees a regret upper bound of $\tilde{\mathcal{O}}(dD\sqrt{T})$ where $d$ is the dimension of feature mapping, $D$ is the diameter of the underlying MDP, and $T$ is the horizon. For discounted-reward MDPs, our algorithm achieves $\tilde{\mathcal{O}}(d(1-\gamma)^{-2}\sqrt{T})$ regret where $\gamma$ is the discount factor. Then we complement these upper bounds by providing several regret lower bounds. We prove a lower bound of $\Omega(d\sqrt{DT})$ for learning communicating MDPs of diameter $D$ and a lower bound of $\Omega(d(1-\gamma)^{3/2}\sqrt{T})$ for learning discounted-reward MDPs with discount factor $\gamma$. Lastly, we show a regret lower bound of $\Omega(dH^{3/2}\sqrt{K})$ for learning $H$-horizon episodic MDPs with MNL function approximation where $K$ is the number of episodes, which improves upon the best-known lower bound for the finite-horizon setting. 
\end{abstract}

\section{Introduction}

Function approximation schemes have been successful in modern reinforcement learning (RL) under the presence of large state and action spaces. Applications and domains where function approximation approaches have been deployed include video games~\citep{atari-mnih}, Go~\citep{go-silver}, robotics~\citep{robotics}, and autonomous driving~\citep{driving}. Such empirical success has motivated a plethora of theoretical studies that establish provable guarantees for RL with function approximation. The first line of theoretical work considers linear function approximation, such as linear Markov Decision Processes (MDPs)~\citep{yang-wang-2019,jin-linear-2020} and linear mixture MDPs~\citep{pmlr-v120-jia20a,pmlr-v119-ayoub20a,pmlr-v139-zhou21a} where the reward and transition functions are linear in some feature mappings. While (nearly) minimax optimal algorithms have been developed for linear MDPs~\citep{He2023,Agarwal23,Hu2022} and for linear mixture MDPs~\citep{zhou-mixture-finite-optimal}, the linearity assumption is restrictive and rarely holds in practice. In particular, when a linear model is misspecified, these algorithms may suffer from linear regret~\citep{jin-linear-2020}.

RL with general function approximation has recently emerged as an alternative to the linear function approximation framework. The term general here means that it makes minimal structural assumptions about the family of functions taken for approximation. Some concepts that lead to conditions ensuring sample-efficient learning are the Bellman rank~\citep{jiang17}, the eluder dimension~\citep{Wang-Eluder}, the Bellman eluder dimension~\citep{jin2021bellman}, the bilinear class~\citep{bilinear}, the decision-estimation coefficient~\citep{foster2023statistical}, and the generalized eluder coefficient~\citep{zhong2023gec}. Recently, \cite{he2024sampleefficient} considered infinite-horizon average-reward MDPs with general function approximation. However, algorithms for these frameworks require an oracle to query from some abstract function class. In practice, the oracle would correspond to solving an abstract non-convex optimization/regression problem. Furthermore, no regret lower bound has been identified for a general function approximation framework.

More concrete non-linear function approximation models have been proposed recently. \cite{yang-NTK,pmlr-v119-xu20c,pmlr-v120-yang20a} considered representing the $Q$ function by an overparametrized neural network based on the neural tangent kernel. \cite{wang2021optimism} studied generalized linear models for approximating the $Q$ function. \cite{liu-neural,zhang2023on} focused on the case where the $Q$ function is smooth and lies in the Besov space or the Barron space, and they used a two-layer neural network to approximate the $Q$ function. \cite{HwangOh2023} proposed a framework to represent the transition function by a multinomial logistic (MNL) model. Indeed, the multinomial logistic model can naturally represent state transition probabilities, providing a practical alternative to linear function approximation. The model is widely used for modeling multiple outcomes, such as multiclass classification~\citep{mlbook-bishop}, news recommendations~\citep{li-recommendation1,li-recommendation2}, and assortment optimization~\citep{assortment}. 

For RL with MNL approximation, \cite{HwangOh2023} developed an efficient model-based algorithm that achieves an $\tilde{\mathcal{O}}(\kappa^{-1}d H^{2}\sqrt{K})$ regret  where $d$ is the dimension of the transition core, $H$ is the horizon, $K$ is the number of episodes, and $\kappa\in(0,1)$ is a problem-dependent quantity. Recently, \cite{cho2024randomized} and~\cite{li2024provably} developed algorithms that both achieve a regret bound of $\tilde{\mathcal{O}}(dH^{2}\sqrt{K}+\kappa^{-1}d^2H^2)$, avoiding a dependence on $\kappa$ in the leading term. Their algorithms are based on recently proposed online Newton-based parameter estimation schemes for logistic bandits due to~\cite{zhang-sugiyama23,lee2024nearlyminimaxoptimalregret}. 
Moreover, \cite{li2024provably} presented the first lower bound for this setting, given by $\Omega(dH\sqrt{\kappa^* K})$ where $\kappa^*\in(0,1)$ is another problem-dependent constant similar to $\kappa$. 

\paragraph{Our Contributions}
This paper contributes to the RL with MNL approximation literature with the following new theoretical results. Our results are summarized in \Cref{results}.
\begin{table*}[h!]
\caption{Summary of Our Results on Regret Upper and Lower Bounds for RL with MNL Approximation}\label{results}
\begin{center}
\begin{tabular}{c|c|c}
\toprule
{\bf \small Setting} & {\bf\small Regret Upper Bound}   & {\bf \small Regret Lower Bound}\\
\midrule
\multirow{2}{*}{\small Finite-Horizon} & {\small $\tilde{\mathcal{O}}\left(dH^2\sqrt{K}+\kappa^{-1}d^2H^2\right)$}   & {\small $\Omega\left(dH^{3/2}\sqrt{K}\right)$ }\\
& {\small \citep{cho2024randomized,li2024provably}}&(\Cref{thm:lb-finite})\\
\midrule
\multirow{2}{*}{\small Average-Reward} & {\small $\tilde{\mathcal{O}}\left(dD\sqrt{T} + \kappa^{-1}d^2D\right)$} &{\small $\Omega\left(d \sqrt{DT}\right)$}\\
&(\Cref{thm:average-ub})&(\Cref{thm:lb-infinite}) \\
\midrule
\multirow{2}{*}{\small Discounted-Reward} &{\small $\tilde{\mathcal{O}}\left(d(1-\gamma)^2\sqrt{T}+\kappa^{-1}d^2(1-\gamma)^2\right)$ }&{\small $\Omega\left(d (1-\gamma)^{3/2}\sqrt{T}\right)$} \\
&(\Cref{thm:discounted-ub})&(\Cref{thm:lb-infinite-discounted})\\
\bottomrule
\end{tabular}
\end{center}
\end{table*}
\begin{itemize}
    \item We prove that there is a family of $H$-horizon episodic MDPs with MNL transitions for which any algorithm incurs a regret of $\Omega(dH^{3/2}\sqrt{K})$. This improves upon the lower bound due to~\cite{li2024provably} by a factor of $O(\sqrt{H/\kappa^{*}})$.

    \item We develop \texttt{UCMNLK}, a discounted extended value iteration-based algorithm that works for infinite-horizon average-reward and discounted-reward MDPs with MNL function approximation. For learning average-reward MDPs with diameter at most $D$, \texttt{UCMNLK} guarantees $\tilde{\mathcal{O}}(dD\sqrt{T}+\kappa^{-1}d^2D)$ regret. For learning discounted-reward MDPs with discount factor $\gamma$, \texttt{UCMNLK} provides a regret upper bound of $\tilde{\mathcal{O}}((d\sqrt{T}+\kappa^{-1}d^2)/(1-\gamma)^{2})$.
   
    \item We prove a lower bound of $\Omega(d\sqrt{DT})$ for learning infinite-horizon average-reward communicating MDPs with MNL transitions and diameter $D$. 
     
    \item We show a lower bound of $\Omega(d(1-\gamma)^{3/2}\sqrt{T})$ for learning infinite-horizon discounted-reward MDPs with discount factor $\gamma$.
\end{itemize}
While \texttt{UCMNLK} is inspired by \texttt{UCLK} of \cite{pmlr-v139-zhou21a} developed for discounted-reward linear mixture MDPs, it has several novel components and thus works for the average-reward setting as well. First, we develop an efficient extended value iteration scheme for MNL function approximation. As the multinomial logistic probability function is non-convex in the transition parameter vector $\theta$, optimization over $\theta$ is not tractable. Instead, we construct and optimize over confidence polytopes for the true transition probability, thereby achieving computational efficiency. Second, for the average-reward setting, we approximate a given average-reward MDP by a discounted-reward MDP with an appropriate discount factor. We show that the discounted value function returned by extended value iteration has a bounded span. This leads to an analysis based on a novel regret decomposition.

We derive the lower bounds by approximating a multinomial logistic function to a linear function, based on the mean value theorem. This approximation technique allows us to bridge MDPs with a multinomial logistic transition model and linear mixture MDPs. Then we deduce our results from the known regret lower bounds for linear mixture MDPs by~\cite{zhou-mixture-finite-optimal,pmlr-v139-zhou21a,yuewu2022}.

\section{Preliminaries} \label{sec: Preliminaries}

\paragraph{Notations} For a vector $x\in\mathbb{R}^d$ and a positive semidefinite matrix $A\in\mathbb{R}^{d\times d}$, $\|x\|_2$ denotes the $\ell_2$-norm of $x$, and we denote by $\|x\|_A = \sqrt{x^{\top}Ax}$ the weighted $\ell_2$-norm of $x$. Given a matrix $A$, $\|A\|_2$ denotes its spectral norm. For a symmetric matrix $A$, let $\lambda_{\min}(A)$ and $\lambda_{\max}(A)$ denote its minimum and maximum eigenvalues, respectively. Let $\mathbf{1}\{\mathcal{E}\}$ be the indicator function of event $\mathcal{E}$. A random variable $Y \in \mathbb{R}$ is $R$-sub-Gaussian if $\mathbb{E} [Y] = 0$ and $\mathbb{E} [\exp(sY)] \le \exp ({R^2 s^2 /2})$ for any $s \in \mathbb{R}$. Let $\Delta(\mathcal{X})$ denote the family of probability measures on $\mathcal{X}$. For any positive integers $m,n$ with $m<n$, $[n]$ and $[m:n]$ denote $\{1,\ldots,n\}$ and $\{m,\ldots,n\}$, respectively.
    
\subsection{Infinite-Horizon Average-Reward MDP}\label{sec:infinite-average}

We consider an infinite-horizon MDP specified by $M = (\mathcal{S}, \mathcal{A}, p, r)$, 
where $\mathcal{S}$ is the state space, $\mathcal{A}$ is the action space, $p(s'\mid s,a)$ denotes the unknown transition probability of transitioning to state $s^{\prime}$ from state $s$ after taking action $a$, and $r : \mathcal{S} \times \mathcal{A} \rightarrow [0, 1]$ is the known reward function. Throughout this paper, we assume that both $\mathcal{S}$ and $\mathcal{A}$ are finite. A stationary policy $\pi : \mathcal{S} \rightarrow \Delta(\mathcal{A})$ is given by $\pi(a\mid s)$ specifying the probability of taking action $a$ at state $s$. When $\pi$ is deterministic, i.e., for each $s\in \mathcal{S}$ there exists $a\in\mathcal{A}$ with $\pi(a\mid s)=1$, we write that $a=\pi(s)$ with abuse of notation. Starting from an initial state $s_1=s$, for each time step $t$, an algorithm $\mathfrak{A}$ selects action $a_t$ based on state $s_t$, and then $s_{t+1}$ is drawn according to the transition function $p(\cdot\mid s_t, a_t)$. Then the cumulative reward of $\mathfrak{A}$ incurred over $T$ times steps is given by $R(\mathfrak{A}, s,T)=\sum_{t=1}^T r(s_t,a_t)$,  and the average reward of $\mathfrak{A}$ is defined as $J(\mathfrak{A},s)= \liminf_{T\to\infty}\mathbb{E}\left[R(\mathfrak{A},s,T)\mid s_1=s\right]/T$. It is known that the average reward can be maximized by a deterministic stationary policy~\cite[See][]{puterman2014markov}. Given a stationary policy $\pi$ starting from state $s$, the average reward is given by $J^{\pi}(s) = \liminf_{T\to\infty} \mathbb{E}\left[\sum_{t=1}^T r(s_t,a_t)\mid s_1=s\right]/T$.

In this paper, following \cite{Auer_Jaksch2010}, we focus on the class of communicating MDPs that have a finite diameter. Here, the diameter is defined as follows. Given an MDP $M$ and a policy $\pi$, let $T(s' \mid M, \pi, s)$ denote the number of steps after which state $s'$ is reached from state $s$ for the first time. Then the diameter of $M$ is defined as $D(M) = \max_{s \neq s' \in \mathcal{S}} \min_{\pi: \mathcal{S} \rightarrow \mathcal{A}} \mathbb{E} \left[T(s' \mid M, \pi, s)\right]$. %
For a communicating MDP $M$, it is known that the optimal average reward does not depend on the initial state $s$~\cite[See][]{puterman2014markov}, and therefore, there exists $J^*$ such that $J^* = J^*(s) :=\max_{\pi}J^\pi(s).$
Based on this, we consider $\regret(T) = T\cdot J^* -\sum_{t=1}^T r(s_t,a_t)$ as our notion of regret to assess the performance of any algorithm $\mathfrak{A}$ for infinite-horizon average-reward MDPs.

\subsection{Discounted-Reward MDP}\label{sec:infinite-discounted}

Given an infinite-horizon MDP $M=(\mathcal{S}, \mathcal{A}, p,r)$, consider a non-stationary  policy $\pi$ given by $\{\pi_t\}_{t=1}^\infty$ where $\pi_t:\{\mathcal{S}\times\mathcal{A}\}^{t-1}\times \mathcal{S}\to \Delta(A)$  samples an action from $\mathcal{A}$ based on history $(s_1,a_1,\ldots, s_{t-1},a_{t-1},s_t)$. Given a discount factor $\gamma\in[0,1)$, we consider the value function and the action-value function defined as $V_t^{\pi}(s)= \mathbb{E}\left[\sum_{i=0}^\infty \gamma^i r(s_{t+i},s_{t+i})\mid s_t=s\right]$ and $Q_t^{\pi}(s,a)= \mathbb{E}\left[\sum_{i=0}^\infty \gamma^i r(s_{t+i},s_{t+i})\mid s_t=s,a_t=a\right]$
for $(s,a)\in\mathcal{S}\times \mathcal{A}$. Note that $V_t^{\pi}(s)$ and $Q_t^{\pi}(s,a)$ capture the infinite-horizon discounted reward under policy $\pi$ from time step $t$. The functions, however, are well-defined only when the probability of the event that $s_t=s$ and $a_t=a$ is positive. Nevertheless, \citep[Appendix A]{zhou-mixture-finite-optimal} provides a slightly more technical definition that avoids the issue and is consistent with the above definition. Furthermore, we define the optimal value function $V^*$ and the optimal action-value function $Q^*$ as $V^*(s)= \max_{\pi} V_1^{\pi}(s)$ and  $Q^*(s,a)= \max_{\pi} Q_1^{\pi}(s,a)$ for $(s,a)\in\mathcal{S}\times\mathcal{A}$. It is known that there exists a deterministic stationary policy $\pi^*$ such that $V_1^{\pi^*}(s)=V^*(s)$ and $Q_1^{\pi^*}(s,a)=Q^*(s,a)$ for $(s,a)\in\mathcal{S}\times\mathcal{A}$~\citep[See][]{puterman2014markov,agarwal21}. Moreover, $V^*$ and $Q^*$ satisfy the following Bellman optimality equation. 
\begin{equation}\label{bellman-discounted}
\begin{aligned}
Q^*(s,a) &= r(s,a) + \gamma\sum_{s'\in\mathcal{S}}p(s'\mid s,a)V^*(s')\\
V^*(s)&=\max_{a\in\mathcal{A}}Q^*(s,a).
\end{aligned}
\end{equation}
For discounted-reward MDPs, following~\citep{liu2021regretboundsdiscountedmdps,pmlr-v139-zhou21a}, we consider $\regret(\pi,T) =\sum_{t=1}^T V^*(s_t) - \sum_{t=1}^TV_t^{\pi}(s_t)$ as our notion of regret of a non-stationary policy $\pi$ for discounted-reward MDPs.

\subsection{Multinomial Logistic Model}

Despite being finite, the state space $\mathcal{S}$ and the action space $\mathcal{A}$ can be intractably large, in which case tabular model-based reinforcement learning algorithms suffer from a large regret. To remedy this, linear and linear mixture MDPs take some structural assumptions on the underlying MDP which lead to efficient learning. However, imposing linearity structures is indeed restrictive and limits the scope of practical applications. Inspired by this issue, we consider the recent framework of MNL function approximation proposed by \cite{HwangOh2023}, assuming that the transition function is given by a feature-based multinomial logistic model as follows. For each $(s, a, s') \in \mathcal{S} \times \mathcal{A} \times \mathcal{S}$, its associated feature vector ${\varphi}(s,a,s') \in \mathbb{R}^d$ is known, and the transition probability is given by
\begin{equation}\label{transition-mnl}
p(s' \mid s, a) := p(s'\mid s,a,\theta^*)
\end{equation}
where 
$$p(s'\mid s,a,\theta):=\frac{\exp\left({\varphi}(s,a,s')^{\top} {\theta} \right)}{\sum_{s'' \in \mathcal{S}_{s,a}} \exp \left({\varphi}(s_t,a_t,s'')^\top \theta \right)}.$$
Here, ${\theta}^* \in \mathbb{R}^d$ is an unknown parameter, which we refer to as the transition core, and $\mathcal{S}_{s,a} := \left\{ s' \in \mathcal{S} : \mathbb{P} (s' \mid s,a) > 0 \right\}$ is the set of reachable states from $s$ in one step after taking action $a$. Let $\mathcal{U} := \max_{(s,a) \in \mathcal{S} \times \mathcal{A}} | \mathcal{S}_{s,a}|$. The general intuition is that the ambient dimension $d$ of the feature vectors and the parameter vector is small compared to the size of $\mathcal{S}$ and that of $\mathcal{A}$. Moreover, it is often the case that $\mathcal{S}_{s,a}$ is small in comparison with $\mathcal{S}$. 

Throughout this paper, we assume the following.
\begin{assumption}\label{ass:L bound}
There exist some $L_{{\varphi}}, L_{{\theta}}$ such that  $\left\| {\varphi} (s,a,s') \right\|_2 \le L_{{\varphi}}$ for all $(s,a,s') \in \mathcal{S} \times \mathcal{A}\times \mathcal{S}$ and $\left\| {\theta}^* \right\|_2 \le L_{{\theta}}$.
\end{assumption}
\noindent
\Cref{ass:L bound} is standard in contextual bandits and RL with function approximation. Let $\Theta=\{\theta\in\mathbb{R}^d: \|\theta\|_2\leq L_\theta\}$.
\begin{assumption} \label{ass:kappa bound}
There exists $\kappa\in(0,1)$ such that we have $\inf_{{\theta}\in\Theta} p_{t,s'}( {\theta} ) p_{t,s''}({\theta}) \ge \kappa$ for all $t\in[T]$ and  $s', s'' \in \mathcal{S}_{s_t,a_t}$.
\end{assumption}
\noindent
\Cref{ass:kappa bound} is also common in the generalized linear contextual bandit literature~\citep{NIPS2010_c2626d85,pmlr-v70-li17c,Oh2019,pmlr-v115-kveton20a,russac2020algorithms} and is taken for RL with MNL function approximation~\citep{HwangOh2023,li2024provably,cho2024randomized}. It guarantees that the associated Fisher information matrix of the log-likelihood function in our setting is non-singular. 
\begin{assumption} \label{ass:recenter}
For every $(s,a)\in \mathcal{S}\times \mathcal{A}$, there exists $s'\in \mathcal{S}_{s,a}$ such that $\varphi(s,a,s')=0$.
\end{assumption}
\noindent
In fact, we may impose~\Cref{ass:recenter} without loss of generality, by the following procedure. For a given pair $(s,a)\in \mathcal{S}\times \mathcal{A}$, we take an arbitrary $s'\in \mathcal{S}_{s,a}$ and replace $\varphi(s,a,s'')$ by $\varphi(s,a,s'')-\varphi(s,a,s')$ for all $s''\in\mathcal{S}_{s,a}$. Note that $\|\varphi(s,a,s'')-\varphi(s,a,s')\|_2\leq 2 L_{\varphi}$ and the probability term $p(s'\mid s,a,\theta)$  remains the same. Therefore, up to doubling the parameter $L_{\varphi}$, Assumptions~\ref{ass:L bound} and \ref{ass:kappa bound} remain valid even after the procedure to enforce \Cref{ass:recenter}.

\section{Algorithm and Regret Bounds}\label{sec:algorithm}

In this section, we present our algorithm, upper-confidence multinomial logistic kernel reinforcement learning (\texttt{UCMNLK} described by \Cref{alg:UCMNLK}). \texttt{UCMNLK} runs extended value iteration on a discounted-reward MDP. For the average-reward MDP, we approximate it by a discounted-reward MDP. \texttt{UCMNLK} is inspired by \texttt{UCLK} by~\cite{pmlr-v139-zhou21a} for infinite-horizon discounted-reward linear mixture MDPs. In contrast to \texttt{UCLK}, however, \texttt{UCMNLK} optimizes over the transition probability $p$, not the parameter vector $\theta$. This is because for our MNL function approximation framework, optimization over $\theta$ is a non-convex problem while optimizing over $p$ is a linear program.

We construct certain confidence polytopes for the true transition probability function based on the recent online Newton method-based technique for estimating the transition probability vector due to~\citep{zhang-sugiyama23,lee2024nearlyminimaxoptimalregret,cho2024randomized,li2024provably}, explained in \Cref{sec:confidence}.

\subsection{Confidence Polytope for the True Transition Function}\label{sec:confidence}

For simplicity, we use shorthand notation $\mathcal{S}_t$ for $\mathcal{S}_{s_t,a_t}$ for $t\in[T]$. We define the transition response variable $y_{t,s'}:=\mathbf{1}\left\{s_{t+1} = s'\right\}$ for $t\in[T]$ and $s'\in \mathcal{S}_{t}$. Here, $y_{t,s'}$ basically corresponds to a sample from the multinomial distribution over $\mathcal{S}_{t}$ with probability $p(s'|s_t,a_t)$. Next, we introduce notation $p_{t,s'}(\theta)$ to denote
$p_{t,s'}(\theta)=p(s'\mid s_t,a_t,\theta).$
Then we have $p(s'\mid s,a,\theta^*)=p(s'\mid s,a)$ and $p_{t,s'}(\theta^*)= p(s'\mid s_t,a_t)$. For each time step $t\in[T]$, we consider a per-time loss function, its gradient, and its Hessian given by
\begin{equation}\label{mle}
\begin{aligned}
\ell_t(\theta) &= - \sum_{s' \in \mathcal{S}_{t}} y_{t,s'} \log p_{t,s'}( {\theta}),\\
\grad_{{\theta}} \left( \ell_{t}({\theta}) \right) 
    &= - \sum_{s' \in \mathcal{S}_{t}} \left( y_{t,s'}  - p_{t,s'}( {\theta}) \right) {\varphi}_{t,s'},\\
\grad^2_\theta(\ell_t(\theta))&=\sum_{s' \in \mathcal{S}_t} p_{t,s'} ({\theta}) {\varphi}_{t, s'} {\varphi}_{t, s'}^{\top}- \sum_{s' \in \mathcal{S}_t} \sum\limits_{s'' \in \mathcal{S}_t} p_{t,s'} ({\theta}) p_{t,s''}({\theta}) {\varphi}_{t, s'} {\varphi}^{\top}_{t, s''},
\end{aligned}
\end{equation}
respectively. We show in \Cref{sec:basic} that the Hessian $\grad_\theta^2(\ell_t(\theta))$ is positive semidefinite for any $\theta\in \Theta$ under \Cref{ass:kappa bound}. 
Motivated by recent progress on online learning frameworks for multinomial logistic bandit~\citep{zhang-sugiyama23}, multinomial logit contextual bandit~\citep{lee2024nearlyminimaxoptimalregret}, and RL with MNL approximation~\citep{li2024provably,cho2024randomized}, we apply the following online algorithm to estimate the true transition core $\theta^*$.  We start with $\widehat \theta_1=0$. At time step $t\in[T]$, given $\widehat \theta_1,\ldots,\widehat \theta_t$, we prepare
\begin{equation}\label{sigma-matrix}
\begin{aligned}
    \Sigma_t = \lambda I_d+\sum_{i=1}^{t-1} \grad^2_\theta(\ell_i(\widehat\theta_{i+1})),\ \widehat \Sigma_t = \Sigma_t + \eta \grad^2_\theta(\ell_t(\widehat \theta_t))
    \end{aligned}
\end{equation}
where $\eta$ is a step size and $I_d$ is the $d\times d$ identity matrix. Note that $\widehat \Sigma_t$ is positive definite. Then we set $\widehat \theta_{t+1}$ to
\begin{equation}\label{update-core}
\begin{aligned}
\argmin_{\theta\in\Theta}\left\{\grad_\theta(\ell_t(\widehat\theta_t))^\top (\theta - \widehat \theta_t) + \frac{1}{2\eta}\|\theta - \widehat\theta_t\|^2_{\widehat\Sigma_t}\right\}.
\end{aligned}
\end{equation}
As~\eqref{update-core} can be viewed as an online mirror descent step with the associated Bregman divergence given by $\|\theta- \vartheta\|_{\widehat \Sigma_t}^2/2$, we may compute $\widehat \theta_{t+1}$ as follows.
$$\widehat \theta_{t+1} = \argmin_{\theta\in\Theta} \left\|\theta - \right(\widehat \theta_t - \eta \widehat\Sigma_t^{-1}\grad_\theta(\ell_t(\widehat \theta_t))\left)\right\|_{\widehat\Sigma_t}$$
The following lemma provides confidence ellipsoids for estimating the transition core $\theta^*$.
\begin{lemma}\label{lem:confidence interval}
Suppose that Assumptions~\ref{ass:L bound}--\ref{ass:recenter} hold. Let $\delta \in (0,1)$, $\eta = (1/2)\log\mathcal{U}+(L_\theta L_\varphi +1)$, and $\lambda \geq 84\sqrt{2}(L_\theta L_\varphi^3 + dL_\varphi^2)\eta$. With probability at least $1-\delta$, $\theta^*$ is contained in
  \begin{equation} \label{eq:confidence-set}\mathcal{C}_{t}:=\left\{\theta\in \Theta:\|\widehat \theta_{t} - \theta^*\|_{\Sigma_{t}}\leq \beta_t\right\}
    \end{equation}
where $\beta_{t} = f(L_\theta,L_\varphi)\sqrt{d}(\log(\mathcal{U}t/\delta))^2$ for every $t\in [T]$ and $f$ is a polynomial in $L_\theta,L_\varphi$.
    \end{lemma}
Based on Lemma~\ref{lem:confidence interval}, we may construct confidence sets for the true transition function. Let $p^*\in\mathbb{R}^{S\times A\times S}$ denote a vector representation of the true transition function. That is, the coordinate $p^*_{s,a,s'}$ of $p^*$ corresponding to $(s,a,s')\in \mathcal{S}\times \mathcal{A}\times \mathcal{S}$ equals $p(s'\mid s,a, \theta^*)$.
\begin{lemma}\label{lem:confidence-polytope}
Suppose that $\theta^*\in \mathcal{C}_t$ where $\mathcal{C}_t$ is defined as in~\eqref{eq:confidence-set}. Let $p^*\in\mathbb{R}^{S\times A\times S}$ be the vector representation of the true transition function. Then for $t\in[T]$,
\begin{equation}\label{eq:confidence-polytope}
p^*\in \mathcal{P}_t:=\left\{p\in [0,1]^{S\times A\times S}: 
    p\text{ satisfies }\eqref{constraint1},\eqref{constraint2}\right\}
\end{equation}
where 
\begin{align}
&\sum_{s'\in\mathcal{S}_{s,a}} p_{s,a,s'}=1,\label{constraint1}\\ 
&\sum_{s'\in\mathcal{S}_{s,a}}\left|p_{s,a,s'}- p(s'\mid s,a,\widehat \theta_t)\right|\leq B_{s,a}^{1,t}+B_{s,a}^{2,t}\label{constraint2}
\end{align}
 with $B_{s,a}^{1,t}=\beta_t\sum_{s'\in \mathcal{S}_{s,a}}p(s'\mid s,a,\widehat \theta_t)\|\varphi(s,a,s')- \sum_{s''\in \mathcal{S}_{s,a}}p(s'\mid s,a,\widehat\theta_t)\varphi(s,a,s'')\|_{\Sigma_t^{-1}}$ and $B_{s,a}^{2,t}=3\beta_t^2 \max_{s'\in\mathcal{S}_{s,a}}\|\varphi(s,a,s')\|_{\Sigma_t^{-1}}^2$ for all $(s,a)\in\mathcal{S}\times\mathcal{A}$.
    \end{lemma}
Note that $\mathcal{P}_t$ defined in Lemma~\ref{lem:confidence-polytope} is a polytope. Hence, one can efficiently optimize a linear function over $\mathcal{P}_t$.

\subsection{Algorithm Description of \texttt{UCMLK}}\label{sec:UCMNLK}

\Cref{alg:UCMNLK} describes \texttt{UCMNLK}. 
As \texttt{UCRL2-VTR}~\citep{yuewu2022} and \texttt{UCLK}~\citep{pmlr-v139-zhou21a}, \texttt{UCMNLK} proceeds with multiple episodes. Each episode consists of the planning phase and the execution phase. 

\begin{algorithm}[tb] 
\renewcommand\thealgorithm{1}
\caption{Upper-Confidence Multinomial Logistic Kernel Reinforcement Learning (\texttt{UCMNLK})}
\label{alg:UCMNLK}
\begin{algorithmic}
\STATE\textbf{Input:} 
 feature map $\varphi:\mathcal{S} \times \mathcal{A} \times \mathcal{S} \rightarrow \mathbb{R}^d$, 
    confidence level $\delta \in (0,1)$, discount factor $\gamma\in[0,1)$, number of rounds $N$, and parameters $\lambda,L_\varphi, L_\theta, \kappa,\mathcal{U}$
\STATE\textbf{Initialize:} 
  $t = 1$, $\widehat{{\theta}}_1 = 0$,
    $\Sigma_1 = \lambda {I_d}$, %
    and observe the initial state $s_1 \in \mathcal{S}$ 
\FOR {episodes $k=1,2,\ldots,$}
\STATE Set $t_k = t$
\STATE Set $Q_k$ as the output of $\texttt{DEVI}(\gamma,\mathcal{P}_{t_k},N)$ where $\mathcal{P}_{t_k}$ is given as in~\eqref{eq:confidence-polytope} 
\STATE Take a deterministic policy $\pi_k$ by taking $\pi_{k}(s)\in \argmax_{a\in\mathcal{A}} Q_k(s,a)$ for $s\in\mathcal{S}$
\WHILE {$\det(\Sigma_t) \le 2 \det(\Sigma_{t_k})$}
\STATE Take action $a_t = \pi_k(s_t)$ \STATE Observe $s_{t+1}$ sampled from $p(\cdot\mid s_t,a_t)$
\STATE Compute $\widehat \theta_{t+1}$ as in~\eqref{update-core}
\STATE Set ${\Sigma}_{t+1} = {\Sigma}_t + \grad^2_\theta(\ell_t(\widehat \theta_{t+1}))$ as in~\eqref{sigma-matrix}
\STATE Update $t\leftarrow t+1$
\ENDWHILE
\ENDFOR
\end{algorithmic}
\end{algorithm}
\begin{algorithm}[!ht]
   \caption*{Discounted Extended Value Iteration (\texttt{DEVI}($\gamma,\mathcal{P},N$))}
\begin{algorithmic}
\STATE \textbf{Inputs:} 
    discount factor $\gamma$, number of rounds $N$, confidence polytope $\mathcal{P}$
\STATE \textbf{Initialize:}  $Q^{(0)}(s,a) =(1-\gamma)^{-1}$ for $(s,a) \in \mathcal{S}\times \mathcal{A}$ 
\FOR {rounds $n=1,2,\ldots,N$}
\STATE Set $V^{(n-1)}(s)=\max_{a\in\mathcal{A}} Q^{(n-1)}(s,a)$ for $s\in\mathcal{S}$ 
\STATE For $(s,a)\in \mathcal{S}\times\mathcal{A}$, set 
\begin{align*}
&Q^{(n)}(s,a)\\
&=r(s,a) + \gamma \max_{p\in\mathcal{P}}\left\{\sum\nolimits_{s'\in\mathcal{S}_{s,a}}p_{s,a,s'}V^{(n-1)}(s')\right\}
\end{align*}
\ENDFOR
\STATE \textbf{Return} $Q^{(N)}(s,a)$ for $(s,a)\in \mathcal{S}\times\mathcal{A}$
\end{algorithmic}
\end{algorithm}

In the planning phase, \texttt{UCMNLK} computes an optimistic policy by running discounted extended value iteration (\texttt{DEVI}) as follows. For the $k$th episode, we denote by $t_k$ the first time step of episode $k$. Before episode $k$ begins, we construct $\widehat \theta_{t_k}$ and $\mathcal{C}_{t_k}$ for estimating $\theta^*$ based on~\eqref{update-core} and~\eqref{eq:confidence-set}. Then we prepare the confidence polytope $\mathcal{P}_{t_k}$ according to~\eqref{eq:confidence-polytope}, over which we run \texttt{DEVI}. Lastly, based on the action-value function $Q_k$ returned by \texttt{DEVI}, we deduce a greedy policy $\pi_k$. 

In the execution phase, \texttt{UCMNLK} applies the policy $\pi_k$ and receives a trajectory with corresponding rewards. Note that \texttt{UCMNLK} switches to the next episode when the determinant of the matrix $\Sigma_t$ doubles compared to the beginning of episode $k$. 

Note that in one round of extended value iteration, we optimize over the probability distributions $p$ in the confidence polytope $\mathcal{P}_{t_k}$. In our case, optimizing over $\theta$ requires maximizing the sum of multinomial logistic functions, which is a non-convex optimization problem. Instead, we maximize over probability distribution $p$, which boils down to solving a linear program.

The following results state our regret upper bounds of \texttt{UCMLK} for the average-reward and the discounted-reward settings.

\begin{theorem}[{\bf Average-Reward}] \label{thm:average-ub}
    Let $M$ be an average-reward MDP governed by the model~\eqref{transition-mnl} with diameter at most $D$. 
    Let $\delta\in(0,1)$, $\eta = (1/2)\log\mathcal{U}+(L_\theta L_\varphi +1)$, $\lambda \geq 84\sqrt{2}(L_\theta L_\varphi^3 + dL_\varphi^2)\eta$, $\gamma = 1-\sqrt{d/DT}$, and $N\geq \sqrt{{DT}/{d}}\log({\sqrt{T}}/{dD})$. Then \texttt{UCMNLK} guarantees that with probability at least $1-2\delta$,
    \begin{equation*}
   \regret(T) =\tilde{\mathcal{O}}\left(f(L_\theta,L_\varphi) \left(dD \sqrt{T}+\kappa^{-1}d^2D\right)\right)
    \end{equation*}
     where $f$ is a polynomial in $(L_\theta,L_\varphi)$ and $\tilde{\mathcal{O}}(\cdot)$ hides logarithmic factors of $T$, $\mathcal{U}$, and $1/\delta$.
    \end{theorem}

\begin{theorem}[{\bf Discounted-Reward}] \label{thm:discounted-ub}
    Let $M$ be a discounted-reward MDP governed by~\eqref{transition-mnl}.
   Let $\delta\in(0,1)$, $\eta = (1/2)\log\mathcal{U}+(L_\theta L_\varphi +1)$, $\lambda \geq 84\sqrt{2}(L_\theta L_\varphi^3 + dL_\varphi^2)\eta$,  and $N\geq \log({\sqrt{T}}/{d})/(1-\gamma)$. 
   Then \texttt{UCMNLK} guarantees that with probability at least $1-2\delta$,
    \begin{align*}
   \regret(\pi, T) 
&=\tilde{\mathcal{O}}\left(f(L_\theta,L_\varphi) \left(d (1-\gamma)^{-2}\sqrt{T}+\kappa^{-1}d^2(1-\gamma)^{-2}\right)\right)
    \end{align*}
      where $f$ is a polynomial in $(L_\theta,L_\varphi)$ and  $\tilde{\mathcal{O}}(\cdot)$ hides logarithmic factors of $T$, $\mathcal{U}$, and $1/\delta$.
    \end{theorem}

\subsection{Regret Analysis of \texttt{UCMNLK}}\label{sec:UCMNLK-analysis}

We denote by $V_k$ the value function for the $k$th episode of \Cref{alg:UCMNLK} given by $V_k(s) = \max_{a\in\mathcal{A}} Q_k(s,a)$ for $s\in\mathcal{S}$. We prove the following lemma establishing convergence of \texttt{DEVI}. \begin{lemma}\label{convergence-devi}
Suppose that $\theta^*\in \mathcal{C}_{t}$ for $t\in[T]$ where $\mathcal{C}_{t}$ is defined as in~\eqref{eq:confidence-set}. Then for each episode $k$ and $t_k\leq t<t_{k+1}-1$, it holds that
\begin{align*}
Q_k(s_t,a_t)
&\leq
r(s_t,a_t) + \gamma \max_{p\in\mathcal{P}_{t_k}}\left\{\sum\nolimits_{s'\in\mathcal{S}_t} p_{s_t,a_t,s'} V_k(s')\right\}+\gamma^N.
\end{align*}
\end{lemma}
Let $K_T$ denote the total number of distinct episodes over the horizon of $T$ time steps. For simplicity, we assume that the last time step of the last episode and that time step $T+1$ is the beginning of the $(K_T+1)$th episode, i.e., $t_{K_T+1} = T+1$. Then it follows from \Cref{convergence-devi} that the regret function for the average-reward case satisfies the following.
\begin{align*}
\begin{aligned}
\regret(T)&= T\cdot J^*-\sum_{t=1}^T r(s_t,a_t) \\
&\leq T\gamma^N+\underbrace{\sum_{k=1}^{K_T}\sum_{t=t_k}^{t_{k+1}-1}\left(J^* -(1-\gamma)V_k(s_{t+1})\right)}_{(a)}+  \underbrace{\sum_{k=1}^{K_T}\sum_{t=t_k}^{t_{k+1}-1}\left(V_k(s_{t+1})-Q_k(s_t,a_t)\right)}_{(b)}\\
&\quad + \underbrace{\gamma\sum_{k=1}^{K_T}\sum_{t=t_k}^{t_{k+1}-1}\left(\sum_{s'\in\mathcal{S}_t}p^*_{s_t,a_t,s'}V_k(s') - V_k(s_{t+1})\right)}_{(c)}\\
&\quad +\underbrace{\gamma\sum_{k=1}^{K_T}\sum_{t=t_k}^{t_{k+1}-1}\max_{p\in\mathcal{P}_{t_k}}\left\{\sum_{s'\in\mathcal{S}_t}\left(p_{s_t,a_t,s'}-p^*_{s_t,a_t,s'}\right)V_k(s')\right\}}_{(d)}.
\end{aligned}
\end{align*}

For regret term $(a)$, recall that $V^*$ and $Q^*$ are the optimal value function and the optimal action-value function for the discounted-reward setting with discount factor $\gamma$. The following lemma proves that $V_k$ and $Q_k$ are optimistic estimators of $V^*$ and $Q^*$, respectively.
\begin{lemma}\label{average-optimism}
Suppose that $\theta^*\in \mathcal{C}_{t}$ for $t\in[T]$ where $\mathcal{C}_{t}$ is defined as in~\eqref{eq:confidence-set}. Then for each episode $k$, $1/(1-\gamma)\geq V_k(s)\geq V^*(s)$ and $1/(1-\gamma)\geq Q_k(s,a)\geq Q^*(s,a)$.
\end{lemma}
\Cref{average-optimism} implies that $J^*-(1-\gamma)V_k(s_{t+1})\leq J^* - (1-\gamma)V^*(s_{t+1})$. Then we apply \citep[Lemma 2,][]{pmlr-v119-wei20c} to argue that $J^* - (1-\gamma)V^*(s_{t+1})\leq (1-\gamma)D$. For regret term $(b)$, note that $V_k(s_{t+1})=Q_k(s_{t+1},a_{t+1})$ for $t\in[t_k:t_{k+1}-2]$, which leads to a telescoping structure. For regret term $(c)$, we first observe that $\sum_{s'\in\mathcal{S}_t}p^*_{s_t,a_t,s'}V_k(s') - V_k(s_{t+1})$ equals $\sum_{s'\in\mathcal{S}_t}p^*_{s_t,a_t,s'}W_k(s') - W_k(s_{t+1})$ where $W_k= V_k - \min_{s'\in\mathcal{S}} V_k(s')$. Then the following lemma implies that $W_k(s)\in[0,D]$ for any $s\in\mathcal{S}$.
\begin{lemma}\label{bounded-span}
Suppose that $\theta^*\in \mathcal{C}_{t}$ for $t\in[T]$ where $\mathcal{C}_{t}$ is defined as in~\eqref{eq:confidence-set}. If the underlying MDP has diameter at most $D$, $\max_{s\in\mathcal{S}}V_k(s)- \min_{s\in\mathcal{S}}V_k(s)\leq D$ for each episode $k$.
\end{lemma}
Based on this, regret term $(c)$ is the sum of a martingale difference sequence where each element has an absolute value at most $D$. Regret term $(d)$ is the cumulative estimation error. Based on \Cref{lem:confidence-polytope,bounded-span}, we can show that $(d)$ is bounded above by $D\sum_{k=1}^{K_T}\sum_{t=t_k}^{t_{k_1}-1}(B_{s_t,a_t}^{1,t_k}+B_{s_t,a_t}^{2,t_k})=\tilde{\mathcal{O}}(dD\sqrt{T})$
which corresponds to the leading term in the regret upper bound of \Cref{thm:average-ub}.

The discounted-reward case is similar to the average-reward case, and its analysis follows the analysis of \texttt{UCLK} due to~\cite{pmlr-v139-zhou21a}.

\begin{figure*}[h!]
\begin{center}
\begin{tikzpicture}[
    scale = 0.9, every node/.style={scale=0.9},
    roundnode/.style={circle, draw=red!60, fill=red!5, thick, minimum size = 10mm},
    squarednode/.style={rectangle, draw=red!60, fill=red!5, thick},
    ]

    \node[roundnode] (x_1) at (-2,1) {$x_1$};
    \node[roundnode] (x_2) at (1.5,1) {$x_{2}$};
    \node [state,draw opacity = 0, fill opacity = 1] (x_4) at (5,1) {$\displaystyle\cdots$};
    \node[roundnode] (x_h) at (8.5,1) {\small$x_{H}$};
    \node[roundnode] (x_h+1) at (12,1) {\small$x_{H+1}$};

    \node[squarednode,rounded corners, minimum width = 130mm, minimum height = 8mm, ] (x_h+2) at (3.5,-3) {$x_{H+2}$};
    \node[squarednode, opacity = 0, minimum size = 8mm] (op1) at (-2,-3) {};
    \node[squarednode, opacity = 0, minimum size = 8mm] (op2) at (1.5,-3) {};
    \node[squarednode, opacity = 0, minimum size = 8mm] (op3) at (5,-3) {};
    \node[squarednode, opacity = 0, minimum size = 8mm] (op4) at (8.5,-3) {};
    \node[squarednode, opacity = 0, minimum size = 10mm] (op5) at (9.5,-3) {};

    \path [-stealth]
        (x_1) edge [bend right] node {} (x_2)
        (x_2) edge [bend right] node {} (x_4)
        (x_4) edge [bend right] node {} (x_h)
        (x_h) edge [bend right] node {} (x_h+1)

        (x_1) edge [bend left] node[below left] {} (op1)
        (x_2) edge [bend left] node {} (op2)
        (x_4) edge [bend left] node {} (op3)
        (x_h) edge [bend left] node {} (op4)
 
        (x_h+1) edge [loop below] node[] {1} ()
        (op5) edge [loop right] node[] {1} ();
    \path [-stealth, dashed]
        (x_1) edge [bend left] node[above] {\small $\frac{(H-1)\exp(-a^\top\theta_1)}{1+(H-1)\exp(-a^\top\theta_1)}$} (x_2)
        (x_2) edge [bend left] node[above] {} (x_4)
        (x_4) edge [bend left] node {} (x_h)
        (x_h) edge [bend left] node[above] {\small $\frac{(H-1)\exp(-a^\top\theta_H)}{1+(H-1)\exp(-a^\top\theta_1)}$} (x_h+1)

        (x_1) edge [bend right] node[below right] {\small $\frac{1}{1+(H-1)\exp(-a^\top\theta_1)}$}  (op1)
        (x_2) edge [bend right] node[below right] {\small $\frac{1}{1+(H-1)\exp(-a^\top\theta_2)}$} (op2)
        (x_4) edge [bend right] node {} (op3)
        (x_h) edge [bend right] node[below right] {\small $\frac{1}{1+(H-1)\exp(-a^\top\theta_H)}$} (op4);

    \draw (-0.25, 1.1) node {$\displaystyle\vdots$};
    \draw (3.25, 1.1) node {$\displaystyle\vdots$};
    \draw (6.75, 1.1) node {$\displaystyle\vdots$};
    \draw (10.25, 1.1) node {$\displaystyle\vdots$};
    
    \draw (-2, -1) node {$\displaystyle\cdots$};
    \draw (1.5, -1) node {$\displaystyle\cdots$};
    \draw (5, -1) node {$\displaystyle\cdots$};
    \draw (8.5, -1) node {$\displaystyle\cdots$};

\end{tikzpicture}%
\caption{Illustration of the Hard Finite-Horizon MDP Instance}\label{fig:finite}
\end{center}
\end{figure*}

\section{Regret Lower Bounds}

In this section, we provide regret lower bounds for learning MDPs with MNL function approximation. \Cref{sec:main-lb-finite} provides a lower bound for learning $H$-horizon episodic MDPs with distinct transition cores over the horizon. \Cref{sec:main-lb-infinite} presents lower bounds for learning infinite-horizon average-reward MDPs with diameter at most $D$ and discounted-reward MDPs with discount factor $\gamma$. 

\subsection{Lower Bound for Learning Finite-Horizon Episodic MDPs}\label{sec:main-lb-finite}

To provide a regret lower bound on learning finite-horizon MDPs with MNL approximation, we consider an instance inspired by~\cite{zhou-mixture-finite-optimal} illustrated as in \cref{fig:finite}.
There are $H+2$ states $x_1,\ldots,x_{H+2}$ where $x_{H+1}$ and $x_{H+2}$ are absorbing states. We have action space $\mathcal{A}=\{-1,1\}^{d-1}$. For any action $a\in\mathcal{A}^{d-1}$, the reward function is given by $f(x_i,a)=1$ if $i=H+2$ and $f(x_i,a)=0$ if $i\neq H+2$. The transition core $\bar \theta_h$ for each step $h\in[H]$ is given by
$$\bar \theta_h = \left(\frac{\theta_h}{\alpha}, \frac{1}{\beta}\right)\quad\text{where}\quad \theta_h\in\left\{-\bar \Delta,\bar\Delta\right\}^{d-1},\ \bar \Delta= \frac{1}{d-1}\log\left(\frac{(1-\delta)(\delta+(d-1)\Delta)}{\delta(1-\delta-(d-1)\Delta)}\right),$$
with $\delta=1/H$, $\Delta=1/(4\sqrt{2HK})$, $\alpha=\sqrt{\bar\Delta/(1+(d-1)\bar\Delta)}$, and $\beta = \sqrt{1/(1+(d-1)\bar\Delta)}$. Moreover, the feature vector is given by
$\varphi(x_h, a, x_{H+2})=(0,0)$ and $\varphi(x_h,a, x_{h+1})=(-\alpha a,\beta \log(H-1))$ for $h\in[H]$. Here, we denote this MDP by $M_{\theta}$ to indicate that it is parameterized by $\theta=\{\theta_h\}_{h=1}^H$.
\begin{theorem}\label{thm:lb-finite}
Suppose that $d\geq 2$, $H\geq 3$, $K\geq \{(d-1)^2H/2,H^3(d-1)^2/32\}$. Then for any algorithm $\mathfrak{A}$, there exists an MDP  $M_\theta$ described as in~\Cref{fig:finite} such that $L_\theta\leq 3/2$ and $L_{\varphi}\leq 1+\log(H-1)$, 
$$\mathbb{E}\left[\regret(M_\theta, \mathfrak{A},K)\right]\geq \frac{(d-1)H^{3/2}\sqrt{K}}{480\sqrt{2}}$$
where the expectation is taken over the randomness generated by $M_\theta$ and $\mathfrak{A}$.
\end{theorem}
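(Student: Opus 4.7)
The plan is to establish the lower bound via a product-hypothesis testing reduction over the hypercube $\Theta = \{-\bar\Delta, \bar\Delta\}^{(d-1)H}$, in the spirit of the proof of Theorem~\ref{thm:lb-infinite} and the linear mixture MDP construction of~\cite{zhou-mixture-finite-optimal}. The nonlinear MNL transition is handled by linearizing via the mean value theorem exactly as for the infinite-horizon lower bound, and the worst-case $\theta$ is extracted by averaging the expected regret uniformly over $\Theta$. The feasibility constraints $L_\theta \leq 3/2$ and $L_\varphi \leq 1+\log(H-1)$ follow from the direct identities $\|\bar\theta_h\|_2 = 1+(d-1)\bar\Delta$ and $\|\varphi(x_h,a,x_{h+1})\|_2^2 = (d-1)\alpha^2 + \beta^2\log^2(H-1) \leq 1+\log^2(H-1)$, together with the bound $\bar\Delta \lesssim H\Delta$ that is implied by the first assumption on $K$.

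The next step is a regret decomposition. Since $x_{H+2}$ is the sole reward-generating absorbing state, the optimal action at $x_h$ maximizes $1-p_h(a)$ and hence equals $a^*_h = \mathrm{sign}(\theta_h)$, and the standard telescoping identity for episodic MDPs gives
\[
V^*_1(x_1) - V^{\pi_k}_1(x_1) = \mathbb{E}_{\pi_k}\left[\sum_{h=1}^H \mathbf{1}\{s^k_h = x_h\}\cdot (p_h(a^k_h) - p_h(a^*_h)) \cdot G_h\right],
\]
where $G_h := V^*_{h+1}(x_{H+2}) - V^*_{h+1}(x_{h+1})$. Applying the mean value theorem to $g(x) = (H-1)e^{-x}/(1+(H-1)e^{-x})$ on $x \in [-(d-1)\bar\Delta,(d-1)\bar\Delta]$ yields $p_h(a^k_h) - p_h(a^*_h) \geq c_1 (\bar\Delta/H) \cdot \mathrm{Ham}(a^k_h, a^*_h)$ for a universal constant $c_1 > 0$, since $|g'(\xi)| \asymp 1/H$ on this interval. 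Solving the Bellman recursion for $V^*_h(x_h)$ in closed form gives $G_h = \tfrac{1-\beta}{\beta}(1 - (1-\beta)^{H-h-1}) = \Theta(\min(H-h, H))$ with $\beta = 1/H + (d-1)\Delta$, so $\sum_h G_h = \Theta(H^2)$.

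I would then apply Assouad's lemma over the product hypercube $\Theta$. Flipping a single coordinate $\theta_{h,i}$ perturbs the transition only at $x_h$, where a direct computation gives a per-visit KL divergence of $\Theta(\bar\Delta^2/H)$ (since $p_h(1-p_h) \asymp 1/H$ and the probability-gap under the flip is $\Theta(\bar\Delta/H)$). By the chain rule of KL and the trivial bound $\Pr[s^k_h = x_h] \leq 1$, the full $K$-episode KL between $P_\theta$ and $P_{\theta^{(h,i)}}$ is at most $CK\bar\Delta^2/H$; substituting $\bar\Delta \asymp H\Delta = H/(4\sqrt{2HK})$ makes this an absolute constant, so Pinsker's inequality gives $\mathrm{TV} \leq 1/2$ for appropriate constants and Assouad then yields $\mathbb{E}_{\theta\sim\mathrm{Unif}(\Theta)}[\mathrm{Ham}(a^k_h, \mathrm{sign}(\theta_h))] \gtrsim (d-1)$ for every episode $k$ and step $h$. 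Since $p_h(a)\leq 1 - c/H$ uniformly over feasible $a$, the visitation probability satisfies $\Pr[s^k_h = x_h] \geq c'$ for all $h \leq H$ under any policy, so combining the three ingredients gives an average regret of order $K \cdot (\bar\Delta/H)\cdot H^2 \cdot (d-1) \asymp K H^2 (d-1)\Delta \asymp (d-1) H^{3/2}\sqrt{K}$, and selecting the worst $\theta \in \Theta$ completes the bound.

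The main obstacle I expect is the propagation of constants through the three approximations (MVT linearization of $p_h$, the closed-form estimate of $G_h$, and the KL-Pinsker-Assouad chain) to extract the explicit prefactor $1/(480\sqrt{2})$. The stronger of the two assumptions on $K$, namely $K \geq H^3(d-1)^2/32$, is presumably what guarantees that $H(d-1)\Delta$ stays small enough for the MVT approximation to be uniformly tight over $\xi \in [-(d-1)\bar\Delta, (d-1)\bar\Delta]$ and for the final TV bound from Pinsker to be strictly below $1/2$, both of which are necessary to obtain a nontrivial constant in the Assouad step; everything else is essentially algebraic bookkeeping on the closed-form expressions of $\bar\Delta$, $\beta$, $G_h$, and $\Pr[s^k_h = x_h]$.
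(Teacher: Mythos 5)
Your proposal is correct in its overall architecture and reaches the right bound up to the unresolved constant, but it executes the final information-theoretic step differently from the paper. The paper works with the explicit product formula $V_1^\pi(x_1)=\sum_h (H-h)(\delta+a_h)\prod_{j<h}(1-\delta-a_j)$ and an induction borrowed from \cite{zhou-mixture-finite-optimal} to isolate $\frac{H}{10}\sum_{h\le H/2}((d-1)\Delta-a_h)$, then uses the mean value theorem to convert each summand into the pseudo-regret of a \emph{linear bandit} instance $\mathcal{L}_{\mu_h}$ with $\mu_h=(\Delta/\bar\Delta)\theta_h$, and finally imports the $(d-1)\sqrt{K\delta}/(8\sqrt 2)$ lower bound of \citep[Lemma C.8]{zhou-mixture-finite-optimal} as a black box --- which is also where the explicit prefactor $1/(480\sqrt 2)$ comes from. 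You instead decompose via the performance-difference lemma and run Assouad directly over the full product hypercube $\{-\bar\Delta,\bar\Delta\}^{(d-1)H}$, with the per-coordinate KL computation $K\bar\Delta^2/H=O(1)$ done on the MDP trajectory distribution itself. This is essentially inlining the proof of the bandit lemma; it buys a self-contained argument and avoids the (mild) subtlety of justifying that the MDP learner at step $h$ can be simulated by a bandit algorithm, at the cost of having to track all constants through the MVT linearization, the bound $G_h\ge H/3$ for $h\le H/2$, the survival probability $\prod_{j<h}(1-\delta-a_j)\ge 1/3$, and the Pinsker step yourself. Two small corrections: the inequality you state for the visitation probability goes the wrong way --- what you need is that the \emph{staying} probability is at least $1-(\delta+(d-1)\Delta)\ge 1-2/H$, not that it is at most $1-c/H$; and the roles of the two assumptions on $K$ are swapped relative to the paper, where $K\ge H^3(d-1)^2/32$ yields $(d-1)\Delta\le\delta/H$ (hence $\bar\Delta\lesssim H\Delta$, the norm bounds, and the MVT constants), while $K\ge (d-1)^2H/2$ is exactly the hypothesis $K\ge(d-1)^2/(2\delta)$ of the bandit lemma, i.e.\ the condition ensuring the KL/TV quantities stay below the threshold in the Assouad step.
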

Recall that the lower bound provided by~\cite{li2024provably} is $\Omega(dH\sqrt{K\kappa^*})$ where $\kappa^*$ is a constant satisfying $p_t(s',\theta^*)p_t(x'',\theta^*)\geq \kappa^*$ for all $t\in[T]$ and $s',s''\in \mathcal{S}_{s_t,a_t}$. Hence, our lower bound from \Cref{thm:lb-finite} improves the previous lower bound by a factor of $O(\sqrt{H/\kappa^*})$. 

Notice that the instance $M_\theta$ has $L_{\varphi}\leq 1+ \log(H-1)$. Nonetheless, the regret upper bounds by~\cite{HwangOh2023,cho2024randomized,li2024provably} grow polynomially in $L_{\varphi}$, so the upper bounds remain the same up to logarithmic factors in $\log H$.

Let us briefly explain how the lower bound is derived. We consider a multinomial logistic function given by $f:\mathbb{R}\to \mathbb{R}$ as
\begin{equation}\label{eq:lb-mnl}
f(x) = \frac{1}{1+ (\delta^{-1}-1)\exp(-x)}.
\end{equation}
Then it follows that
$$p(x_i\mid x_h,a,\bar\theta_h)=\begin{cases}
f(a^\top \theta_h),&\text{if $i=H+2$}\\
1-f(a^\top \theta_h),&\text{if $i=h+1$}
\end{cases}$$
with $-(d-1)\bar\Delta\leq a^\top\theta_h\leq (d-1)\bar\Delta$ for any $a\in\mathcal{A}$.
One of the main steps to derive the lower bound is to construct an upper bound on the gap between $p(x_i\mid x_h,a,\bar\theta)$ and $p(x_i\mid x_h,a,\bar\theta')$ for $\bar\theta\neq\bar\theta'$. We use the mean value theorem to argue that the gap is bounded above by $c^\top (\theta-\theta')$ for some $c\in\mathbb{R}^{d-1}$. To be more precise, we can show that for any $x,y\in[-(d-1)\bar\Delta,(d-1)\bar\Delta]$ with $x\geq y$, we have
$$0\leq f(x)-f(y)\leq (\delta +(d-1)\Delta) (x-y).$$
This bridges the multinomial logistic function to a linear function. Then we may reduce our analysis to the linear case, and therefore, we may follow some arguments of \cite{zhou-mixture-finite-optimal}.

\subsection{Lower Bounds for Learning Infinite-Horizon MDPs}\label{sec:main-lb-infinite}

In this section, we prove regret lower bounds for learning communicating MDPs of diameter at most $D$ and discounted-reward MDPs with discount factor $\gamma$. Our construction of the following hard-to-learn MDP is motivated by the instance proposed by~\cite{yuewu2022} for the linear mixture MDP case. There are two states $x_0$ and $x_1$ as in \Cref{fig:infinite}.
\begin{figure*}[!ht]
\begin{center}
\begin{tikzpicture}[
    roundnode/.style={circle, draw=red!60, fill=red!5, thick, minimum size = 10mm},
    squarednode/.style={rectangle, draw=red!60, fill=red!5, thick},
    ]
    \node[roundnode] (x_0) at (0,1) {\small$x_{0}$};
    \node[roundnode] (x_1) at (4,1) {\small$x_{1}$};

    \node[roundnode] (x'_0) at (6,1) {\small$x_{0}$};
    \node[roundnode] (x'_1) at (10,1) {\small$x_{1}$};
    \path [-stealth]
        (x_0) edge [] node[above] {\small $\frac{1}{1+(\delta^{-1}-1)\exp(-a^\top\theta)}$} (x_1)
        (x'_1) edge [ ] node[above] {\small $\delta$} (x'_0)
        (x_0) edge [out=210,in=270, loop] node[left] {\small $\frac{(\delta^{-1}-1)\exp(-a^\top \theta)}{1+(\delta^{-1}-1)\exp(-a^\top\theta)}$} ()
        (x'_1) edge [out=330,in=270, loop] node[right] {\small $1-\delta$} ();
\end{tikzpicture}%
\vspace{-0.4in}
\end{center}
\caption{Illustration of the Hard-to-Learn Infinite-Horizon MDP Instance}\label{fig:infinite}
\end{figure*}
The action space is given by $\mathcal{A}=\{-1,1\}^{d-1}$. Let the reward function be given by $r(x_0,a)=0$ and $r(x_1,a)=1$ for any $a\in \mathcal{A}$. Then a higher stationary probability at state $x_1$ means a larger average reward. We set the transition core $\bar\theta$ as
$$\bar \theta = \left(\frac{\theta}{\alpha}, \frac{1}{\beta}\right)\quad\text{where}\quad \theta\in\left\{-\frac{\bar \Delta}{d-1},\frac{\bar\Delta}{d-1}\right\}^{d-1},\ \bar \Delta= \log\left(\frac{(1-\delta)(\delta+\Delta)}{\delta(1-\delta-\Delta)}\right)$$
with $\Delta=(d-1)/(45\sqrt{(2/5)(T/\delta)\log 2})$, $$\delta = \begin{cases}
1/D&\text{for the average-reward case},\\
1-\gamma & \text{for the discounted-reward case},
\end{cases}$$
$\alpha=\sqrt{\bar\Delta/((d-1)(1+\bar\Delta))}$, and $\beta = \sqrt{1/(1+\bar\Delta)}$. The feature vector is given by 
$\varphi(x_0,a,x_0)=(-\alpha a, \beta \log(\delta^{-1}-1))$, $\varphi(x_0,a,x_1)=\varphi(x_1,a,x_0)=(0,0)$, and $\varphi(x_1,a,x_1)=(0, \beta \log(\delta^{-1}-1))$.
We denote this MDP by $M_{\theta}$ to show its dependence on $\theta$.
\begin{theorem}\label{thm:lb-infinite}
Suppose that $d\geq 2$, $D\geq 101$, $T\geq 45(d-1)^2 D$. Then for any algorithm $\mathfrak{A}$, there exists an MDP  $M_\theta$ described as in~\Cref{fig:infinite} such that $L_\theta\leq 100/99$ and $L_{\varphi}\leq 1+\log(D-1)$, 
$$\mathbb{E}\left[\regret(M_\theta, \mathfrak{A},x_0,T)\right]\geq \frac{1}{2025}d\sqrt{DT}$$
where the expectation is taken over the randomness generated by $M_\theta$ and $\mathfrak{A}$.
\end{theorem}
\begin{theorem}\label{thm:lb-infinite-discounted}
Suppose that $d\geq 2$, $\gamma\geq 100/101$, $T\geq 45(d-1)^2/(1-\gamma)$. Then for any policy $\pi$, there exists an MDP  $M_\theta$ described as in~\Cref{fig:infinite} such that $L_\theta\leq 100/99$ and $L_{\varphi}\leq 1+\log(\gamma/(1-\gamma))$, 
\begin{align*}
\mathbb{E}\left[\regret(M_\theta, \mathfrak{A},x_0,T)\right]\geq \frac{\gamma}{3375(1-\gamma)^{3/2}}d\sqrt{T}-\frac{\gamma}{(1-\gamma)^2}
\end{align*}
where the expectation is taken over the randomness generated by $M_\theta$ and $\pi$.
\end{theorem}
\noindent
Note that $L_{\varphi}$ can grow logarithmically in $\delta^{-1}$, which equals $D$ for the average-reward case and $(1-\gamma)^{-1}$ for the discounted-reward setting. Nevertheless, the upper bounds by \Cref{thm:average-ub,thm:discounted-ub} grow polynomially in $L_{\varphi}$. This means that The regret upper bounds on the hard-to-learn MDP remain the same up to additional logarithmic factors in $\delta^{-1}$.

As for the finite-horizon case, our main technique is to connect the MNL function approximation model and linear mixture MDPs. With the multinomial logistic function given in~\eqref{eq:lb-mnl}, we know that $p(x_1\mid x_0, a,\bar\theta)=f(a^\top \theta)$ and $p(x_0\mid x_1,a,\bar\theta) =f(0)$. Moreover, for any $a\in\mathcal{A}$, we have $a^\top\theta\in[-\bar\Delta,\bar\Delta]$. Then we can also argue that for any $x,y\in[-\bar\Delta,\bar\Delta]$ with $x\geq y$, we have
$$0\leq f(x)-f(y)\leq (\delta +\Delta) (x-y).$$
This lays down a bridge between our instance in \Cref{fig:infinite} and the lower bound instance of \cite{yuewu2022}.

\section{Conclusion}

This paper studies infinite-horizon reinforcement learning with multinomial logistic function approximation. We develop an algorithm, \texttt{UCMNLK}, that works for both the average-reward and discounted-reward settings. We provide regret lower bounds for the two settings as well as the finite-horizon setting, which demonstrates that the algorithm achieves tight regret upper bounds. We provide a more comprehensive literature review on online learning of MDPs in the appendix.

\acks{We would like to thank Min-hwan Oh and Taehyun Hwang for helpful discussions. This research is supported by the National Research Foundation of Korea (NRF) grant (No. RS-2024-00350703).}

\bibliography{mybibfile}

\begin{thebibliography}{74}
\providecommand{\natexlab}[1]{#1}
\providecommand{\url}[1]{\texttt{#1}}
\expandafter\ifx\csname urlstyle\endcsname\relax
  \providecommand{\doi}[1]{doi: #1}\else
  \providecommand{\doi}{doi: \begingroup \urlstyle{rm}\Url}\fi

\bibitem[Abbasi-yadkori et~al.(2011)Abbasi-yadkori, P\'{a}l, and
  Szepesv\'{a}ri]{abbasi2011}
Y.~Abbasi-yadkori, D.~P\'{a}l, and C.~Szepesv\'{a}ri.
\newblock Improved algorithms for linear stochastic bandits.
\newblock In J.~Shawe-Taylor, R.~Zemel, P.~Bartlett, F.~Pereira, and
  K.~Weinberger, editors, \emph{Advances in Neural Information Processing
  Systems}, volume~24. Curran Associates, Inc., 2011.
\newblock URL
  \url{https://proceedings.neurips.cc/paper_files/paper/2011/file/e1d5be1c7f2f456670de3d53c7b54f4a-Paper.pdf}.

\bibitem[Abbasi-Yadkori et~al.(2019)Abbasi-Yadkori, Bartlett, Bhatia, Lazic,
  Szepesvari, and Weisz]{pmlr-v97-lazic19a}
Y.~Abbasi-Yadkori, P.~Bartlett, K.~Bhatia, N.~Lazic, C.~Szepesvari, and
  G.~Weisz.
\newblock {POLITEX}: Regret bounds for policy iteration using expert
  prediction.
\newblock In K.~Chaudhuri and R.~Salakhutdinov, editors, \emph{Proceedings of
  the 36th International Conference on Machine Learning}, volume~97 of
  \emph{Proceedings of Machine Learning Research}, pages 3692--3702. PMLR,
  09--15 Jun 2019.
\newblock URL \url{https://proceedings.mlr.press/v97/lazic19a.html}.

\bibitem[Agarwal et~al.(2021)Agarwal, Jiang, Kakade, and Sun]{agarwal21}
A.~Agarwal, N.~Jiang, S.~M. Kakade, and W.~Sun.
\newblock Reinforcement learning: Theory and algorithms, 2021.
\newblock URL \url{https://rltheorybook.github.io/}.

\bibitem[Agarwal et~al.(2023)Agarwal, Jin, and Zhang]{Agarwal23}
A.~Agarwal, Y.~Jin, and T.~Zhang.
\newblock Vo$q$l: Towards optimal regret in model-free rl with nonlinear
  function approximation.
\newblock In G.~Neu and L.~Rosasco, editors, \emph{Proceedings of Thirty Sixth
  Conference on Learning Theory}, volume 195 of \emph{Proceedings of Machine
  Learning Research}, pages 987--1063. PMLR, 12--15 Jul 2023.
\newblock URL \url{https://proceedings.mlr.press/v195/agarwal23a.html}.

\bibitem[Agrawal and Jia(2017)]{NIPS2017_3621f145}
S.~Agrawal and R.~Jia.
\newblock Optimistic posterior sampling for reinforcement learning: worst-case
  regret bounds.
\newblock In I.~Guyon, U.~V. Luxburg, S.~Bengio, H.~Wallach, R.~Fergus,
  S.~Vishwanathan, and R.~Garnett, editors, \emph{Advances in Neural
  Information Processing Systems}, volume~30. Curran Associates, Inc., 2017.
\newblock URL
  \url{https://proceedings.neurips.cc/paper_files/paper/2017/file/3621f1454cacf995530ea53652ddf8fb-Paper.pdf}.

\bibitem[Auer et~al.(2002)Auer, Cesa-Bianchi, Freund, and Schapire]{Auer2002}
P.~Auer, N.~Cesa-Bianchi, Y.~Freund, and R.~E. Schapire.
\newblock The nonstochastic multiarmed bandit problem.
\newblock \emph{SIAM Journal on Computing}, 32\penalty0 (1):\penalty0 48--77,
  2002.
\newblock \doi{10.1137/S0097539701398375}.
\newblock URL \url{https://doi.org/10.1137/S0097539701398375}.

\bibitem[Auer et~al.(2008)Auer, Jaksch, and Ortner]{NIPS2008_e4a6222c}
P.~Auer, T.~Jaksch, and R.~Ortner.
\newblock Near-optimal regret bounds for reinforcement learning.
\newblock In D.~Koller, D.~Schuurmans, Y.~Bengio, and L.~Bottou, editors,
  \emph{Advances in Neural Information Processing Systems}, volume~21. Curran
  Associates, Inc., 2008.
\newblock URL
  \url{https://proceedings.neurips.cc/paper_files/paper/2008/file/e4a6222cdb5b34375400904f03d8e6a5-Paper.pdf}.

\bibitem[Ayoub et~al.(2020)Ayoub, Jia, Szepesvari, Wang, and
  Yang]{pmlr-v119-ayoub20a}
A.~Ayoub, Z.~Jia, C.~Szepesvari, M.~Wang, and L.~Yang.
\newblock Model-based reinforcement learning with value-targeted regression.
\newblock In H.~D. III and A.~Singh, editors, \emph{Proceedings of the 37th
  International Conference on Machine Learning}, volume 119 of
  \emph{Proceedings of Machine Learning Research}, pages 463--474. PMLR, 13--18
  Jul 2020.
\newblock URL \url{https://proceedings.mlr.press/v119/ayoub20a.html}.

\bibitem[Bartlett and Tewari(2009)]{tewari12}
P.~L. Bartlett and A.~Tewari.
\newblock Regal: a regularization based algorithm for reinforcement learning in
  weakly communicating mdps.
\newblock In \emph{Proceedings of the Twenty-Fifth Conference on Uncertainty in
  Artificial Intelligence}, UAI '09, page 35–42, Arlington, Virginia, USA,
  2009. AUAI Press.
\newblock ISBN 9780974903958.

\bibitem[Bishop(2006)]{mlbook-bishop}
C.~M. Bishop.
\newblock \emph{Pattern Recognition and Machine Learning (Information Science
  and Statistics)}.
\newblock Springer-Verlag, Berlin, Heidelberg, 2006.
\newblock ISBN 0387310738.

\bibitem[Boone and Zhang(2024)]{boone2024achievingtractableminimaxoptimal}
V.~Boone and Z.~Zhang.
\newblock Achieving tractable minimax optimal regret in average reward mdps,
  2024.
\newblock URL \url{https://arxiv.org/abs/2406.01234}.

\bibitem[Bourel et~al.(2020)Bourel, Maillard, and Talebi]{pmlr-v119-bourel20a}
H.~Bourel, O.~Maillard, and M.~S. Talebi.
\newblock Tightening exploration in upper confidence reinforcement learning.
\newblock In H.~D. III and A.~Singh, editors, \emph{Proceedings of the 37th
  International Conference on Machine Learning}, volume 119 of
  \emph{Proceedings of Machine Learning Research}, pages 1056--1066. PMLR,
  13--18 Jul 2020.
\newblock URL \url{https://proceedings.mlr.press/v119/bourel20a.html}.

\bibitem[Cai et~al.(2020)Cai, Yang, Jin, and Wang]{pmlr-v119-cai20d}
Q.~Cai, Z.~Yang, C.~Jin, and Z.~Wang.
\newblock Provably efficient exploration in policy optimization.
\newblock In H.~D. III and A.~Singh, editors, \emph{Proceedings of the 37th
  International Conference on Machine Learning}, volume 119 of
  \emph{Proceedings of Machine Learning Research}, pages 1283--1294. PMLR,
  13--18 Jul 2020.
\newblock URL \url{https://proceedings.mlr.press/v119/cai20d.html}.

\bibitem[Caro and Gallien(2007)]{assortment}
F.~Caro and J.~Gallien.
\newblock Dynamic assortment with demand learning for seasonal consumer goods.
\newblock \emph{Management Science}, 53\penalty0 (2):\penalty0 276--292, 2007.
\newblock \doi{10.1287/mnsc.1060.0613}.
\newblock URL \url{https://doi.org/10.1287/mnsc.1060.0613}.

\bibitem[Cho et~al.(2024)Cho, Hwang, Lee, and hwan Oh]{cho2024randomized}
W.~Cho, T.~Hwang, J.~Lee, and M.~hwan Oh.
\newblock Randomized exploration for reinforcement learning with multinomial
  logistic function approximation, 2024.

\bibitem[Dann et~al.(2018)Dann, Jiang, Krishnamurthy, Agarwal, Langford, and
  Schapire]{NEURIPS2018_5f0f5e5f}
C.~Dann, N.~Jiang, A.~Krishnamurthy, A.~Agarwal, J.~Langford, and R.~E.
  Schapire.
\newblock On oracle-efficient pac rl with rich observations.
\newblock In S.~Bengio, H.~Wallach, H.~Larochelle, K.~Grauman, N.~Cesa-Bianchi,
  and R.~Garnett, editors, \emph{Advances in Neural Information Processing
  Systems}, volume~31. Curran Associates, Inc., 2018.
\newblock URL
  \url{https://proceedings.neurips.cc/paper_files/paper/2018/file/5f0f5e5f33945135b874349cfbed4fb9-Paper.pdf}.

\bibitem[Du et~al.(2021)Du, Kakade, Lee, Lovett, Mahajan, Sun, and
  Wang]{bilinear}
S.~Du, S.~Kakade, J.~Lee, S.~Lovett, G.~Mahajan, W.~Sun, and R.~Wang.
\newblock Bilinear classes: A structural framework for provable generalization
  in rl.
\newblock In M.~Meila and T.~Zhang, editors, \emph{Proceedings of the 38th
  International Conference on Machine Learning}, volume 139 of
  \emph{Proceedings of Machine Learning Research}, pages 2826--2836. PMLR,
  18--24 Jul 2021.
\newblock URL \url{https://proceedings.mlr.press/v139/du21a.html}.

\bibitem[Fan et~al.(2020)Fan, Wang, Xie, and Yang]{pmlr-v120-yang20a}
J.~Fan, Z.~Wang, Y.~Xie, and Z.~Yang.
\newblock A theoretical analysis of deep q-learning.
\newblock In A.~M. Bayen, A.~Jadbabaie, G.~Pappas, P.~A. Parrilo, B.~Recht,
  C.~Tomlin, and M.~Zeilinger, editors, \emph{Proceedings of the 2nd Conference
  on Learning for Dynamics and Control}, volume 120 of \emph{Proceedings of
  Machine Learning Research}, pages 486--489. PMLR, 10--11 Jun 2020.
\newblock URL \url{https://proceedings.mlr.press/v120/yang20a.html}.

\bibitem[Filippi et~al.(2010{\natexlab{a}})Filippi, Cappe, Garivier, and
  Szepesv\'{a}ri]{NIPS2010_c2626d85}
S.~Filippi, O.~Cappe, A.~Garivier, and C.~Szepesv\'{a}ri.
\newblock Parametric bandits: The generalized linear case.
\newblock In J.~Lafferty, C.~Williams, J.~Shawe-Taylor, R.~Zemel, and
  A.~Culotta, editors, \emph{Advances in Neural Information Processing
  Systems}, volume~23. Curran Associates, Inc., 2010{\natexlab{a}}.
\newblock URL
  \url{https://proceedings.neurips.cc/paper_files/paper/2010/file/c2626d850c80ea07e7511bbae4c76f4b-Paper.pdf}.

\bibitem[Filippi et~al.(2010{\natexlab{b}})Filippi, Cappé, and
  Garivier]{Filippi10}
S.~Filippi, O.~Cappé, and A.~Garivier.
\newblock Optimism in reinforcement learning and kullback-leibler divergence.
\newblock In \emph{2010 48th Annual Allerton Conference on Communication,
  Control, and Computing (Allerton)}, pages 115--122, 2010{\natexlab{b}}.
\newblock \doi{10.1109/ALLERTON.2010.5706896}.

\bibitem[Foster et~al.(2023)Foster, Kakade, Qian, and
  Rakhlin]{foster2023statistical}
D.~J. Foster, S.~M. Kakade, J.~Qian, and A.~Rakhlin.
\newblock The statistical complexity of interactive decision making, 2023.

\bibitem[Fruit et~al.(2018)Fruit, Pirotta, Lazaric, and
  Ortner]{pmlr-v80-fruit18a}
R.~Fruit, M.~Pirotta, A.~Lazaric, and R.~Ortner.
\newblock Efficient bias-span-constrained exploration-exploitation in
  reinforcement learning.
\newblock In J.~Dy and A.~Krause, editors, \emph{Proceedings of the 35th
  International Conference on Machine Learning}, volume~80 of \emph{Proceedings
  of Machine Learning Research}, pages 1578--1586. PMLR, 10--15 Jul 2018.
\newblock URL \url{https://proceedings.mlr.press/v80/fruit18a.html}.

\bibitem[Fruit et~al.(2020)Fruit, Pirotta, and
  Lazaric]{fruit2020improvedanalysisucrl2empirical}
R.~Fruit, M.~Pirotta, and A.~Lazaric.
\newblock Improved analysis of ucrl2 with empirical bernstein inequality, 2020.
\newblock URL \url{https://arxiv.org/abs/2007.05456}.

\bibitem[He et~al.(2021)He, Zhou, and Gu]{pmlr-v139-he21c}
J.~He, D.~Zhou, and Q.~Gu.
\newblock Logarithmic regret for reinforcement learning with linear function
  approximation.
\newblock In M.~Meila and T.~Zhang, editors, \emph{Proceedings of the 38th
  International Conference on Machine Learning}, volume 139 of
  \emph{Proceedings of Machine Learning Research}, pages 4171--4180. PMLR,
  18--24 Jul 2021.
\newblock URL \url{https://proceedings.mlr.press/v139/he21c.html}.

\bibitem[He et~al.(2023)He, Zhao, Zhou, and Gu]{He2023}
J.~He, H.~Zhao, D.~Zhou, and Q.~Gu.
\newblock Nearly minimax optimal reinforcement learning for linear {M}arkov
  decision processes.
\newblock In A.~Krause, E.~Brunskill, K.~Cho, B.~Engelhardt, S.~Sabato, and
  J.~Scarlett, editors, \emph{Proceedings of the 40th International Conference
  on Machine Learning}, volume 202 of \emph{Proceedings of Machine Learning
  Research}, pages 12790--12822. PMLR, 23--29 Jul 2023.
\newblock URL \url{https://proceedings.mlr.press/v202/he23d.html}.

\bibitem[He et~al.(2024)He, Zhong, and Yang]{he2024sampleefficient}
J.~He, H.~Zhong, and Z.~Yang.
\newblock Sample-efficient learning of infinite-horizon average-reward {MDP}s
  with general function approximation.
\newblock In \emph{The Twelfth International Conference on Learning
  Representations}, 2024.
\newblock URL \url{https://openreview.net/forum?id=fq1wNrC2ai}.

\bibitem[Hong et~al.(2024)Hong, Zhang, and
  Tewari]{hong2024provablyefficientreinforcementlearning}
K.~Hong, Y.~Zhang, and A.~Tewari.
\newblock Provably efficient reinforcement learning for infinite-horizon
  average-reward linear mdps, 2024.
\newblock URL \url{https://arxiv.org/abs/2405.15050}.

\bibitem[Hu et~al.(2022)Hu, Chen, and Huang]{Hu2022}
P.~Hu, Y.~Chen, and L.~Huang.
\newblock Nearly minimax optimal reinforcement learning with linear function
  approximation.
\newblock In K.~Chaudhuri, S.~Jegelka, L.~Song, C.~Szepesvari, G.~Niu, and
  S.~Sabato, editors, \emph{Proceedings of the 39th International Conference on
  Machine Learning}, volume 162 of \emph{Proceedings of Machine Learning
  Research}, pages 8971--9019. PMLR, 17--23 Jul 2022.
\newblock URL \url{https://proceedings.mlr.press/v162/hu22a.html}.

\bibitem[Hwang and Oh(2023)]{HwangOh2023}
T.~Hwang and M.-h. Oh.
\newblock Model-based reinforcement learning with multinomial logistic function
  approximation.
\newblock \emph{Proceedings of the AAAI Conference on Artificial Intelligence},
  37\penalty0 (7):\penalty0 7971--7979, Jun. 2023.
\newblock \doi{10.1609/aaai.v37i7.25964}.
\newblock URL \url{https://ojs.aaai.org/index.php/AAAI/article/view/25964}.

\bibitem[Jaksch et~al.(2010)Jaksch, Ortner, and Auer]{Auer_Jaksch2010}
T.~Jaksch, R.~Ortner, and P.~Auer.
\newblock Near-optimal regret bounds for reinforcement learning.
\newblock \emph{J. Mach. Learn. Res.}, 11:\penalty0 1563–1600, aug 2010.
\newblock ISSN 1532-4435.

\bibitem[Jia et~al.(2020)Jia, Yang, Szepesvari, and Wang]{pmlr-v120-jia20a}
Z.~Jia, L.~Yang, C.~Szepesvari, and M.~Wang.
\newblock Model-based reinforcement learning with value-targeted regression.
\newblock In A.~M. Bayen, A.~Jadbabaie, G.~Pappas, P.~A. Parrilo, B.~Recht,
  C.~Tomlin, and M.~Zeilinger, editors, \emph{Proceedings of the 2nd Conference
  on Learning for Dynamics and Control}, volume 120 of \emph{Proceedings of
  Machine Learning Research}, pages 666--686. PMLR, 10--11 Jun 2020.
\newblock URL \url{https://proceedings.mlr.press/v120/jia20a.html}.

\bibitem[Jiang et~al.(2017)Jiang, Krishnamurthy, Agarwal, Langford, and
  Schapire]{jiang17}
N.~Jiang, A.~Krishnamurthy, A.~Agarwal, J.~Langford, and R.~E. Schapire.
\newblock Contextual decision processes with low {B}ellman rank are
  {PAC}-learnable.
\newblock In D.~Precup and Y.~W. Teh, editors, \emph{Proceedings of the 34th
  International Conference on Machine Learning}, volume~70 of \emph{Proceedings
  of Machine Learning Research}, pages 1704--1713. PMLR, 06--11 Aug 2017.
\newblock URL \url{https://proceedings.mlr.press/v70/jiang17c.html}.

\bibitem[Jin et~al.(2020)Jin, Yang, Wang, and Jordan]{jin-linear-2020}
C.~Jin, Z.~Yang, Z.~Wang, and M.~I. Jordan.
\newblock Provably efficient reinforcement learning with linear function
  approximation.
\newblock In J.~Abernethy and S.~Agarwal, editors, \emph{Proceedings of Thirty
  Third Conference on Learning Theory}, volume 125 of \emph{Proceedings of
  Machine Learning Research}, pages 2137--2143. PMLR, 09--12 Jul 2020.
\newblock URL \url{https://proceedings.mlr.press/v125/jin20a.html}.

\bibitem[Jin et~al.(2021)Jin, Liu, and Miryoosefi]{jin2021bellman}
C.~Jin, Q.~Liu, and S.~Miryoosefi.
\newblock Bellman eluder dimension: New rich classes of {RL} problems, and
  sample-efficient algorithms.
\newblock In A.~Beygelzimer, Y.~Dauphin, P.~Liang, and J.~W. Vaughan, editors,
  \emph{Advances in Neural Information Processing Systems}, 2021.
\newblock URL \url{https://openreview.net/forum?id=b8Kl8mcK6tb}.

\bibitem[Kober et~al.(2013)Kober, Bagnell, and Peters]{robotics}
J.~Kober, J.~A. Bagnell, and J.~Peters.
\newblock Reinforcement learning in robotics: A survey.
\newblock \emph{The International Journal of Robotics Research}, 32\penalty0
  (11):\penalty0 1238--1274, 2013.
\newblock \doi{10.1177/0278364913495721}.
\newblock URL \url{https://doi.org/10.1177/0278364913495721}.

\bibitem[Kveton et~al.(2020)Kveton, Szepesv{\'{a}}ri, Ghavamzadeh, and
  Boutilier]{pmlr-v115-kveton20a}
B.~Kveton, C.~Szepesv{\'{a}}ri, M.~Ghavamzadeh, and C.~Boutilier.
\newblock Perturbed-history exploration in stochastic linear bandits.
\newblock In R.~P. Adams and V.~Gogate, editors, \emph{Proceedings of The 35th
  Uncertainty in Artificial Intelligence Conference}, volume 115 of
  \emph{Proceedings of Machine Learning Research}, pages 530--540. PMLR, 22--25
  Jul 2020.
\newblock URL \url{https://proceedings.mlr.press/v115/kveton20a.html}.

\bibitem[Lee and Oh(2024)]{lee2024nearlyminimaxoptimalregret}
J.~Lee and M.-h. Oh.
\newblock Nearly minimax optimal regret for multinomial logistic bandit, 2024.
\newblock URL \url{https://arxiv.org/abs/2405.09831}.

\bibitem[Li et~al.(2010)Li, Chu, Langford, and Schapire]{li-recommendation1}
L.~Li, W.~Chu, J.~Langford, and R.~E. Schapire.
\newblock A contextual-bandit approach to personalized news article
  recommendation.
\newblock In \emph{Proceedings of the 19th International Conference on World
  Wide Web}, WWW '10, page 661–670, New York, NY, USA, 2010. Association for
  Computing Machinery.
\newblock ISBN 9781605587998.
\newblock \doi{10.1145/1772690.1772758}.
\newblock URL \url{https://doi.org/10.1145/1772690.1772758}.

\bibitem[Li et~al.(2012)Li, Chu, Langford, Moon, and Wang]{li-recommendation2}
L.~Li, W.~Chu, J.~Langford, T.~Moon, and X.~Wang.
\newblock An unbiased offline evaluation of contextual bandit algorithms with
  generalized linear models.
\newblock In D.~Glowacka, L.~Dorard, and J.~Shawe-Taylor, editors,
  \emph{Proceedings of the Workshop on On-line Trading of Exploration and
  Exploitation 2}, volume~26 of \emph{Proceedings of Machine Learning
  Research}, pages 19--36, Bellevue, Washington, USA, 02 Jul 2012. PMLR.
\newblock URL \url{https://proceedings.mlr.press/v26/li12a.html}.

\bibitem[Li et~al.(2017)Li, Lu, and Zhou]{pmlr-v70-li17c}
L.~Li, Y.~Lu, and D.~Zhou.
\newblock Provably optimal algorithms for generalized linear contextual
  bandits.
\newblock In D.~Precup and Y.~W. Teh, editors, \emph{Proceedings of the 34th
  International Conference on Machine Learning}, volume~70 of \emph{Proceedings
  of Machine Learning Research}, pages 2071--2080. PMLR, 06--11 Aug 2017.
\newblock URL \url{https://proceedings.mlr.press/v70/li17c.html}.

\bibitem[Li et~al.(2024)Li, Zhang, Zhao, and Zhou]{li2024provably}
L.-F. Li, Y.-J. Zhang, P.~Zhao, and Z.-H. Zhou.
\newblock Provably efficient reinforcement learning with multinomial logit
  function approximation, 2024.

\bibitem[Liu et~al.(2022)Liu, Viano, and Cevher]{liu-neural}
F.~Liu, L.~Viano, and V.~Cevher.
\newblock Understanding deep neural function approximation in reinforcement
  learning via $\epsilon$-greedy exploration.
\newblock In S.~Koyejo, S.~Mohamed, A.~Agarwal, D.~Belgrave, K.~Cho, and A.~Oh,
  editors, \emph{Advances in Neural Information Processing Systems}, volume~35,
  pages 5093--5108. Curran Associates, Inc., 2022.
\newblock URL
  \url{https://proceedings.neurips.cc/paper_files/paper/2022/file/2119b5ac365c30dfac17a840c2755c30-Paper-Conference.pdf}.

\bibitem[Liu and Su(2021)]{liu2021regretboundsdiscountedmdps}
S.~Liu and H.~Su.
\newblock Regret bounds for discounted mdps, 2021.
\newblock URL \url{https://arxiv.org/abs/2002.05138}.

\bibitem[Mnih et~al.(2015)Mnih, Kavukcuoglu, Silver, Rusu, Veness, Bellemare,
  Graves, Riedmiller, Fidjeland, Ostrovski, Petersen, Beattie, Sadik,
  Antonoglou, King, Kumaran, Wierstra, Legg, and Hassabis]{atari-mnih}
V.~Mnih, K.~Kavukcuoglu, D.~Silver, A.~A. Rusu, J.~Veness, M.~G. Bellemare,
  A.~Graves, M.~Riedmiller, A.~K. Fidjeland, G.~Ostrovski, S.~Petersen,
  C.~Beattie, A.~Sadik, I.~Antonoglou, H.~King, D.~Kumaran, D.~Wierstra,
  S.~Legg, and D.~Hassabis.
\newblock Human-level control through deep reinforcement learning.
\newblock \emph{Nature}, 518\penalty0 (7540):\penalty0 529--533, 2015.
\newblock \doi{10.1038/nature14236}.
\newblock URL \url{https://doi.org/10.1038/nature14236}.

\bibitem[Modi et~al.(2020)Modi, Jiang, Tewari, and Singh]{modi-linear-mixture}
A.~Modi, N.~Jiang, A.~Tewari, and S.~Singh.
\newblock Sample complexity of reinforcement learning using linearly combined
  model ensembles.
\newblock In S.~Chiappa and R.~Calandra, editors, \emph{Proceedings of the
  Twenty Third International Conference on Artificial Intelligence and
  Statistics}, volume 108 of \emph{Proceedings of Machine Learning Research},
  pages 2010--2020. PMLR, 26--28 Aug 2020.
\newblock URL \url{https://proceedings.mlr.press/v108/modi20a.html}.

\bibitem[Oh and Iyengar(2019)]{Oh2019}
M.-h. Oh and G.~Iyengar.
\newblock Thompson sampling for multinomial logit contextual bandits.
\newblock In H.~Wallach, H.~Larochelle, A.~Beygelzimer, F.~d\textquotesingle
  Alch\'{e}-Buc, E.~Fox, and R.~Garnett, editors, \emph{Advances in Neural
  Information Processing Systems}, volume~32. Curran Associates, Inc., 2019.
\newblock URL
  \url{https://proceedings.neurips.cc/paper_files/paper/2019/file/36d7534290610d9b7e9abed244dd2f28-Paper.pdf}.

\bibitem[Ouyang et~al.(2017)Ouyang, Gagrani, Nayyar, and
  Jain]{NIPS2017_51ef186e}
Y.~Ouyang, M.~Gagrani, A.~Nayyar, and R.~Jain.
\newblock Learning unknown markov decision processes: A thompson sampling
  approach.
\newblock In I.~Guyon, U.~V. Luxburg, S.~Bengio, H.~Wallach, R.~Fergus,
  S.~Vishwanathan, and R.~Garnett, editors, \emph{Advances in Neural
  Information Processing Systems}, volume~30. Curran Associates, Inc., 2017.
\newblock URL
  \url{https://proceedings.neurips.cc/paper_files/paper/2017/file/51ef186e18dc00c2d31982567235c559-Paper.pdf}.

\bibitem[Puterman(2014)]{puterman2014markov}
M.~L. Puterman.
\newblock \emph{Markov decision processes: discrete stochastic dynamic
  programming}.
\newblock John Wiley \& Sons, 2014.

\bibitem[Russac et~al.(2020)Russac, Cappé, and Garivier]{russac2020algorithms}
Y.~Russac, O.~Cappé, and A.~Garivier.
\newblock Algorithms for non-stationary generalized linear bandits, 2020.

\bibitem[Russo and Van~Roy(2013)]{NIPS2013_41bfd20a}
D.~Russo and B.~Van~Roy.
\newblock Eluder dimension and the sample complexity of optimistic exploration.
\newblock In C.~Burges, L.~Bottou, M.~Welling, Z.~Ghahramani, and
  K.~Weinberger, editors, \emph{Advances in Neural Information Processing
  Systems}, volume~26. Curran Associates, Inc., 2013.
\newblock URL
  \url{https://proceedings.neurips.cc/paper_files/paper/2013/file/41bfd20a38bb1b0bec75acf0845530a7-Paper.pdf}.

\bibitem[Silver et~al.(2017)Silver, Schrittwieser, Simonyan, Antonoglou, Huang,
  Guez, Hubert, Baker, Lai, Bolton, Chen, Lillicrap, Hui, Sifre, van~den
  Driessche, Graepel, and Hassabis]{go-silver}
D.~Silver, J.~Schrittwieser, K.~Simonyan, I.~Antonoglou, A.~Huang, A.~Guez,
  T.~Hubert, L.~Baker, M.~Lai, A.~Bolton, Y.~Chen, T.~Lillicrap, F.~Hui,
  L.~Sifre, G.~van~den Driessche, T.~Graepel, and D.~Hassabis.
\newblock Mastering the game of go without human knowledge.
\newblock \emph{Nature}, 550\penalty0 (7676):\penalty0 354--359, 2017.
\newblock \doi{10.1038/nature24270}.
\newblock URL \url{https://doi.org/10.1038/nature24270}.

\bibitem[Sun et~al.(2019)Sun, Jiang, Krishnamurthy, Agarwal, and
  Langford]{pmlr-v99-sun19a}
W.~Sun, N.~Jiang, A.~Krishnamurthy, A.~Agarwal, and J.~Langford.
\newblock Model-based rl in contextual decision processes: Pac bounds and
  exponential improvements over model-free approaches.
\newblock In A.~Beygelzimer and D.~Hsu, editors, \emph{Proceedings of the
  Thirty-Second Conference on Learning Theory}, volume~99 of \emph{Proceedings
  of Machine Learning Research}, pages 2898--2933. PMLR, 25--28 Jun 2019.
\newblock URL \url{https://proceedings.mlr.press/v99/sun19a.html}.

\bibitem[Talebi and Maillard(2018)]{pmlr-v83-talebi18a}
M.~S. Talebi and O.-A. Maillard.
\newblock Variance-aware regret bounds for undiscounted reinforcement learning
  in mdps.
\newblock In F.~Janoos, M.~Mohri, and K.~Sridharan, editors, \emph{Proceedings
  of Algorithmic Learning Theory}, volume~83 of \emph{Proceedings of Machine
  Learning Research}, pages 770--805. PMLR, 07--09 Apr 2018.
\newblock URL \url{https://proceedings.mlr.press/v83/talebi18a.html}.

\bibitem[Wang et~al.(2020)Wang, Salakhutdinov, and Yang]{Wang-Eluder}
R.~Wang, R.~R. Salakhutdinov, and L.~Yang.
\newblock Reinforcement learning with general value function approximation:
  Provably efficient approach via bounded eluder dimension.
\newblock In H.~Larochelle, M.~Ranzato, R.~Hadsell, M.~Balcan, and H.~Lin,
  editors, \emph{Advances in Neural Information Processing Systems}, volume~33,
  pages 6123--6135. Curran Associates, Inc., 2020.
\newblock URL
  \url{https://proceedings.neurips.cc/paper_files/paper/2020/file/440924c5948e05070663f88e69e8242b-Paper.pdf}.

\bibitem[Wang et~al.(2021)Wang, Wang, Du, and Krishnamurthy]{wang2021optimism}
Y.~Wang, R.~Wang, S.~S. Du, and A.~Krishnamurthy.
\newblock Optimism in reinforcement learning with generalized linear function
  approximation.
\newblock In \emph{International Conference on Learning Representations}, 2021.
\newblock URL \url{https://openreview.net/forum?id=CBmJwzneppz}.

\bibitem[Wei et~al.(2020)Wei, Jahromi, Luo, Sharma, and Jain]{pmlr-v119-wei20c}
C.-Y. Wei, M.~J. Jahromi, H.~Luo, H.~Sharma, and R.~Jain.
\newblock Model-free reinforcement learning in infinite-horizon average-reward
  {M}arkov decision processes.
\newblock In H.~D. III and A.~Singh, editors, \emph{Proceedings of the 37th
  International Conference on Machine Learning}, volume 119 of
  \emph{Proceedings of Machine Learning Research}, pages 10170--10180. PMLR,
  13--18 Jul 2020.
\newblock URL \url{https://proceedings.mlr.press/v119/wei20c.html}.

\bibitem[Wei et~al.(2021)Wei, Jafarnia~Jahromi, Luo, and
  Jain]{pmlr-v130-wei21d}
C.-Y. Wei, M.~Jafarnia~Jahromi, H.~Luo, and R.~Jain.
\newblock Learning infinite-horizon average-reward mdps with linear function
  approximation.
\newblock In A.~Banerjee and K.~Fukumizu, editors, \emph{Proceedings of The
  24th International Conference on Artificial Intelligence and Statistics},
  volume 130 of \emph{Proceedings of Machine Learning Research}, pages
  3007--3015. PMLR, 13--15 Apr 2021.
\newblock URL \url{https://proceedings.mlr.press/v130/wei21d.html}.

\bibitem[Weisz et~al.(2021)Weisz, Amortila, and
  {Sz}epesv{\'a}ri]{pmlr-v132-weisz21a}
G.~Weisz, P.~Amortila, and C.~{Sz}epesv{\'a}ri.
\newblock Exponential lower bounds for planning in mdps with
  linearly-realizable optimal action-value functions.
\newblock In V.~Feldman, K.~Ligett, and S.~Sabato, editors, \emph{Proceedings
  of the 32nd International Conference on Algorithmic Learning Theory}, volume
  132 of \emph{Proceedings of Machine Learning Research}, pages 1237--1264.
  PMLR, 16--19 Mar 2021.
\newblock URL \url{https://proceedings.mlr.press/v132/weisz21a.html}.

\bibitem[Wu et~al.(2022)Wu, Zhou, and Gu]{yuewu2022}
Y.~Wu, D.~Zhou, and Q.~Gu.
\newblock Nearly minimax optimal regret for learning infinite-horizon
  average-reward mdps with linear function approximation.
\newblock In G.~Camps-Valls, F.~J.~R. Ruiz, and I.~Valera, editors,
  \emph{Proceedings of The 25th International Conference on Artificial
  Intelligence and Statistics}, volume 151 of \emph{Proceedings of Machine
  Learning Research}, pages 3883--3913. PMLR, 28--30 Mar 2022.
\newblock URL \url{https://proceedings.mlr.press/v151/wu22a.html}.

\bibitem[Xu and Gu(2020)]{pmlr-v119-xu20c}
P.~Xu and Q.~Gu.
\newblock A finite-time analysis of q-learning with neural network function
  approximation.
\newblock In H.~D. III and A.~Singh, editors, \emph{Proceedings of the 37th
  International Conference on Machine Learning}, volume 119 of
  \emph{Proceedings of Machine Learning Research}, pages 10555--10565. PMLR,
  13--18 Jul 2020.
\newblock URL \url{https://proceedings.mlr.press/v119/xu20c.html}.

\bibitem[Yang and Wang(2019)]{yang-wang-2019}
L.~Yang and M.~Wang.
\newblock Sample-optimal parametric q-learning using linearly additive
  features.
\newblock In K.~Chaudhuri and R.~Salakhutdinov, editors, \emph{Proceedings of
  the 36th International Conference on Machine Learning}, volume~97 of
  \emph{Proceedings of Machine Learning Research}, pages 6995--7004. PMLR,
  09--15 Jun 2019.
\newblock URL \url{https://proceedings.mlr.press/v97/yang19b.html}.

\bibitem[Yang and Wang(2020)]{yang-wang-2020}
L.~Yang and M.~Wang.
\newblock Reinforcement learning in feature space: Matrix bandit, kernels, and
  regret bound.
\newblock In H.~D. III and A.~Singh, editors, \emph{Proceedings of the 37th
  International Conference on Machine Learning}, volume 119 of
  \emph{Proceedings of Machine Learning Research}, pages 10746--10756. PMLR,
  13--18 Jul 2020.
\newblock URL \url{https://proceedings.mlr.press/v119/yang20h.html}.

\bibitem[Yang et~al.(2020)Yang, Jin, Wang, Wang, and Jordan]{yang-NTK}
Z.~Yang, C.~Jin, Z.~Wang, M.~Wang, and M.~Jordan.
\newblock On function approximation in reinforcement learning: Optimism in the
  face of large state spaces.
\newblock In H.~Larochelle, M.~Ranzato, R.~Hadsell, M.~Balcan, and H.~Lin,
  editors, \emph{Advances in Neural Information Processing Systems}, volume~33,
  pages 13903--13916. Curran Associates, Inc., 2020.
\newblock URL
  \url{https://proceedings.neurips.cc/paper_files/paper/2020/file/9fa04f87c9138de23e92582b4ce549ec-Paper.pdf}.

\bibitem[Yurtsever et~al.(2020)Yurtsever, Lambert, Carballo, and
  Takeda]{driving}
E.~Yurtsever, J.~Lambert, A.~Carballo, and K.~Takeda.
\newblock A survey of autonomous driving: Common practices and emerging
  technologies.
\newblock \emph{IEEE Access}, 8:\penalty0 58443--58469, 2020.
\newblock \doi{10.1109/ACCESS.2020.2983149}.

\bibitem[Zanette et~al.(2020{\natexlab{a}})Zanette, Brandfonbrener, Brunskill,
  Pirotta, and Lazaric]{pmlr-v108-zanette20a}
A.~Zanette, D.~Brandfonbrener, E.~Brunskill, M.~Pirotta, and A.~Lazaric.
\newblock Frequentist regret bounds for randomized least-squares value
  iteration.
\newblock In S.~Chiappa and R.~Calandra, editors, \emph{Proceedings of the
  Twenty Third International Conference on Artificial Intelligence and
  Statistics}, volume 108 of \emph{Proceedings of Machine Learning Research},
  pages 1954--1964. PMLR, 26--28 Aug 2020{\natexlab{a}}.
\newblock URL \url{https://proceedings.mlr.press/v108/zanette20a.html}.

\bibitem[Zanette et~al.(2020{\natexlab{b}})Zanette, Lazaric, Kochenderfer, and
  Brunskill]{pmlr-v119-zanette20a}
A.~Zanette, A.~Lazaric, M.~Kochenderfer, and E.~Brunskill.
\newblock Learning near optimal policies with low inherent {B}ellman error.
\newblock In H.~D. III and A.~Singh, editors, \emph{Proceedings of the 37th
  International Conference on Machine Learning}, volume 119 of
  \emph{Proceedings of Machine Learning Research}, pages 10978--10989. PMLR,
  13--18 Jul 2020{\natexlab{b}}.
\newblock URL \url{https://proceedings.mlr.press/v119/zanette20a.html}.

\bibitem[Zhang et~al.(2023)Zhang, Li, Wang, Liu, Chen, Lu, Liu, Murugesan, and
  Chaudhury]{zhang2023on}
S.~Zhang, H.~Li, M.~Wang, M.~Liu, P.-Y. Chen, S.~Lu, S.~Liu, K.~Murugesan, and
  S.~Chaudhury.
\newblock On the convergence and sample complexity analysis of deep q-networks
  with \${\textbackslash}epsilon\$-greedy exploration.
\newblock In \emph{Thirty-seventh Conference on Neural Information Processing
  Systems}, 2023.
\newblock URL \url{https://openreview.net/forum?id=HWGWeaN76q}.

\bibitem[Zhang and Sugiyama(2023)]{zhang-sugiyama23}
Y.-J. Zhang and M.~Sugiyama.
\newblock Online (multinomial) logistic bandit: Improved regret and constant
  computation cost.
\newblock In A.~Oh, T.~Naumann, A.~Globerson, K.~Saenko, M.~Hardt, and
  S.~Levine, editors, \emph{Advances in Neural Information Processing Systems},
  volume~36, pages 29741--29782. Curran Associates, Inc., 2023.
\newblock URL
  \url{https://proceedings.neurips.cc/paper_files/paper/2023/file/5ef04392708bb2340cb9b7da41225660-Paper-Conference.pdf}.

\bibitem[Zhang and Ji(2019)]{NEURIPS2019_9e984c10}
Z.~Zhang and X.~Ji.
\newblock Regret minimization for reinforcement learning by evaluating the
  optimal bias function.
\newblock In H.~Wallach, H.~Larochelle, A.~Beygelzimer, F.~d\textquotesingle
  Alch\'{e}-Buc, E.~Fox, and R.~Garnett, editors, \emph{Advances in Neural
  Information Processing Systems}, volume~32. Curran Associates, Inc., 2019.
\newblock URL
  \url{https://proceedings.neurips.cc/paper_files/paper/2019/file/9e984c108157cea74c894b5cf34efc44-Paper.pdf}.

\bibitem[Zhang and Xie(2023)]{pmlr-v195-zhang23b}
Z.~Zhang and Q.~Xie.
\newblock Sharper model-free reinforcement learning for average-reward markov
  decision processes.
\newblock In G.~Neu and L.~Rosasco, editors, \emph{Proceedings of Thirty Sixth
  Conference on Learning Theory}, volume 195 of \emph{Proceedings of Machine
  Learning Research}, pages 5476--5477. PMLR, 12--15 Jul 2023.
\newblock URL \url{https://proceedings.mlr.press/v195/zhang23b.html}.

\bibitem[Zhong et~al.(2023)Zhong, Xiong, Zheng, Wang, Wang, Yang, and
  Zhang]{zhong2023gec}
H.~Zhong, W.~Xiong, S.~Zheng, L.~Wang, Z.~Wang, Z.~Yang, and T.~Zhang.
\newblock Gec: A unified framework for interactive decision making in mdp,
  pomdp, and beyond, 2023.

\bibitem[Zhou and Gu(2022)]{NEURIPS2022_ebba182c}
D.~Zhou and Q.~Gu.
\newblock Computationally efficient horizon-free reinforcement learning for
  linear mixture mdps.
\newblock In S.~Koyejo, S.~Mohamed, A.~Agarwal, D.~Belgrave, K.~Cho, and A.~Oh,
  editors, \emph{Advances in Neural Information Processing Systems}, volume~35,
  pages 36337--36349. Curran Associates, Inc., 2022.
\newblock URL
  \url{https://proceedings.neurips.cc/paper_files/paper/2022/file/ebba182cb97864368fdb6ae00773a5e4-Paper-Conference.pdf}.

\bibitem[Zhou et~al.(2021{\natexlab{a}})Zhou, Gu, and
  Szepesvari]{zhou-mixture-finite-optimal}
D.~Zhou, Q.~Gu, and C.~Szepesvari.
\newblock Nearly minimax optimal reinforcement learning for linear mixture
  markov decision processes.
\newblock In M.~Belkin and S.~Kpotufe, editors, \emph{Proceedings of Thirty
  Fourth Conference on Learning Theory}, volume 134 of \emph{Proceedings of
  Machine Learning Research}, pages 4532--4576. PMLR, 15--19 Aug
  2021{\natexlab{a}}.
\newblock URL \url{https://proceedings.mlr.press/v134/zhou21a.html}.

\bibitem[Zhou et~al.(2021{\natexlab{b}})Zhou, He, and Gu]{pmlr-v139-zhou21a}
D.~Zhou, J.~He, and Q.~Gu.
\newblock Provably efficient reinforcement learning for discounted mdps with
  feature mapping.
\newblock In M.~Meila and T.~Zhang, editors, \emph{Proceedings of the 38th
  International Conference on Machine Learning}, volume 139 of
  \emph{Proceedings of Machine Learning Research}, pages 12793--12802. PMLR,
  18--24 Jul 2021{\natexlab{b}}.
\newblock URL \url{https://proceedings.mlr.press/v139/zhou21a.html}.

\end{thebibliography}

\newpage
\appendix

\section{Related Work}

In this section, we provide a more detailed discussion of related work for learning infinite-horizon average-reward Markov decision processes (MDPs).

\paragraph{Infinite-Horizon Average-Reward Tabular MDPs}  \cite{NIPS2008_e4a6222c} initiated the study of online learning of MDPs. They developed an algorithm, UCRL2, that guarantees a regret upper bound of $\widetilde{\mathcal{O}}(DS\sqrt{AT})$ over $T$ time steps where $D$ is the diameter of the underlying MDP, $S$ is the number of states and $A$ is the number of actions. They also provided a regret lower bound of $\Omega(\sqrt{DSAT})$, which shows that UCRL2 is nearly optimal. Then \cite{tewari12} considered the class of weakly communicating MDPs. For the class, they developed an algorithm, REGAL, that guarantees a regret upper bound of $\widetilde{\mathcal{O}}(\mathrm{sp}(v^*)S\sqrt{AT})$ where $\mathrm{sp}(v^*)$ is the span of the optimal associated bias function. After these works, there has been a flurry of activities for closing the gap between regret upper and lower bounds~\citep{Filippi10,pmlr-v83-talebi18a,pmlr-v80-fruit18a,fruit2020improvedanalysisucrl2empirical,pmlr-v119-bourel20a,NEURIPS2019_9e984c10,NIPS2017_3621f145,NIPS2017_51ef186e,pmlr-v97-lazic19a,pmlr-v130-wei21d,pmlr-v195-zhang23b,boone2024achievingtractableminimaxoptimal}. In particular, \cite{NEURIPS2019_9e984c10} developed a nearly minimax optimal algorithm with a regret upper bound of $\widetilde{\mathcal{O}}(\sqrt{\mathrm{sp}(v^*)SAT})$ based on the framework of \cite{fruit2020improvedanalysisucrl2empirical}, but their algorithm is not computationally efficient. Recently,~\cite{boone2024achievingtractableminimaxoptimal} proposed a provably efficient algorithm with a nearly minimax optimal regret upper bound of $\widetilde{\mathcal{O}}(\sqrt{\mathrm{sp}(v^*)SAT})$. 

\paragraph{Reinforcement Learning with Linear Function Approximation} 

There has been notable progress recently in reinforcement learning frameworks that leverage linear function approximations~\citep{jiang17,yang-wang-2019,yang-wang-2020,jin-linear-2020,wang2021optimism,modi-linear-mixture,NEURIPS2018_5f0f5e5f,bilinear,pmlr-v99-sun19a,pmlr-v108-zanette20a,pmlr-v119-zanette20a,pmlr-v119-cai20d,pmlr-v120-jia20a,pmlr-v119-ayoub20a,pmlr-v132-weisz21a,pmlr-v139-zhou21a,zhou-mixture-finite-optimal,pmlr-v139-he21c,NEURIPS2022_ebba182c,Hu2022,He2023,Agarwal23,hong2024provablyefficientreinforcementlearning}. Among these works, the most relevant to our paper are infinite-horizon average-reward linear MDPs and linear mixture MDPs. \cite{pmlr-v130-wei21d} proposed a couple of algorithms for learning infinite-horizon average-reward linear MDPs. First, \texttt{FOPO} is a fixed-point iteration-based algorithm that guarantees a regret upper bound of $\widetilde{\mathcal{O}}(d^{1.5}\mathrm{sp}(v^*)\sqrt{T})$ where $d$ is the dimension of the underlying feature mapping. Although the regret upper bound is currently the best-known upper bound, the algorithm is impractical. They also proposed \texttt{OLSVI.FH} that divides the time horizon into pieces and runs a finite-horizon algorithm for each piece, thereby achieving computational efficiency. However, the regret upper bound of \texttt{OLSVI.FH} is suboptimal. Lastly, they came up with another efficient algorithm \texttt{MDP-EXP2} that guarantees a regret upper bound of order $\widetilde{\mathcal{O}}(\sqrt{T})$, but it applies to only ergodic MDPs. \cite{he2024sampleefficient} studied a general function approximation framework that can be applied to the linear MDP setting. Their algorithm, \texttt{LOOP}, achieves a regret upper bound of $\widetilde{\mathcal{O}}(d^{1.5}\mathrm{sp}(v^*)^{1.5}\sqrt{T})$, but as \texttt{FOPO}, \texttt{LOOP} is not computationally tractable. Recently, \cite{hong2024provablyefficientreinforcementlearning} developed an algorithm that is provably efficient and, at the same time, achieves the best-known upper bound $\widetilde{\mathcal{O}}(d^{1.5}\mathrm{sp}(v^*)\sqrt{T})$. For infinite-horizon average-reward linear mixture MDPs, \cite{yuewu2022} designed an algorithm that is nearly minimax optimal for the class of communicating MDPs. 

\paragraph{Reinforcement Learning with General Function Approximation} 

Reinforcement learning with general function approximation is a recent framework to capture possibly non-linear structures in MDPs, thereby providing an alternative to the linear function approximation framework. \cite{jiang17} studied the Bellman rank as a way of extending the linear class to non-linear function classes. \cite{Wang-Eluder} adopted the notion of eluder dimension due to~\cite{NIPS2013_41bfd20a} for RL with general function approximation. 
\cite{jin2021bellman} proposed the concept of the Bellman eluder (BE) dimension, which combines the eluder dimension and the Bellman error. \cite{bilinear} extended the witness ranking on low-rank structures due to \cite{pmlr-v99-sun19a} and considered the bilinear class. \cite{foster2023statistical} developed a general framework based on the notion of the decision-estimation coefficient. \cite{zhong2023gec} proposed a unified framework with the generalized eluder coefficient (GEC). Recently, \cite{he2024sampleefficient} extended the notion of GEC to the infinite-horizon average-reward setting and introduced the notion of the average-reward generalized eluder coefficient (AGEC).

\paragraph{Reinforcement Learning with Multinomial Logistic Function Approximation} 

\cite{HwangOh2023} initiated the study of the multinomial logistic function approximation framework for learning MDPs. They proposed \texttt{UCRL-MNL} for the finite-horizon setting that achieves a regret upper bound of $\tilde{\mathcal{O}}(\kappa^{-1}d H^{2}\sqrt{K})$ regret where $d$ is the dimension of the transition core, $H$ is the horizon, $K$ is the number of episodes, and $\kappa\in(0,1)$ is a lowed bound of the product of the probability of transitioning to one state and the probability of transitioning to another state. Later, \cite{cho2024randomized} and~\cite{li2024provably} developed algorithms that both guarantee a regret bound of $\tilde{\mathcal{O}}(dH^{2}\sqrt{K}+\kappa^{-1}d^2H^2)$, removing the dependence on $\kappa$ from the previous result. Their algorithms use recently developed online Newton-based parameter estimation methods for logistic bandits due to~\cite{zhang-sugiyama23,lee2024nearlyminimaxoptimalregret}. For the finite-horizon setting, \cite{li2024provably} found a regret lower bound of $\Omega(dH\sqrt{\kappa^* K})$ where $\kappa^*\in(0,1)$ is a quantity similar to $\kappa$. 

\section{Proofs for Section \ref{sec:confidence}}

\subsection{Properties of Multinomial Logistic Function Approximation}\label{sec:basic}

Recall that $\mathcal{S}_t$ denotes $\mathcal{S}_{s_t,a_t}$, the set of reachable states from state $a_t$ in one step after taking action $a_t$. Moreover, the per-time loss function for time step $t\in[T]$, its gradient, and its Hessian are given by
\begin{align*}
\begin{aligned}
\ell_t(\theta) &= - \sum_{s' \in \mathcal{S}_{t}} y_{t,s'} \log p_{t,s'}( {\theta}),\quad \grad_{{\theta}} \left( \ell_{t}({\theta}) \right) 
    = - \sum_{s' \in \mathcal{S}_{t}} \left( y_{t,s'}  - p_{t,s'}( {\theta}) \right) {\varphi}_{t,s'},\\
\grad^2_\theta(\ell_t(\theta))&=\sum_{s' \in \mathcal{S}_t} p_{t,s'} ({\theta}) {\varphi}_{t, s'} {\varphi}_{t, s'}^{\top} - \sum_{s' \in \mathcal{S}_t} \sum\limits_{s'' \in \mathcal{S}_t} p_{t,s'} ({\theta}) p_{t,s''}({\theta}) {\varphi}_{t, s'} {\varphi}^{\top}_{t, s''},
\end{aligned}
\end{align*}
respectively. The following lemma is an immediate consequence of Taylor's theorem. 
\begin{lemma}\label{lemma:taylor}
For any $\theta_1,\theta_2\in \mathbb{R}^d$ and $t\in [T]$, there exists some $\alpha\in[0,1]$ such that $\vartheta:=\alpha \theta_1 + (1-\alpha)\theta_2$ satisfies
$$\ell_t(\theta_2) =\ell_t(\theta_1) +\grad_{{\theta}} \left( \ell_{t}({\theta_1})\right)^\top(\theta_2-\theta_1) + \frac{1}{2}\|\theta_2-\theta_1\|_{\grad^2_\theta(\ell_t(\vartheta))}^2.$$
\end{lemma}

\Cref{ass:recenter} implies that that for each $t\in[T]$, there exists a state $\varsigma_t$ such that $\varphi_{t,\varsigma_t}=0$. This implies that for any $s'\in \mathcal{S}_t$, 
$$p_{t,s'}(\theta)=\frac{\exp \left({\varphi}_{t, s'}^{\top} {\theta}\right) }{\sum\limits_{s'' \in \mathcal{S}_t} \exp \left({\varphi}_{t, s''}^{\top} {\theta}\right)} 
    = \frac{\exp \left({\varphi}_{t, s'}^{\top} {\theta}\right)}{1 + \sum\limits_{s'' \in \mathcal{S}_t \setminus \{ \varsigma_t \}} \exp \left({\varphi}_{t, s''}^{\top} {\theta}\right) }.$$
    \begin{lemma}
    For any $\theta\in \mathbb{R}^d$, we have 
    $$\grad^2_\theta(\ell_t(\theta))\succeq \sum_{s' \in \mathcal{S}_t \setminus \{ \varsigma_t \}} p_{t, \varsigma_t}( {\theta})p_{t,s'}({\theta})  {\varphi}_{t, s'} {\varphi}_{t, s'}^{\top} \succeq \sum_{s' \in \mathcal{S}_t \setminus \{ \varsigma_t \}} \kappa {\varphi}_{t, s'} {\varphi}_{t, s'}^{\top}.$$
    \end{lemma}
    \begin{proof}
   Note that $(x-y)(x-y)^\top = xx^\top + yy^\top - xy^\top - yx^\top\succeq 0$ where $A\succeq B$ means that $A-B$ is positive semidefinite. This implies that ${x} {x}^{\top} + {y} {y}^{\top} \succeq {x} {y}^{\top} + {y} {x}^{\top}$. This implies that
    \begin{align*}
   &\grad^2_\theta(\ell_t(\theta))\\
    &= \sum_{s' \in \mathcal{S}_t} p_{t,s'}( {\theta}) {\varphi}_{t, s'} {\varphi}_{t, s'}^{\top} -\frac{1}{2} \sum_{s' \in \mathcal{S}_t} \sum\limits_{s'' \in \mathcal{S}_t} p_{t,s'}( {\theta}) p_{t,s''}( {\theta}) \left( {\varphi}_{t, s'} {\varphi}^{\top}_{t, s''} + {\varphi}_{t, s''} {\varphi}_{t, s'}^{\top} \right) \\
    &= \sum_{s' \in \mathcal{S}_t\setminus\{\varsigma_t\}} p_{t,s'}( {\theta}) {\varphi}_{t, s'} {\varphi}_{t, s'}^{\top} -\frac{1}{2} \sum_{s' \in \mathcal{S}_t\setminus\{\varsigma_t\}} \sum\limits_{s'' \in \mathcal{S}_t\setminus\{\varsigma_t\}} p_{t,s'}( {\theta}) p_{t,s''}( {\theta}) \left( {\varphi}_{t, s'} {\varphi}^{\top}_{t, s''} + {\varphi}_{t, s''} {\varphi}_{t, s'}^{\top} \right) \\
    &\succeq \sum_{s' \in \mathcal{S}_t\setminus\{\varsigma_t\}} p_{t,s'}( {\theta}) {\varphi}_{t, s'} {\varphi}_{t, s'}^{\top} -\frac{1}{2} \sum_{s' \in \mathcal{S}_t\setminus\{\varsigma_t\}} \sum\limits_{s'' \in \mathcal{S}_t\setminus\{\varsigma_t\}} p_{t,s'}( {\theta}) p_{t,s''}( {\theta}) \left( {\varphi}_{t, s'} {\varphi}_{t, s'}^{\top} + {\varphi}_{t, s''} {\varphi}^{\top}_{t, s''} \right) \\
    &=  \sum_{s' \in \mathcal{S}_t\setminus\{\varsigma_t\}} p_{t,s'}( {\theta}) {\varphi}_{t, s'} {\varphi}_{t, s'}^{\top} - \sum_{s' \in \mathcal{S}_t\setminus\{\varsigma_t\}} \sum\limits_{s'' \in \mathcal{S}_t\setminus\{\varsigma_t\}} p_{t,s'}( {\theta}) p_{t,s''}( {\theta}) {\varphi}_{t, s'} {\varphi}_{t, s'}^{\top}
    \end{align*}
    where the first equality holds because $\varphi_{t,\varsigma_t}=0$. Then it follows that
    \begin{align}\label{eq:psd}
    \begin{aligned}
    \grad^2_\theta(\ell_t(\theta))&\succeq\sum_{s' \in \mathcal{S}_t\setminus\{\varsigma_t\}} \left\{ 1 - \sum\limits_{s'' \in \mathcal{S}_t\setminus\{\varsigma_t\}} p_{t,s''}( {\theta}) \right\} p_{t,s'}( {\theta}) {\varphi}_{t, s'} {\varphi}_{t, s'}^{\top} \\
    &= \sum_{s' \in \mathcal{S}_t \setminus \{ \varsigma_t \}} p_{t, \varsigma_t}({\theta})p_{t,s'}( {\theta})  {\varphi}_{t, s'} {\varphi}_{t, s'}^{\top} \\
    &\succeq \sum_{s' \in \mathcal{S}_t \setminus \{ \varsigma_t \}} \kappa {\varphi}_{t, s'} {\varphi}_{t, s'}^{\top}
    \end{aligned}
    \end{align}
     and the last inequality is from \Cref{ass:kappa bound}. 
\end{proof}

Next we consider
$$p(s'\mid s,a,\theta) = \frac{\exp\left({\varphi}(s,a,s')^{\top} {\theta} \right)}{\sum_{s'' \in \mathcal{S}_{s,a}} \exp \left({\varphi}(s_t,a_t,s'')^\top \theta \right)} = \frac{\exp\left({\varphi}(s,a,s')^{\top} {\theta} \right)}{1+\sum_{s'' \in \mathcal{S}_{s,a}\setminus\{\varsigma_{s,a}\}} \exp \left({\varphi}(s_t,a_t,s'')^\top \theta \right)}$$
where $\varsigma_{s,a}$ is the state where $\varphi(s,a,s')=0$. By \cite[Proposition 1,][]{cho2024randomized}, we deduce that
    \begin{equation}\label{prob-grad}
    \begin{aligned}
    &\grad_{{\theta}} ( p(s'\mid s,a,{\theta}) )\\
    &= p(s'\mid s,a,{\theta}) {\varphi}(s,a,s') - p(s'\mid s,a,{\theta}) \sum\limits_{s'' \in \mathcal{S}_{s,a} \setminus \{ \varsigma_{s,a} \}} p(s''\mid s,a,{\theta}) {\varphi}(s,a,s'')\\
    &= p(s'\mid s,a,{\theta}) {\varphi}(s,a,s') - p(s'\mid s,a,{\theta}) \sum\limits_{s'' \in \mathcal{S}_{s,a} } p(s''\mid s,a, {\theta}) {\varphi}(s,a,s'')
    \end{aligned}
    \end{equation}
    where the second equality holds because $\varphi_{i,\varsigma_i}=0$. Moreover, 
  \begin{equation}\label{prob-hessian}
    \begin{aligned}
    &\grad_{{\theta}}^2 ( p(s'\mid s,a,{\theta}) )\\
    &= p(s'\mid s,a,\theta)\varphi(s,a,s')\varphi(s,a,s')^\top\\
    &\quad - p(s'\mid s,a,\theta)\sum_{s''\in\mathcal{S}_{s,a}}p(s''\mid s,a, \theta)\left(\varphi(s,a,s')\varphi(s,a,s'')^\top+\varphi(s,a,s'')\varphi(s,a,s')^\top\right)\\
    &\quad - p(s'\mid s,a,\theta)\sum_{s''\in\mathcal{S}_{s,a}}p(s''\mid s,a, \theta)\varphi(s,a,s'')\varphi(s,a,s'')^\top\\
    &\quad +2 p(s'\mid s,a,\theta)\left(\sum\limits_{s'' \in \mathcal{S}_{s,a} } p(s''\mid s,a, {\theta}) {\varphi}(s,a,s'')\right)\left(\sum\limits_{s'' \in \mathcal{S}_{s,a} } p(s''\mid s,a, {\theta}) {\varphi}(s,a,s'')\right)^\top.
    \end{aligned}
    \end{equation}

\subsection{Proof of Lemma~\ref{lem:confidence interval}: Concentration of the Transition Core}\label{proof of lem:confidence interval}

In this section, we prove Lemma~\ref{lem:confidence interval} by adapting the proof of \citep[Theorem 3,][]{zhang-sugiyama23}, \citep[Lemma 1,][]{lee2024nearlyminimaxoptimalregret}, \citep[Lemma 3,][]{li2024provably}, and \citep[Lemma 12,][]{cho2024randomized} to our setting of infinite-horizon MDPs. 

Let us consider the second-order Taylor approximation of the per-time loss function $\ell_t$ at $\widehat \theta_t$, given by
$$\widehat \ell_t(\theta) = \ell_t(\widehat \theta_t) + \grad_\theta(\ell_t(\widehat \theta_t))^\top (\theta - \widehat \theta_t) + \frac{1}{2} \|\theta - \widehat \theta_t\|_{\grad^2_\theta(\ell_t(\widehat\theta_t))}^2.$$
Since $\widehat\Sigma_t= \eta \grad^2_\theta(\ell_t(\widehat\theta_t)) + \Sigma_t$, the update rule in~\eqref{update-core} is equivalent to
$$\widehat \theta_{t+1} = \argmin_{\theta\in\Theta}\left\{\widehat \ell_t(\theta) + \frac{1}{2\eta}\|\theta - \widehat\theta_t\|^2_{\Sigma_t}\right\}.$$
\begin{lemma}\label{lee24-lemmaF.1}{\em \citep[Lemma F.1]{lee2024nearlyminimaxoptimalregret}.}
Let $\eta =(1/2) \log\mathcal{U} + (L_\theta L_{\varphi} +1)$. Then 
\begin{equation*}
\begin{aligned}
&\|\widehat \theta_{t+1} - \theta^*\|_{\Sigma_{t+1}}^2\\
&\leq 2\eta \sum_{i=1}^t\left( \ell_i(\theta^*) - \ell_i(\widehat \theta_{i+1})\right)+ 4\lambda L_\theta^2 + 12\sqrt{2}L_\theta L_{\varphi}^3 \eta \sum_{i=1}^t\|\widehat \theta_{i+1} - \widehat \theta_i\|_2^2 - \sum_{i=1}^t \|\widehat \theta_{i+1} - \widehat \theta_i\|_{\Sigma_i}^2.
\end{aligned}
\end{equation*}
\end{lemma}
To prove Lemma~\ref{lem:confidence interval}, we bound the right-hand side of the inequality in Lemma~\ref{lee24-lemmaF.1}. Let $t\in[T]$. For a vector $z\in \mathbb{R}^{|\mathcal{S}_t\setminus\{\varsigma_t\}|}$, we denote by $[z]_{s'}$ the $s'$th coordinate of $z$ for $s'\in \mathcal{S}_t\setminus\{\varsigma_t\}$. Then we define the softmax function $\sigma_t:\mathbb{R}^{|\mathcal{S}_t\setminus\{\varsigma_t\}|}\to \mathbb{R}^{|\mathcal{S}_t\setminus\{\varsigma_t\}|}$ as follows. For $s'\in \mathcal{S}_t\setminus \{\varsigma_t\}$, 
$$[\sigma_t(z)]_{s'} = \frac{\exp([z]_{s'})}{1+\sum_{s''\in \mathcal{S}_t\setminus\{\varsigma_t\}}\exp([z]_{s''})}.$$
For simplicity, we define $$[\sigma_t(z)]_{\varsigma_t}=1- \sum_{s'\in\mathcal{S}_t\setminus \{\varsigma_t\}}[\sigma_t(z)]_{s'}=\frac{1}{1+\sum_{s'\in \mathcal{S}_t\setminus\{\varsigma_t\}}\exp([z]_{s'})},$$
although it is not part of the output of the softmax function $\sigma_t$.
We denote by $\Phi_t\in\mathbb{R}^{d\times |\mathcal{S}_t\setminus\{\varsigma_t\}|}$ the matrix whose columns are $\varphi_{t,s'}\in\mathbb{R}^d$ for $s'\in\mathcal{S}_t\setminus\{\varsigma_t\}$. 
Then $p_{t,s'}(\theta) =[\sigma_t(\Phi_t^\top \theta)]_{s'}$ for $s'\in\mathcal{S}_t\setminus\{\varsigma_t\}$.
Moreover, given $y_t=\{y_{t,s'}:s'\in\mathcal{S}_t\}$, we define 
$$\ell(z, y_t):= \sum_{s'\in \mathcal{S}_t}\mathbf{1}\{y_{t,s'}=1\}\cdot \log\left(\frac{1}{[\sigma_t(z)]_{s'}}\right)\quad \text{for }z\in \mathbb{R}^{|\mathcal{S}_t\setminus\{\varsigma_t\}|}.$$
Then the per-time loss function can be rewritten as
$\ell_t(\theta) = \ell(\Phi_t^\top \theta,y_t).$ Next, we define a pseudo-inverse function $\sigma_t^+:\mathbb{R}^{|\mathcal{S}_t\setminus\{\varsigma_t\}|}\to \mathbb{R}^{|\mathcal{S}_t\setminus\{\varsigma_t\}|}$ of $\sigma_t$ as 
$$[\sigma_t^+(q)]_{s'} = \log\left(\frac{q_{s'}}{1-\|q\|_1}\right)$$
for any $q\in\{p\in[0,1]^{|\mathcal{S}_t\setminus\{\varsigma_t\}|}: \|p\|_1<1\}$. Let $z_t$ be defined as
$$z_t = \sigma_t^{+}\left(\mathbb{E}_{\theta\sim \mathcal{N}_t}\left[\sigma_t(\Phi_t^\top \theta)\right]\right)\quad\text{where}\quad \mathcal{N}_t= \mathcal{N}(\widehat \theta_t, c \Sigma_t^{-1}).$$
Here, $\mathcal{N}_t$ is the Guassian distribution with mean $\widehat \theta_t$ and covariance matrix $c\Sigma_t^{-1}$ where coefficient $c$ is specified later. Having defined $z_t$ for $t\in[T]$, we deduce that
$$\sum_{i=1}^t\left( \ell_i(\theta^*) - \ell_i(\widehat \theta_{i+1})\right) = \underbrace{\sum_{i=1}^t\left( \ell_i(\theta^*) - \ell(z_i, y_i)\right)}_{(a)}+ \underbrace{\sum_{i=1}^t\left( \ell(z_i, y_i) - \ell_i(\widehat \theta_{i+1})\right)}_{(b)}.$$
Term $(a)$ can be bounded based on the following lemma.
\begin{lemma}\label{lee24-lemmaF.2}{\em \citep[Lemma F.2]{lee2024nearlyminimaxoptimalregret}.}
Let $\delta\in(0,1]$ and $\lambda \geq 1$. With probability at least $1-\delta$, for all $t\in[T]$, we have
\begin{align*}
\begin{aligned}
&\sum_{i=1}^t\left( \ell_i(\theta^*) - \ell(z_i, y_i)\right)\\
&\leq (3\log(1+\mathcal{U}t) +2 + L_\theta L_{\varphi})\left(\frac{17}{16}\lambda + 2\sqrt{\lambda} \log\left(\frac{2\sqrt{1+2t}}{\delta}\right)+16\left(\log\left(\frac{2\sqrt{1+2t}}{\delta}\right)\right)^2\right)+2.
\end{aligned}
\end{align*}
\end{lemma}
For term $(b)$, we consider the following lemma.
\begin{lemma}\label{lee24-lemmaF.3}{\em \citep[Lemma F.3]{lee2024nearlyminimaxoptimalregret}.}
For any $c>0$, let $\lambda \geq \max\{2 L_{\varphi}^2, 72cdL_{\varphi}^2\}$. Then, for all $t\in[T]$, we have 
\begin{align*}
\begin{aligned}
&\sum_{i=1}^t\left( \ell(z_i, y_i) - \ell_i(\widehat \theta_{i+1})\right)\leq \frac{1}{2c}\sum_{i=1}^t\|\widehat \theta_i - \widehat \theta_{i+1}\|_{\Sigma_i}^2 + \sqrt{6}cd \log\left(1+ \frac{2tL_{\varphi}^2}{d\lambda}\right).
\end{aligned}
\end{align*}
\end{lemma}
With Lemmas~\ref{lee24-lemmaF.1}--\ref{lee24-lemmaF.3}, we are ready to complete the proof of Lemma~\ref{lem:confidence interval}. It follows from Lemmas~\ref{lee24-lemmaF.1}--\ref{lee24-lemmaF.3} that for $\lambda \geq \max\{2 L_{\varphi}^2, 72cdL_{\varphi}^2\}$,
\begin{equation*}
\begin{aligned}
&\|\widehat \theta_{t+1} - \theta^*\|_{\Sigma_{t+1}}^2\\
&\leq 2\eta (3\log(1+\mathcal{U}t) +2 + L_\theta L_{\varphi})\left(\frac{17}{16}\lambda + 2\sqrt{\lambda} \log\left(\frac{2\sqrt{1+2t}}{\delta}\right)+16\left(\log\left(\frac{2\sqrt{1+2t}}{\delta}\right)\right)^2\right)\\
&\quad +4\eta + 2\eta \sqrt{6}cd \log\left(1+ \frac{2tL_{\varphi}^2}{d\lambda}\right) +4\lambda L_\theta^2 \\
&\quad + 12\sqrt{2}L_\theta L_{\varphi}^3 \eta \sum_{i=1}^t\|\widehat \theta_{i+1} - \widehat \theta_i\|_2^2 +\left(\frac{\eta}{c}-1\right)\sum_{i=1}^t \|\widehat \theta_{i+1} - \widehat \theta_i\|_{\Sigma_i}^2.
\end{aligned}
\end{equation*}
Setting $c= 7\eta/6$ and $\lambda \geq 84\sqrt{2}L_\theta L_\varphi^3\eta$, we have 
\begin{align*}&12\sqrt{2}L_\theta L_{\varphi}^3 \eta \sum_{i=1}^t\|\widehat \theta_{i+1} - \widehat \theta_i\|_2^2 +\left(\frac{\eta}{c}-1\right)\sum_{i=1}^t \|\widehat \theta_{i+1} - \widehat \theta_i\|_{\Sigma_i}^2\\
&\leq 12\sqrt{2}L_\theta L_{\varphi}^3 \eta \sum_{i=1}^t\|\widehat \theta_{i+1} - \widehat \theta_i\|_2^2 -\frac{\lambda}{7}\sum_{i=1}^t \|\widehat \theta_{i+1} - \widehat \theta_i\|_{2}^2\\
&\leq 0.
\end{align*}
Note that
$84\sqrt{2}(L_\theta L_\varphi^3 + dL_\varphi^2)\eta\geq \max\{2 L_{\varphi}^2, 72cdL_{\varphi}^2,84\sqrt{2}L_\theta L_\varphi^3\eta\}$, so we set $\lambda \geq 84\sqrt{2}(L_\theta L_\varphi^3 + dL_\varphi^2)\eta$. As we have
$\eta=(1/2)\log\mathcal{U} + (L_\theta L_\varphi +1)$, we deduce that
$$\|\widehat \theta_{t+1} - \theta^*\|_{\Sigma_{t+1}}\leq C\sqrt{d} (\log(Ut/\delta))^2$$
for some constant $C$ that depends only on $L_\theta, L_\varphi$, as required.

\subsection{Proof of Lemma~\ref{lem:confidence-polytope}: Confidence Polytope for the True Transition Function}\label{appendix:optimistic-value-function}

Since $\left|p(s'\mid s,a,\theta^*)-p(s'\mid s,a,\widehat \theta_t)\right|\leq 1$ for any $s'\in\mathcal{S}_{s,a}$, we may show that
$$\sum_{s'\in \mathcal{S}_{s,a}}\left|p(s'\mid s,a,\theta^*)-p(s'\mid s,a,\widehat \theta_t)\right|\leq B_{s,a}^{1,t}+B_{s,a}^{2,t}$$
for any $(s,a)\in \mathcal{S}\times \mathcal{A}$. By Taylor's theorem, there exists some $\alpha\in[0,1]$ such that $\vartheta=\alpha \widehat \theta_t + (1-\alpha) \theta^*$ satisfies
\begin{align*}
&\sum_{s'\in \mathcal{S}_{s,a}}\left|p(s'\mid s,a,\theta^*)-p(s'\mid s,a,\widehat \theta_t)\right|\\
&=\sum_{s'\in \mathcal{S}_{s,a}}\left|\grad_\theta(p(s'\mid s,a, \widehat \theta_t))^\top (\theta^* - \widehat \theta_t) +\frac{1}{2}\|\theta^*-\widehat \theta_t\|_{\grad_\theta^2(p(s'\mid s,a, \vartheta))}^2\right|\\
&\leq\underbrace{\sum_{s'\in \mathcal{S}_{s,a}}\left|\grad_\theta(p(s'\mid s,a, \widehat \theta_t))^\top (\theta^* - \widehat \theta_t) \right|}_{(a)} +\underbrace{\frac{1}{2}\sum_{s'\in\mathcal{S}_{s,a}}\|\theta^*-\widehat \theta_t\|_{\grad_\theta^2(p(s'\mid s,a, \vartheta))}^2}_{(b)}.
\end{align*}
Term (a) can be bounded as follows.
\begin{align*}
&\sum_{s'\in \mathcal{S}_{s,a}}\left|\grad_\theta(p(s'\mid s,a, \widehat \theta_t))^\top (\theta^* - \widehat \theta_t)\right|\\
&=\sum_{s'\in\mathcal{S}_{s,a}}\left|p(s'\mid s,a,{\widehat\theta_t})\left( {\varphi}(s,a,s') -  \sum_{s'' \in \mathcal{S}_{s,a} } p(s''\mid s,a, {\widehat\theta_t}) {\varphi}(s,a,s'')\right)^\top(\theta^*- \widehat\theta_t) \right|\\
&\leq \sum_{s'\in\mathcal{S}_{s,a}}p(s'\mid s,a,{\widehat\theta_t})\left\| {\varphi}(s,a,s') -  \sum_{s'' \in \mathcal{S}_{s,a} } p(s''\mid s,a, {\widehat\theta_t}) {\varphi}(s,a,s'')\right\|_{\Sigma_t^{-1}}\left\|\theta^*- \widehat\theta_t\right\|_{\Sigma_t} \\
&\leq \beta_t\sum_{s'\in\mathcal{S}_{s,a}}p(s'\mid s,a,{\widehat\theta_t})\left\| {\varphi}(s,a,s') -  \sum_{s'' \in \mathcal{S}_{s,a} } p(s''\mid s,a, {\widehat\theta_t}) {\varphi}(s,a,s'')\right\|_{\Sigma_t^{-1}}
\end{align*}
where the equality is due to~\eqref{prob-grad}, the first inequality is by the Cauchy-Schwarz inequality, and the second inequality holds because we assumed that $\theta^*\in \mathcal{C}_t$. Hence, term $(a)$ is bounded above by $B_{s,a}^{1,t}$. For term $(b)$, note that
\begin{equation*}
    \begin{aligned}
    &\sum_{s'\in\mathcal{S}_{s,a}}\|\theta^*-\widehat \theta_t\|_{\grad_\theta^2(p(s'\mid s,a, \vartheta))}^2\\
    &=\sum_{s'\in\mathcal{S}_{s,a}}\left|(\widehat \theta_t-\theta^*)^\top \grad_{{\theta}}^2 ( p(s'\mid s,a,{\vartheta}) )(\widehat \theta_t-\theta^*)\right|\\
    &\leq \sum_{s'\in\mathcal{S}_{s,a}}p(s'\mid s,a,\vartheta)\Bigg[\left((\widehat \theta_t-\theta^*)^\top\varphi(s,a,s')\right)^2\\
    &\quad +\sum_{s''\in\mathcal{S}_{s,a}}p(s''\mid s,a, \vartheta)\left|2\left((\widehat \theta_t-\theta^*)^\top\varphi(s,a,s')\right)\left((\widehat \theta_t-\theta^*)^\top\varphi(s,a,s'')\right) \right|\\
    &\quad + \sum_{s''\in\mathcal{S}_{s,a}}p(s''\mid s,a, \vartheta)\left((\widehat \theta_t-\theta^*)^\top\varphi(s,a,s'')\right)^2\\
    &\quad \left.+2\left((\widehat \theta_t-\theta^*)^\top \left(\sum\limits_{s'' \in \mathcal{S}_{s,a} } p(s''\mid s,a, {\vartheta}) {\varphi}(s,a,s'')\right)\right)^2\right]
    \end{aligned}
    \end{equation*}
    where the inequality follows from~\eqref{prob-hessian}.
Note that for any $s'\in\mathcal{S}_{s,a}$, the Cauchy-Schwarz inequality implies 
$$(\widehat \theta_t-\theta^*)^\top\varphi(s,a,s')\leq \|\widehat \theta_t-\theta^*\|_{\Sigma_t}\|\varphi(s,a,s')\|_{\Sigma_t^{-1}}\leq \beta_t\|\varphi(s,a,s')\|_{\Sigma_t^{-1}}$$
where the second inequality holds because $\theta^*\in\mathcal{C}_t$.
Then we deduce that
\begin{equation*}
    \begin{aligned}
    &\sum_{s'\in\mathcal{S}_{s,a}}\|\theta^*-\widehat \theta_t\|_{\grad_\theta^2(p(s'\mid s,a, \vartheta))}^2\\
    &\leq 4\beta_t^2\sum_{s'\in\mathcal{S}_{s,a}}p(s'\mid s,a,\vartheta)\|\varphi(s,a,s')\|_{\Sigma_t^{-1}}^2+2\beta_t^2\left(\sum\limits_{s'' \in \mathcal{S}_{s,a} } p(s''\mid s,a, {\vartheta})\|\varphi(s,a,s'')\|_{\Sigma_t^{-1}}\right)^2\\
    &\leq 6\beta_t^2 \max_{s'\in\mathcal{S}_{s,a}}\|\varphi(s,a,s')\|_{\Sigma_t^{-1}}^2.
    \end{aligned}
    \end{equation*}
    Therefore, term $(b)$ is bounded above by $B_{s,a}^{2,t}$, as required.

We also prove the following lemma which will be useful for our analysis.
\begin{lemma}\label{lem:confidence-polytope'}
Suppose that $\theta^*\in \mathcal{C}_t$ where $\mathcal{C}_t$ is defined as in~\eqref{eq:confidence-set}. Let $p^*\in\mathbb{R}^{S\times A\times S}$ be the vector representation of the true transition function. Then for $t\in[T]$,
\begin{equation}\label{eq:confidence-polytope'}
p^*\in \mathcal{P}_t':=\left\{p\in [0,1]^{S\times A\times S}: 
    p\text{ satisfies }\eqref{constraint1'},\eqref{constraint2'}\right\}
\end{equation}
where 
\begin{align}
&\sum_{s'\in\mathcal{S}_{s,a}} p_{s,a,s'}=1,\label{constraint1'}\\ 
&\sum_{s'\in\mathcal{S}_{s,a}}\left|p_{s,a,s'}- p(s'\mid s,a,\widehat \theta_t)\right|\leq R_{t,s,a}\label{constraint2'}
\end{align}
with $R_{t,s,a} = 2\beta_t \max\limits_{s'\in\mathcal{S}_{s,a}}\| {\varphi}(s,a,s')\|_{\Sigma_{t}^{-1}}$ for all $(s,a)\in\mathcal{S}\times\mathcal{A}$.
    \end{lemma}
    \begin{proof}
Since $\left|p(s'\mid s,a,\theta^*)-p(s'\mid s,a,\widehat \theta_t)\right|\leq 1$ for any $s'\in\mathcal{S}_{s,a}$, it is sufficient to show that
$$\sum_{s'\in \mathcal{S}_{s,a}}\left|p(s'\mid s,a,\theta^*)-p(s'\mid s,a,\widehat \theta_t)\right|\leq 2\beta_t \max_{s'\in\mathcal{S}_{s,a}}\| {\varphi}(s,a,s')\|_{\Sigma_{t}^{-1}}$$
for any $(s,a)\in \mathcal{S}\times \mathcal{A}$. By Taylor's theorem, for any $s'\in\mathcal{S}_{s,a}$, there exists some $\alpha_{s'}\in[0,1]$ such that $\vartheta_{s'}=\alpha_{s'} \widehat \theta_t + (1-\alpha_{s'}) \theta^*$ satisfies
\begin{align*}
&\sum_{s'\in \mathcal{S}_{s,a}}\left|p(s'\mid s,a,\theta^*)-p(s'\mid s,a,\widehat \theta_t)\right|\\
&=\sum_{s'\in \mathcal{S}_{s,a}}\left|\grad_\theta(p(s'\mid s,a, \vartheta_{s'}))^\top (\theta^* - \widehat \theta_t) \right|\\
&=\sum_{s'\in\mathcal{S}_{s,a}}\left|p(s'\mid s,a,{\vartheta_{s'}})\left( {\varphi}(s,a,s') -  \sum_{s'' \in \mathcal{S}_{s,a} } p(s''\mid s,a, {\vartheta_{s'}}) {\varphi}(s,a,s'')\right)^\top(\theta^*- \widehat\theta_t) \right|\\
&\leq \sum_{s'\in\mathcal{S}_{s,a}}p(s'\mid s,a,{\vartheta_{s'}})\left\|{\varphi}(s,a,s') -  \sum_{s'' \in \mathcal{S}_{s,a} } p(s''\mid s,a, {\vartheta_{s'}}) {\varphi}(s,a,s'')\right\|_{\Sigma_{t}^{-1}}\left\|\theta^*- \widehat\theta_t\right\|_{\Sigma_t}\\
&\leq \beta_t\sum_{s'\in\mathcal{S}_{s,a}}p(s'\mid s,a,{\vartheta_{s'}})\left(\|{\varphi}(s,a,s')\|_{\Sigma_{t}^{-1}} +  \sum_{s'' \in \mathcal{S}_{s,a} } p(s''\mid s,a, {\vartheta_{s'}})\| {\varphi}(s,a,s'')\|_{\Sigma_{t}^{-1}}\right)\\
&\leq  2\beta_t \max_{s'\in\mathcal{S}_{s,a}}\| {\varphi}(s,a,s')\|_{\Sigma_{t}^{-1}}.
\end{align*}
where the second equality is due to~\eqref{prob-grad}, the first inequality is by the Cauchy-Schwarz inequality, and the second inequality holds because we assumed that $\theta^*\in \mathcal{C}_t$.
\end{proof}

\section{Proof of Theorem \ref{thm:average-ub}: Performance Analysis of \texttt{UCMNLK} for the Average-Reward Setting}\label{sec:UCMNLK-proof-average}

Let $K_T$ denote the total number of distinct episodes over the horizon of $T$ time steps. For simplicity, we assume that the last time step of the last episode and that time step $T+1$ is the beginning of the $(K_T+1)$th episode, i.e., $t_{K_T+1} = T+1$. Then we have 
\begin{align}\label{eq:average-regret-decomposition}
\begin{aligned}
\regret(T)&=\sum_{k=1}^{K_T}\sum_{t=t_k}^{t_{k+1}-1}\left(J^*-r(s_t,a_t)\right)\\
&\leq \sum_{k=1}^{K_T}\sum_{t=t_k}^{t_{k+1}-1}\left(J^*-Q_k(s_t,a_t) +\gamma \max_{p\in\mathcal{P}_{t_k}}\left\{\sum_{s'\in\mathcal{S}_t} p_{s_t,a_t,s'} V_k(s')\right\}+\gamma^N\right)\\
&=\underbrace{\sum_{k=1}^{K_T}\sum_{t=t_k}^{t_{k+1}-1}\left(J^* -(1-\gamma)V_k(s_{t+1})\right)}_{(a)} + \underbrace{\sum_{k=1}^{K_T}\sum_{t=t_k}^{t_{k+1}-1}\left(V_k(s_{t+1})-Q_k(s_t,a_t)\right)}_{(b)}\\
&\quad + \underbrace{\gamma\sum_{k=1}^{K_T}\sum_{t=t_k}^{t_{k+1}-1}\left(\sum_{s'\in\mathcal{S}_t}p^*_{s_t,a_t,s'}V_k(s') - V_k(s_{t+1})\right)}_{(c)}\\
&\quad+\underbrace{\gamma\sum_{k=1}^{K_T}\sum_{t=t_k}^{t_{k+1}-1}\max_{p\in\mathcal{P}_{t_k}}\left\{\sum_{s'\in\mathcal{S}_t}\left(p_{s_t,a_t,s'}-p^*_{s_t,a_t,s'}\right)V_k(s')\right\}}_{(d)}+T\gamma^N.
\end{aligned}
\end{align}
where the inequality comes from Lemma~\ref{convergence-devi}. We provide upper bounds on terms $(a)$--$(d)$ in \Cref{sec:term-a,sec:term-b,sec:term-c,sec:term-d}. Based on them, we prove a regret upper bound on \texttt{UCMNLK} for the average-reward setting in \Cref{sec:term-final}.

\subsection{Term $(a)$}\label{sec:term-a}

Recall that $J^*$ is the optimal average reward of the MDP. It is known that any communicating MDP satisfies the following Bellman optimality condition~\cite[See][]{puterman2014markov}.
The condition states that there exist $v^*:\mathcal{S}\to\mathbb{R}$ and $q^*:\mathcal{S}\times\mathcal{A}\to\mathbb{R}$ such that for all $(s,a)\in\mathcal{S}\times \mathcal{A}$,
\begin{equation}\label{bellman-average}
J^* + q^*(s,a)= r(s,a) + \sum_{s'\in\mathcal{S}}p(s'\mid s,a)v^*(s')\quad\text{and}\quad v^*(s)=\max_{a\in\mathcal{A}}q^*(s,a).
\end{equation}
Here, $v^*:\mathcal{S}\to\mathbb{R}$ is referred to as the optimal bias function. For any function $h:\mathcal{S}\to\mathbb{R}$, we define its span as $\mathrm{sp}(h):=\max_{s\in\mathcal{S}}h(s) - \min_{s\in\mathcal{S}}h(s)$. In particular, it is known that for a communicating MDP with diameter at most $D$, we have $\mathrm{sp}(v^*)\leq D$~\citep{Auer_Jaksch2010,puterman2014markov}. Furthermore, we also have the following lemma that compares the span of the optimal discounted value function $V^*$ and that of the optimal bias function $v^*$.
\begin{lemma}\label{wei-lemma2}{\em \citep[Lemma 2]{pmlr-v119-wei20c}}
For any $\gamma\in[0,1)$, $\mathrm{sp}(V^*)\leq 2 \cdot\mathrm{sp}(v^*)$ and $|(1-\gamma)V^*(s) - J^*|\leq (1-\gamma) \mathrm{sp}(v^*)$ for all $s\in\mathcal{S}$.
\end{lemma}

We can provide an upper bound on term $(a)$ using Lemma~\ref{wei-lemma2}. Note that for any $t$,
\begin{align*}
J^*- (1-\gamma)V_k(s_{t+1})\leq J^*-(1-\gamma)V^*(s_{t+1})\leq (1-\gamma)\mathrm{sp}(v^*)\leq (1-\gamma)D
\end{align*}
where the first inequality follows from Lemma~\ref{average-optimism} and the second inequality is due to Lemma~\ref{wei-lemma2}. Therefore, it follows that
\begin{equation}\label{eq:average-a}
\text{Term $(a)$}=\sum_{k=1}^{K_T}\sum_{t=t_k}^{t_{k+1}-1}\left(J^* -(1-\gamma)V_k(s_{t+1})\right)\leq T(1-\gamma)D.
\end{equation}

\subsection{Term $(b)$}\label{sec:term-b}

Note that term $(b)$ can be rewritten as follows.
\begin{align*}
&\sum_{k=1}^{K_T}\sum_{t=t_k}^{t_{k+1}-1}\left(V_k(s_{t+1})-Q_k(s_t,a_t)\right)\\
&=\sum_{k=1}^{K_T}\sum_{t=t_k}^{t_{k+1}-2}\left(Q_k(s_{t+1},a_{t+1})-Q_k(s_t,a_t)\right)+ \sum_{k=1}^{K_T}\left(V_k(s_{t_{k+1}}) - Q_k(s_{t_{k+1}-1},a_{t_{k+1}-1})\right)\\
&=\sum_{k=1}^{K_T}\left(V_k(s_{t_{k+1}}) - Q_k(s_{t_k},a_{t_k})\right).
\end{align*}
Here, for any $k\geq 1$, we know from Lemma~\ref{average-optimism} that
$$V_k(s_{t_{k+1}}) - Q_k(s_{t_k},a_{t_k})\leq V_k(s_{t_{k+1}})\leq \frac{1}{1-\gamma}.$$
Therefore, term $(b)$ can be bounded above as
\begin{equation}\label{eq:average-b}
\text{Term $(b)$}=\sum_{k=1}^{K_T}\sum_{t=t_k}^{t_{k+1}-1}\left(V_k(s_{t+1})-Q_k(s_t,a_t)\right)\leq \frac{K_T}{1-\gamma}.
\end{equation}

\subsection{Term $(c)$}\label{sec:term-c}

For $t\in[T]$, recall that $p^*_{s_t,a_t,s'}$ denotes the true transition probability of transitioning to state $s'$ given that the state-action pair for $t$ is $(s_t,a_t)$. Take $Y_t$ as $$Y_t=\sum_{s'\in \mathcal{S}_t}p^*_{s_t,a_t,s'}V_{k}(s') -V_{k}(s_{t+1})$$ 
for $t_k\leq t\leq t_{k+1}-1$ and $k\in[K_T]$. For $t\in [T]$, let $\mathcal{F}_t$ be the $\sigma$-algebra generated by the randomness up to time step $t$. Then we have $\mathbb{E}\left[Y_t\mid \mathcal{F}_t\right]=0$, which means that $Y_1,\ldots, Y_T$ gives rise to a martingale difference sequence. Then $\text{Term $(c)$}$, which is essentially the summation of $Y_1,\ldots, Y_T$, can be bounded by Azuma's inequality given as follows.
\begin{lemma}[Azuma's inequality] \label{lem:azuma}
Let $Y_1,\ldots, Y_T$ be a martingale difference sequence with respect to a filtration $\mathcal{F}_1,\ldots, \mathcal{F}_T$. Assume that $|Y_t|\leq B$ for $t\in[T]$. Then with probability at least $1-\delta$, we have
$\sum_{t=1}^T Y_t\leq B\sqrt{2T\log\left({1}/{\delta}\right)}.$
\end{lemma}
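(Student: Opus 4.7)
The plan is to prove Azuma's inequality by the standard Chernoff / exponential Markov method, combined with Hoeffding's lemma applied to each martingale increment conditionally. First I would introduce a parameter $\lambda>0$ and apply Markov's inequality to the exponentiated partial sum, yielding
\begin{equation*}
\mathbb{P}\left(\sum_{t=1}^T Y_t \ge c\right) \le e^{-\lambda c}\,\mathbb{E}\left[\exp\left(\lambda \sum_{t=1}^T Y_t\right)\right].
\end{equation*}

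The core step is to control the moment generating function of the sum by iterated conditioning. Since $|Y_t|\le B$ and $\mathbb{E}[Y_t\mid \mathcal{F}_{t-1}]=0$, Hoeffding's lemma applied conditionally on $\mathcal{F}_{t-1}$ gives $\mathbb{E}[e^{\lambda Y_t}\mid \mathcal{F}_{t-1}] \le \exp(\lambda^2 B^2/2)$ (the range of $Y_t$ being $2B$, so Hoeffding yields $\exp(\lambda^2 (2B)^2/8)$). Then, using the tower property and writing $S_T=\sum_{t=1}^T Y_t$,
\begin{equation*}
\mathbb{E}[e^{\lambda S_T}] = \mathbb{E}\!\left[e^{\lambda S_{T-1}}\,\mathbb{E}[e^{\lambda Y_T}\mid \mathcal{F}_{T-1}]\right] \le e^{\lambda^2 B^2/2}\,\mathbb{E}[e^{\lambda S_{T-1}}],
\end{equation*}
and iterating gives $\mathbb{E}[e^{\lambda S_T}]\le \exp(T\lambda^2 B^2/2)$.

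Combining the two previous displays yields $\mathbb{P}(S_T \ge c)\le \exp(-\lambda c + T\lambda^2 B^2/2)$. I would then optimize over $\lambda>0$; the minimum is attained at $\lambda^*=c/(TB^2)$, producing the Gaussian tail $\mathbb{P}(S_T\ge c)\le \exp(-c^2/(2TB^2))$. Setting the right-hand side equal to $\delta$ and inverting for $c$ gives $c=B\sqrt{2T\log(1/\delta)}$, which is exactly the claimed bound.

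The only nontrivial ingredient here is Hoeffding's lemma itself, so if a fully self-contained argument is desired then the main obstacle is to verify it: for a mean-zero random variable $X\in[a,b]$, the MGF bound $\mathbb{E}[e^{\lambda X}]\le \exp(\lambda^2(b-a)^2/8)$ follows from convexity of $x\mapsto e^{\lambda x}$ (write $x$ as a convex combination of the endpoints) and a standard Taylor expansion of the resulting log-MGF. Everything else in the proof is routine algebra and an application of the tower property, and the adaptation to the conditional setting is immediate because the Hoeffding bound depends only on the almost-sure range of $Y_t$.
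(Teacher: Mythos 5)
Your proof is correct: the Chernoff--Hoeffding argument with conditional Hoeffding's lemma and the tower property is the standard derivation of the Azuma--Hoeffding bound, and your optimization over $\lambda$ and inversion for $c$ yield exactly the stated constant $B\sqrt{2T\log(1/\delta)}$. The paper states this lemma without proof as a classical result, so there is nothing to compare against beyond noting that your argument is the canonical one it implicitly relies on.
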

To apply Azuma's inequality, we need a global bound on $Y_t$ terms. Note that
$$|Y_t|\leq \sum_{s' \in \mathcal{S}_t} p^*_{s_t,a_t,s'}\left|V_k(s') - V_k(s_{t+1})\right|\leq  D$$
where the last inequality follows from Lemma~\ref{bounded-span}. Then it follows from Lemma \ref{lem:azuma} that with probability at least $1-\delta$, 
\begin{equation}\label{eq:average-c}\text{Term $(c)$}=\gamma\sum_{k=1}^{K_T}\sum_{t=t_k}^{t_{k+1}-1}\left(\sum_{s'\in\mathcal{S}_t}p^*_{s_t,a_t,s'}V_k(s') - V_k(s_{t+1})\right)\leq\sum_{t=1}^TY_T\leq D\sqrt{2T\log (1/\delta)}.
\end{equation}

\subsection{Term $(d)$}\label{sec:term-d}

Let $p\in \mathcal{P}_{t_k}$. 
Then we have
\begin{align*}
\sum_{s'\in\mathcal{S}_t}\left(p_{s_t,a_t,s'}-p^*_{s_t,a_t,s'}\right)V_k(s')&=\sum_{s'\in\mathcal{S}_t}\left(p_{s_t,a_t,s'}-p^*_{s_t,a_t,s'}\right)\left(V_k(s')-\min_{s''\in\mathcal{S}}V_k(s'')\right)\\
&\leq \sum_{s'\in\mathcal{S}_t}\left|p_{s_t,a_t,s'}-p^*_{s_t,a_t,s'}\right|\mathrm{sp}(V_k)\\
&\leq D\sum_{s'\in\mathcal{S}_t}\left|p_{s_t,a_t,s'}-p^*_{s_t,a_t,s'}\right|\\
&\leq D \sum_{s'\in\mathcal{S}_t}\left(\left|p_{s_t,a_t,s'}-p_t(s',\widehat \theta_{t_k})\right|+\left|p_t(s',\widehat \theta_{t_k})-p^*_{s_t,a_t,s'}\right|\right)\\
&\leq 2D \left(B_{s_t,a_t}^{1,{t_k}}+B_{s_t,a_t}^{2,{t_k}}\right)
\end{align*}
where the equality holds because $\sum_{s'\in\mathcal{S}_t}(p_{s_t,a_t,s'}-p^*_{s_t,a_t,s'})=0$, the second inequality follows from Lemma~\ref{bounded-span}, and the last inequality is implied by Lemma~\ref{lem:confidence-polytope} as $p,p^*\in\mathcal{P}_{t_k}$. Therefore, it follows that
\begin{align*}
\text{Term $(d)$}\leq 2D\underbrace{\sum_{k=1}^{K_T}\sum_{t=t_k}^{t_{k+1}-1}B_{s_t,a_t}^{1,{t_k}}}_{(\star)}+2D\underbrace{\sum_{k=1}^{K_T}\sum_{t=t_k}^{t_{k+1}-1}B_{s_t,a_t}^{2,{t_k}}}_{(\star\star)}.
\end{align*}
We may provide an upper bound on the right-most side of this inequality based on the following lemma.
\begin{lemma}\label{lem:term-star}
Suppose that $\theta^*\in\mathcal{C}_t$ for all $t\in[T]$. Let $\lambda \geq L_{\varphi}^2$. Then
$$\sum_{k=1}^{K_T}\sum_{t=t_k}^{t_{k+1}-1}B_{s_t,a_t}^{1,t_k}\leq \beta_T\left(\left(\frac{32\beta_T}{\kappa}+\frac{128\sqrt{2} L_{\varphi}\eta}{\kappa\sqrt{\lambda}} \right)d\log\left(1+\frac{T\mathcal{U}L_{\varphi}^2}{\lambda d}\right)+ 2\sqrt{dT\log\left(1+\frac{T\mathcal{U}L_{\varphi}^2}{\lambda d}\right)}\right).$$
\end{lemma}
\begin{proof}
See \Cref{sec:error-cumulative}.
\end{proof}

\begin{lemma}\label{lem:term-starstar}
Let $\lambda \geq L_{\varphi}^2$. Then
$$\sum_{k=1}^{K_T}\sum_{t=t_k}^{t_{k+1}-1}B_{s_t,a_t}^{2,t_k}\leq \frac{12d}{\kappa}\beta_T^2 \log\left(1+\frac{T\mathcal{U}L_{\varphi}^2}{\lambda d}\right).$$
\end{lemma}
\begin{proof}
See \Cref{sec:error-cumulative}.
\end{proof}

Hence, we deduce that
\begin{equation}\label{eq:average-d}
\text{Term $(d)$}\leq \left(\frac{88dD}{\kappa}\beta_T^2+ \frac{256\sqrt{2} L_{\varphi}\eta dD}{\kappa\sqrt{\lambda}} \beta_T\right)\log\left(1+\frac{T\mathcal{U}L_{\varphi}^2}{\lambda d}\right) + 4D\beta_T\sqrt{dT\log\left(1+\frac{T\mathcal{U}L_{\varphi}^2}{\lambda d}\right)}.
\end{equation}

\subsection{Completing the Regret Bound for the Average-Reward Case}\label{sec:term-final}

Combining~\eqref{eq:average-a},~\eqref{eq:average-b},~\eqref{eq:average-c}, and~\eqref{eq:average-d}, we deduce that
\begin{align*}
&\regret(T)\\
&\leq T(1-\gamma)D + \frac{K_T}{1-\gamma}+ D\sqrt{2T\log(1/\delta))}+ T\gamma^N\\
&\quad + \left(\frac{88dD}{\kappa}\beta_T^2+ \frac{256\sqrt{2} L_{\varphi}\eta dD}{\kappa\sqrt{\lambda}} \beta_T\right)\log\left(1+\frac{T\mathcal{U}L_{\varphi}^2}{\lambda d}\right) + 4D\beta_T\sqrt{dT\log\left(1+\frac{T\mathcal{U}L_{\varphi}^2}{\lambda d}\right)}
\end{align*}
Here, $K_T$ can be bounded by the following lemma.
\begin{lemma} \label{lem:Bound of the number of episodes}
$K_T\leq 1+d\log_2\left(1 + 2TL_{\varphi}^2/\lambda\right)$.
\end{lemma}
\begin{proof}
        Since $\Sigma_1 = \lambda I_d$, we have $\det ({\Sigma}_1)= \lambda^d$. Note that for any $\theta$ and $t$,
\begin{align*}\left\|\grad^2_\theta(\ell_t(\theta))\right\|_2&\leq \sum_{s' \in \mathcal{S}_t} p_{t,s'} ({\theta})\left\| {\varphi}_{t, s'} {\varphi}_{t, s'}^{\top} \right\|_2 +\sum_{s' \in \mathcal{S}_t} \sum\limits_{s'' \in \mathcal{S}_t} p_{t,s'} ({\theta}) p_{t,s''}({\theta}) \left\|{\varphi}_{t, s'} {\varphi}^{\top}_{t, s''}\right\|_2\\
&\leq \sum_{s' \in \mathcal{S}_t} p_{t,s'} ({\theta})\left\| {\varphi}_{t, s'}\right\|_2^2+\sum_{s' \in \mathcal{S}_t} \sum\limits_{s'' \in \mathcal{S}_t} p_{t,s'} ({\theta}) p_{t,s''}({\theta}) \left\|{\varphi}_{t, s'}\right\|_2 \left\|{\varphi}_{t, s''}\right\|_2\\
&\leq 2L_{\varphi}^2
\end{align*}
where the first two inequalities are by the triangle inequality and the last is due to \Cref{ass:L bound}.
Then it follows that
    \begin{align*}
    \left\| {\Sigma}_T \right\|_2 
    = \left\|\lambda I_d+\sum_{t=1}^{T-1} \grad^2_\theta(\ell_t(\widehat\theta_{t+1}))\right\|_2\leq \lambda + 2T L_{\varphi}^2.
    \end{align*}
This implies that $\det({\Sigma}_T) \le (\lambda + 2TL_{{\varphi}}^2)^d$.
Therefore, we have
    \begin{align} \label{eq:Bound of episode K}
    (\lambda + 2T L_{{\varphi}}^2)^d \ge \det({\Sigma}_T) \ge \det ({\Sigma}_{t_{K_T}}) \ge 2^{K_T-1} \det ({\Sigma}_{t_1}) = 2^{K_T-1} \lambda^d,
    \end{align}
where the second inequality holds because ${\Sigma}_T \succeq {\Sigma}_{t_{K_T}}$ and the last  holds due to $\det(\Sigma_{t_{k+1}})\geq 2\det(\Sigma_{t_k})$. Then it follows from \eqref{eq:Bound of episode K} that $K_T\leq 1+d\log_2\left(1 + 2TL_{\varphi}^2/\lambda\right)$, as required.
    \end{proof}

Setting $\gamma$ and $N$ as
$$\gamma = 1- \sqrt{\frac{d}{DT}}\quad\text{and}\quad N = \frac{1}{1-\gamma}\log\left(\frac{\sqrt{T}}{dD}\right)=\sqrt{\frac{DT}{d}}\log\left(\frac{\sqrt{T}}{dD}\right),$$ 
we have $$N\geq \frac{\log\left({\sqrt{T}}/{dD}\right)}{\log(1/\gamma)},$$
in which case we get $T \gamma^N\leq dD\sqrt{T}$. Therefore,
\begin{align*}
\regret(T) &\leq 3\sqrt{dDT}\log_2\left(1 + \frac{2TL_{\varphi}^2}{\lambda}\right)+ D\sqrt{2T\log(1/\delta))}+ T\gamma^N \\
&\quad + 8f(L_\theta,L_\varphi) \left(\frac{88dD}{\kappa}\beta_T^2+ \frac{256\sqrt{2} L_{\varphi}\eta dD}{\kappa\sqrt{\lambda}} \beta_T\right)\log\left(1+\frac{T\mathcal{U}L_{\varphi}^2}{\lambda d}\right)(\log(\mathcal{U}t/\delta))^2\\
&\quad + 32f(L_\theta,L_\varphi)D\beta_T\sqrt{dT\log\left(1+\frac{T\mathcal{U}L_{\varphi}^2}{\lambda d}\right)}(\log(\mathcal{U}t/\delta))^2\\
&=\widetilde{\mathcal{O}}\left(dD \sqrt{T} + \kappa^{-1}d^2D\right),
\end{align*}
as required.

\section{Proof of Theorem \ref{thm:discounted-ub}: Performance Analysis of \texttt{UCMNLK} for the Discounted-Reward Setting}\label{sec:UCMNLK-proof-discounted}

Recall that $K_T$ denotes the total number of distinct episodes over $T$ time steps and that $t_{K_T+1}$ is defined as $T+1$ for simplicity. Let $\pi$ denote the non-stationary policy taken by \texttt{UCMNLK}. Then by Lemma~\ref{average-optimism}, we have
\begin{equation}\label{eq:discounted-1}
\regret(\pi,T) =\sum_{k=1}^{K_T}\sum_{t=t_k}^{t_{k+1}-1} \left( V^*(s_t)- V_t^\pi(s_t)\right)\leq \underbrace{\sum_{k=1}^{K_T}\sum_{t=t_k}^{t_{k+1}-1} \left( V_k(s_t)- V_t^\pi(s_t)\right)}_{\regret'(\pi,T)}.
\end{equation}
Note that for $t_k\leq t\leq t_{k+1}-1$,
\begin{align}\label{eq:discounted-2}
\begin{aligned}
V_k(s_t)&= Q_k(s_t,a_t)\leq r(s_t,a_t)+\gamma \max_{p\in\mathcal{P}_{t_k}}\left\{\sum_{s'\in\mathcal{S}_t} p_{s_t,a_t,s'} V_k(s')\right\}+\gamma^N
\end{aligned}
\end{align}
where the inequality follows from Lemma~\ref{convergence-devi}. Moreover, by the Bellman equation, 
\begin{align}\label{eq:discounted-3}
\begin{aligned}
V_t^\pi(s_t)&= r(s_t,a_t) + \gamma \sum_{s'\in\mathcal{S}_t}p^*_{s_t,a_t,s'}V_{t+1}^\pi(s').
\end{aligned}
\end{align}
Combining~\eqref{eq:discounted-1},~\eqref{eq:discounted-2}, and~\eqref{eq:discounted-3}, we deduce that
\begin{align}\label{eq:discounted-4}
\begin{aligned}
&\regret(\pi,T) - T\gamma^N\\
&\leq \regret'(\pi,T) - T\gamma^N\\
&\leq \underbrace{\gamma \sum_{k=1}^{K_T}\sum_{t=t_k}^{t_{k+1}-1} \max_{p\in\mathcal{P}_{t_k}}\left\{\sum_{s'\in\mathcal{S}_t} \left(p_{s_t,a_t,s'}-p^*_{s_t,a_t,s'}\right) V_k(s')\right\}}_{(i)}\\
&\quad +   \underbrace{\gamma\sum_{k=1}^{K_T}\sum_{t=t_k}^{t_{k+1}-1} \left(\sum_{s'\in\mathcal{S}_t} p^*_{s_t,a_t,s'} \left(V_k(s')- V_{t+1}^\pi(s')\right)- \left(V_k(s_{t+1})-V_{t+1}^\pi(s_{t+1})\right)\right)}_{(ii)}\\
&\quad +   \underbrace{\gamma\sum_{k=1}^{K_T}\sum_{t=t_k}^{t_{k+1}-1}\left(V_k(s_{t+1})-V_{t+1}^\pi(s_{t+1})\right)}_{(iii)}.
\end{aligned}
\end{align}
Note that term $(i)$ is the same as term $(d)$ in~\eqref{eq:average-regret-decomposition}. Following the same argument in \Cref{sec:term-d} and using the fact that $V_k(s')\leq 1/(1-\gamma)$ for any $s'\in\mathcal{S}_t$ due to Lemma~\ref{average-optimism}, we deduce that
\begin{equation}\label{eq:term1}
\text{Term $(i)$}\leq \left(\frac{88d}{\kappa(1-\gamma)}\beta_T^2+ \frac{256\sqrt{2} L_{\varphi}\eta d}{\kappa\sqrt{\lambda}(1-\gamma)} \beta_T\right)\log\left(1+\frac{T\mathcal{U}L_{\varphi}^2}{\lambda d}\right) + \frac{4\beta_T}{(1-\gamma)}\sqrt{dT\log\left(1+\frac{T\mathcal{U}L_{\varphi}^2}{\lambda d}\right)}.
\end{equation}
For term $(ii)$, we observe that taking $Y_t$ for $t_k\leq t\leq t_{k+1}-1$ and $k\in[K_T]$ as
$$Y_t=\sum_{s'\in\mathcal{S}_t} p^*_{s_t,a_t,s'} \left(V_k(s')- V_{t+1}^\pi(s')\right)- \left(V_k(s_{t+1})-V_{t+1}^\pi(s_{t+1})\right)$$
gives rise to a martingale difference sequence. Moreover, we have $|Y_t|\leq 2/(1-\gamma)$. Then applying Azuma's inequality (Lemma~\ref{lem:azuma}), we deduce that
\begin{equation}\label{eq:term2}
    \text{Term $(ii)$}\leq \frac{2}{1-\gamma}\sqrt{2T\log(1/\delta)}.
    \end{equation}
For term $(iii)$, observe that
\begin{equation}\label{eq:term3}
\begin{aligned}
\text{term $(iii)$}&= \gamma\sum_{k=1}^{K_T}\sum_{t=t_k}^{t_{k+1}-1}\left(V_k(s_{t})-V_{t+1}^\pi(s_{t})\right) \\
&\quad + \gamma\sum_{k=1}^{K_T}\left(-(V_k(s_{t_k}) - V_{t_k}^\pi(s_{t_k}))+ (V_k(s_{t_{k+1}}) - V_{t_{k+1}}^\pi(s_{t_{k+1}}))\right)\\
&\leq \gamma \cdot \regret'(\pi,T) + \frac{\gamma}{1-\gamma}K_T.
\end{aligned}
\end{equation}
Combining~\eqref{eq:discounted-4},~\eqref{eq:term1},~\eqref{eq:term2}, and~\eqref{eq:term3}, it follows that
\begin{align*}
    \begin{aligned}
    &\regret(\pi,T)\\
    &\leq \frac{T\gamma^N}{1-\gamma} +\frac{\gamma}{(1-\gamma)^2}K_T+ \frac{2}{(1-\gamma)^2}\sqrt{2T\log(1/\delta)} \\
    &\quad +\left(\frac{88d}{\kappa(1-\gamma)^2}\beta_T^2+ \frac{256\sqrt{2} L_{\varphi}\eta d}{\kappa\sqrt{\lambda}(1-\gamma)^2} \beta_T\right)\log\left(1+\frac{T\mathcal{U}L_{\varphi}^2}{\lambda d}\right) + \frac{4\beta_T}{(1-\gamma)^2}\sqrt{dT\log\left(1+\frac{T\mathcal{U}L_{\varphi}^2}{\lambda d}\right)}.
    \end{aligned}
\end{align*}
Setting $N$ as
$$N\geq \frac{1}{1-\gamma}\log\left(\frac{\sqrt{T}}{d}\right),$$
we obtain
$$\regret(\pi, T)=\widetilde{\mathcal{O}}\left(\frac{1}{(1-\gamma)^2}d\sqrt{T} + \frac{1}{\kappa(1-\gamma)^2}d^2\right),$$
as required. 

\section{Proofs for Section \ref{sec:UCMNLK-analysis}}

In this section, we prove Lemmas~\ref{average-optimism},~\ref{convergence-devi}, and~\ref{bounded-span} given in \Cref{sec:UCMNLK-analysis}.

\subsection{Proof of Lemma~\ref{average-optimism}: Optimistic Value Functions for \texttt{UCMNLK}}\label{sec:lemma-average-optimism}

Let $V^{(0)},\ldots, V^{(N-1)}$ denote the sequence of value functions generated by \texttt{DEVI} for episode $k$, and let $Q^{(0)},\ldots, Q^{(N)}$ be the sequence of action-value functions generated by \texttt{DEVI} for episode $k$. Then $Q_k$ equals $Q^{(N)}$, and $V_k(s)$ is given by $\max_{a\in\mathcal{A}} Q^{(N)}(s,a)$ for  $s\in\mathcal{S}$. For simplicity, we define $V^{(N)}$ as $V_k$.

To prove that $1/(1-\gamma)\geq V_k(s)\geq V^*(s)$ and $1/(1-\gamma)\geq Q_k(s,a)\geq Q^*(s,a)$ for $(s,a)\in\mathcal{S}\times\mathcal{A}$, we will argue by induction on $n$ that for each $n\in\{0,\ldots, N\}$, $1/(1-\gamma)\geq V^{(n)}(s)\geq V^*(s)$ and $1/(1-\gamma)\geq Q^{(n)}(s,a)\geq Q^*(s,a)$ for $(s,a)\in\mathcal{S}\times\mathcal{A}$. For $n=0$, $1/(1-\gamma)=Q^{(0)}(s,a)\geq Q^*(s,a)$ for all $(s,a)\in\mathcal{S}\times \mathcal{A}$. Moreover, note that $V^{(0)}(s)=\max_{a\in\mathcal{A}}Q^{(0)}(s,a)=1/(1-\gamma)\geq V^*(s)$ for all $s\in\mathcal{S}$. 

Assume that $1/(1-\gamma)\geq V^{(n)}(s)\geq V^*(s)$ and $1/(1-\gamma)\geq Q^{(n)}(s,a)\geq Q^*(s,a)$ for $(s,a)\in\mathcal{S}\times\mathcal{A}$ for some $n\in\{0,\ldots, N-1\}$. Note that
\begin{align*}
Q^{(n+1)}(s,a)  &=r(s,a)+\gamma \max_{p\in\mathcal{P}_{t_k}}\left\{\sum_{s'\in\mathcal{S}_{s,a}}p_{s,a,s'}V^{(n)}(s')\right\}\\
&\geq r(s,a)+\gamma \max_{p\in\mathcal{P}_{t_k}}\left\{\sum_{s'\in\mathcal{S}_{s,a}}p_{s,a,s'}V^*(s')\right\}\\
&\geq r(s,a)+\gamma \sum_{s'\in\mathcal{S}_{s,a}}p(s'\mid s,a,\theta^*)V^*(s')\\
&= Q^*(s,a)
\end{align*}
where the first inequality follows from the induction hypothesis that $V^{(n)}(s)\geq V^*(s)$, the second inequality is by Lemma~\ref{lem:confidence-polytope}, and the last equality is due to the Bellman optimality equation. Moreover, 
$$Q^{(n+1)}(s,a)  =r(s,a)+\gamma \max_{p\in\mathcal{P}_{t_k}}\left\{\sum_{s'\in\mathcal{S}_{s,a}}p_{s,a,s'}V^{(n)}(s')\right\}\leq r(s,a) + \frac{\gamma}{1-\gamma} \leq \frac{1}{1-\gamma}$$
holds because the induction hypothesis implies that $V^{(n)}(s')\leq 1/(1-\gamma)$ for $s'\in \mathcal{S}_{s,a}$ and $\sum_{s'\in\mathcal{S}_{s,a}}p_{s,a,s'}=1$ for any $p\in \mathcal{P}_{t_k}$. 

Next, we argue that $1/(1-\gamma)\geq V^{(n+1)}(s)\geq V^*(s)$ for $s\in \mathcal{S}$. Since $V^{(n+1)}(s) = \max_{a\in\mathcal{A}} Q^{(n+1)}(s,a)$ and $Q^{(n+1)}(s,a)\leq 1/(1-\gamma)$ for any $(s,a)\in\mathcal{S}\times \mathcal{A}$, it follows that
$V^{(n+1)}(s)\leq 1/(1-\gamma)$ for any $s\in \mathcal{S}$. Furthermore, we have
$$V^{(n+1)}(s)  = \max_{a\in\mathcal{A}}Q^{(n+1)}(s,a) \geq \max_{a\in\mathcal{A}}Q^*(s,a)=V^*(s).$$
By the induction argument, we have just proved that
$$\frac{1}{1-\gamma}\geq V^{(n)}(s)\geq V^*(s),\quad \frac{1}{1-\gamma}\geq Q^{(n)}(s,a)\geq Q^*(s,a)$$
for any $(s,a)\in\mathcal{S}\times\mathcal{A}$ and $n\in\{0,\ldots, N\}$, as required.

\subsection{Proof of Lemma~\ref{convergence-devi}: Convergence of Discounted Extended Value Iteration}\label{sec:appendix-devi}

We will first show the following lemma.
\begin{lemma}\label{lem:convergence-devi-1}
Let $N$ be the number of rounds for discounted extended value iteration (\texttt{DEVI}). Then
$Q^{(N-1)}(s,a)-Q^{(N)}(s,a)\leq \gamma^{N-1}$ for any $(s,a)\in \mathcal{S}\times\mathcal{A}$.
\end{lemma}
\begin{proof}
Note that for $n\geq 2$, we have
\begin{align*}
Q^{(n)}(s,a)  &=r(s,a)+\gamma \max_{p\in\mathcal{P}}\left\{\sum_{s'\in\mathcal{S}_{s,a}}p_{s,a,s'}V^{(n-1)}(s')\right\},\\
Q^{(n-1)}(s,a)  &=r(s,a)+\gamma \max_{p\in\mathcal{P}}\left\{\sum_{s'\in\mathcal{S}_{s,a}}p_{s,a,s'}V^{(n-2)}(s')\right\}.
\end{align*}
This implies that for any $(s,a)\in\mathcal{S}\times\mathcal{A}$, there exists some $\tilde p\in \mathcal{P}$ such that
\begin{align*}
&Q^{(n-1)}(s,a)-Q^{(n)}(s,a)\\
&=\gamma \left(\max_{p\in\mathcal{P}}\left\{\sum_{s'\in\mathcal{S}_{s,a}}p_{s,a,s'}V^{(n-2)}(s')\right\}-\max_{p\in\mathcal{P}}\left\{\sum_{s'\in\mathcal{S}_{s,a}}p_{s,a,s'}V^{(n-1)}(s')\right\}\right)\\
&\leq \gamma \max_{p\in\mathcal{P}}\sum_{s'\in\mathcal{S}_{s,a}}p_{s,a,s'}\left(V^{(n-2)}(s')-V^{(n-1)}(s')\right)\\
&= \gamma \sum_{s'\in\mathcal{S}_{s,a}}\tilde p_{s,a,s'}\left(V^{(n-2)}(s')-V^{(n-1)}(s')\right)
\end{align*}
where the inequality holds because $\max_p\{f(p)+g(p)\}\leq \max_p\{f(p)\} + \max_p\{g(p)\}$.
The right-most side can be further bounded as follows.
\begin{align}\label{eq:convergence-devi-0}
\begin{aligned}
&\gamma\sum_{s'\in\mathcal{S}_{s,a}}\tilde p_{s,a,s'}\left(V^{(n-2)}(s')-V^{(n-1)}(s')\right)\\
&\leq \gamma \max_{s'\in\mathcal{S}}\left(V^{(n-2)}(s')-V^{(n-1)}(s')\right)\\
&=  \gamma\max_{s'\in\mathcal{S}}\left(\max_{a'\in\mathcal{A}}Q^{(n-2)}(s',a')-\max_{a'\in\mathcal{A}}Q^{(n-1)}(s',a')\right)\\
&\leq  \gamma\max_{(s',a')\in\mathcal{S}\times\mathcal{A}}\left(Q^{(n-2)}(s',a')-Q^{(n-1)}(s',a')\right)
\end{aligned}
\end{align}
where the first inequality holds because the left-most side is a convex combination of $V^{(n-2)}(s')-V^{(n-1)}(s')$ for $s'\in\mathcal{S}_{s,a}$ and the second inequality is due to $\max_{a'}\{f(a')+g(a')\}\leq \max_{a'}\{f(a')\} + \max_{a'}\{g(a')\}$ as before. 
Therefore, it follows that for any $n\geq 2$,
$$\max_{(s,a)\in\mathcal{S}\times\mathcal{A}}\left(Q^{(n-1)}(s,a)-Q^{(n)}(s,a)\right)\leq \gamma \max_{(s,a)\in\mathcal{S}\times\mathcal{A}}\left(Q^{(n-2)}(s,a)-Q^{(n-1)}(s,a)\right).$$
In particular, this implies that
\begin{align*}
\begin{aligned}
\max_{(s,a)\in\mathcal{S}\times\mathcal{A}}\left(Q^{(N-1)}(s,a)-Q^{(N)}(s,a)\right)&\leq \gamma^{N-1}\max_{(s,a)\in\mathcal{S}\times\mathcal{A}}\left(Q^{(0)}(s,a)-Q^{(1)}(s,a)\right)\\
&= \gamma^{N-1}\max_{(s,a)\in\mathcal{S}\times\mathcal{A}}\left(\frac{1}{1-\gamma}- r(s,a) -\frac{\gamma}{1-\gamma}\right)\\
&\leq \gamma^{N-1}
\end{aligned}
\end{align*}
where the last inequality holds because $r(s,a)\leq 1$.
\end{proof}

Based on Lemma~\ref{lem:convergence-devi-1}, we complete the proof of Lemma~\ref{convergence-devi}. Note that
\begin{align*}
&Q^{(N)}(s_t,a_t)\\
&= r(s_t,a_t) + \gamma \max_{p\in\mathcal{P}_{t_k}}\left\{\sum_{s'\in\mathcal{S}_{t}}p_{s_t,a_t,s'}V^{(N-1)}(s')\right\}\\
&= r(s_t,a_t)+ \gamma \max_{p\in\mathcal{P}_{t_k}}\left\{\sum_{s'\in\mathcal{S}_{t}}p_{s_t,a_t,s'}V^{(N)}(s')\right\} \\
&\quad + \gamma \max_{p\in\mathcal{P}_{t_k}}\left\{\sum_{s'\in\mathcal{S}_{t}}p_{s_t,a_t,s'}\left(V^{(N-1)}(s')-V^{(N)}(s')\right)\right\}\\
&\leq r(s_t,a_t)+ \gamma \max_{p\in\mathcal{P}_{t_k}}\left\{\sum_{s'\in\mathcal{S}_{t}}p_{s_t,a_t,s'}V^{(N)}(s')\right\} + \gamma \max_{(s,a)\in\mathcal{S}\times\mathcal{A}}\left(Q^{(N-1)}(s,a)-Q^{(N)}(s,a)\right)\\
&\leq r(s_t,a_t)+ \gamma \max_{p\in\mathcal{P}_{t_k}}\left\{\sum_{s'\in\mathcal{S}_{t}}p_{s_t,a_t,s'}V^{(N)}(s')\right\} + \gamma^N
\end{align*}
where $V^{(N)}$ is given by $V_k$, the first inequality can be established following the same argument as in~\eqref{eq:convergence-devi-0}, the second inequality is implied by Lemma~\ref{lem:convergence-devi-1}. Since $Q^{(N)}$ equals $Q_k$ and $V^{(N)}$ equals $V_k$, we have
$$Q_k(s_t,a_t)\leq r(s_t,a_t)+ \gamma \max_{p\in\mathcal{P}_{t_k}}\left\{\sum_{s'\in\mathcal{S}_{t}}p_{s_t,a_t,s'}V_k(s')\right\} + \gamma^N,$$
as required.

\subsection{Proof of Lemma~\ref{bounded-span}: Bound on the Span of the Optimistic Value Function}\label{sec:bounded-span}

First, we prove the following lemma.
\begin{lemma}\label{lem:value-monotone}
For any $n\geq 1$ and $s\in\mathcal{S}$,
$V^{(n)}(s)\leq V^{(n-1)}(s).$
\end{lemma}
\begin{proof}
We argue by induction on $n$. Since $V^{(0)}(s)=1/(1-\gamma)$ and $V^{(1)}(s)\leq 1/(1-\gamma)$ by Lemma~\ref{average-optimism}, it is clear that $V^{(1)}(s)\leq V^{(0)}(s)$. We assume that for some $n\geq 1$, $V^{(n)}(s)\leq V^{(n-1)}(s)$ for any $s\in\mathcal{S}$. Then it follows from the induction hypothesis that
\begin{align*}
Q^{(n+1)}(s,a) &= r(s,a) + \gamma \max_{p\in \mathcal{P}_{t_k}}\left\{\sum_{s'\in\mathcal{S}_{s,a}}p_{s,a,s'}V^{(n)}(s')\right\}\\
&\leq r(s,a) + \gamma \max_{p\in \mathcal{P}_{t_k}}\left\{\sum_{s'\in\mathcal{S}_{s,a}}p_{s,a,s'}V^{(n-1)}(s')\right\}\\
&= Q^{(n)}(s,a)
\end{align*}
for any $(s,a)\in\mathcal{S}\times \mathcal{A}$. This implies that for any $s\in\mathcal{S}$,
$$V^{(n+1)}(s)=\max_{a\in\mathcal{A}} Q^{(n+1)}(s,a)\leq \max_{a\in\mathcal{A}} Q^{(n)}(s,a)= V^{(n)}(s),$$
as required.
\end{proof}

Using Lemma~\ref{lem:value-monotone}, we complete the proof of Lemma~\ref{bounded-span}. Let $\tau(\pi)$ denote the number of steps after which state $s$ is reached from state $s'$ for the first time under some policy $\pi$. As the diameter of the MDP is at most $D$, there exists a policy $\widetilde\pi$ such that $\mathbb{E}\left[\tau(\widetilde\pi)\right]\leq D$. For discounted extended value iteration, we may think of the following non-stationary policy. First, starting from the initial state $s'$, we run the policy $\widetilde \pi$ under the true transition $p^*$ until we reach state $s$. If we reach state $s$ within $n$ steps, i.e., $n\geq \tau(\widetilde \pi)$, then we take the non-stationary policy and the non-stationary transition function that give rise to $V^{(n-\tau(\widetilde \pi))}(s)$.
Let $V(s')$ denote the total expected discounted reward under this procedure. Note that $$V(s')\geq  \gamma^n\cdot \frac{1}{1-\gamma}$$
since $V^{(0)}(s'')=1/(1-\gamma)$ for any $s''\in\mathcal{S}$. Then it follows that
$$V(s')\geq \mathbb{P}\left[n<\tau(\widetilde \pi)\right]\cdot \frac{\gamma^n}{1-\gamma} + \mathbb{P}\left[n\geq\tau(\widetilde \pi)\right]\cdot \mathbb{E}\left[\gamma^{\tau(\widetilde \pi)}V^{(n-\tau(\widetilde \pi))}(s)\mid n\geq \tau(\widetilde \pi)\right].$$
Here, as $V^{(n)}(s)\leq 1/(1-\gamma)$ by Lemma~\ref{average-optimism}, we have
$$\gamma^n\cdot \frac{1}{1-\gamma}\geq \mathbb{E}\left[\gamma^{\tau(\widetilde \pi)}V^{(n)}(s)\mid n<\tau(\widetilde \pi)\right].$$
Moreover,
\begin{align*}
\mathbb{E}\left[\gamma^{\tau(\widetilde \pi)}V^{(n-\tau(\widetilde \pi))}(s)\mid n\geq \tau(\widetilde \pi)\right]&\geq \mathbb{E}\left[\gamma^{\tau(\widetilde \pi)}V^{(n)}(s)\mid n\geq \tau(\widetilde \pi)\right].
\end{align*}
Therefore, it follows that
\begin{align*}
V(s')
&\geq \mathbb{P}\left[n<\tau(\widetilde \pi)\right]\cdot \mathbb{E}\left[\gamma^{\tau(\widetilde \pi)}V^{(n)}(s)\mid n< \tau(\widetilde \pi)\right] \\
&\quad + \mathbb{P}\left[n\geq\tau(\widetilde \pi)\right]\cdot \mathbb{E}\left[\gamma^{\tau(\widetilde \pi)}V^{(n)}(s)\mid n\geq \tau(\widetilde \pi)\right]\\
&=\mathbb{E}\left[\gamma^{\tau(\widetilde \pi)}V^{(n)}(s)\right]\\
&\geq \gamma^{\mathbb{E}\left[\tau(\widetilde \pi)\right]}V^{(n)}(s)\\
&\geq \gamma^D V^{(n)}(s)
\end{align*}
where the second inequality is by Jensen's inequality and the third inequality holds because $\gamma<1$ and $\mathbb{E}\left[\tau(\widetilde\pi)\right]\leq D$. Furthermore, it is clear that $V(s')\leq V^{(n)}(s')$. This is because $V^{(n)}(s')$ is the largest possible total expected discounted reward achievable by a policy that maximizes the $n$-step discounted reward for the discounted extended value iteration procedure. Consequently, we have just proved that
$$\gamma^D V^{(n)}(s)\leq V(s')\leq V^{(n)}(s').$$
This implies that
$$V^{(n)}(s)- V^{(n)}(s')\leq (1-\gamma^D)V^{(n)}(s)\leq \frac{1-\gamma^D}{1-\gamma}\leq D$$
where the second inequality comes from $V^{(n)}(s)\leq 1/(1-\gamma)$ by Lemma~\ref{average-optimism} and the second inequality holds because $(1-\gamma^D)/(1-\gamma)=1+\gamma+\cdots+\gamma^{D-1}$.

\section{Cumulative Error Bounds}\label{sec:error-cumulative}

In this section, we prove Lemmas~\ref{lem:term-star} and~\ref{lem:term-starstar}, providing a tight upper bound on the following.
$$\underbrace{\sum_{k=1}^{K_T}\sum_{t=t_k}^{t_{k+1}-1}B_{s_t,a_t}^{1,{t_k}}}_{(\star)}+\underbrace{\sum_{k=1}^{K_T}\sum_{t=t_k}^{t_{k+1}-1}B_{s_t,a_t}^{2,{t_k}}}_{(\star\star)}.$$
The following lemmas are useful for our analysis.
\begin{lemma}\label{lem:determinant1}{\em \citep[Lemma 12]{abbasi2011}}
Let $A,B\in\mathbb{R}^{d\times d}$ be positive semidefinite matrices such that $A\succeq B$. Then for any $x\in\mathbb{R}^d$, we have 
$\|x\|_A\leq \|x\|_B\sqrt{\det(A)/\det(B)}$.
\end{lemma}
\begin{lemma}\label{lem:determinant2}{\em \citep[Lemma 7]{Oh2019}}
Let $x_1,\ldots, x_n\in\mathbb{R}^d$. Then 
$$\det\left(I_d + \sum_{i=1}^n x_ix_i^\top\right)\geq 1 + \sum_{i=1}^n \|x_i\|_2^2.$$
\end{lemma}
\begin{lemma}\label{lem:abbasi lemma 10}{\em \citep[Lemma 10]{abbasi2011}.}
Suppose $x_1,\ldots, x_t\in \mathbb{R}^d$ and $\|x_s\|_2\leq L$ for any $1\leq s\leq t$. 
Let ${V}_t = \lambda {I}_d + \sum_{i=1}^{t} x_i x_i^\top$ for some $\lambda > 0$.
Then $\det ({V}_t)$ is increasing with respect to $t$ and 
    \begin{align*}
    \det({V}_t) \le \left( \lambda + \frac{t  L^2}{d} \right)^d.
    \end{align*}
\end{lemma}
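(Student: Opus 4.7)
The plan is to prove the two claims separately: first the monotonicity of $\det(V_t)$ in $t$, then the explicit upper bound via an AM--GM argument on the eigenvalues of $V_t$.

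For monotonicity, I would observe that $V_t - V_{t-1} = x_t x_t^\top \succeq 0$, so $V_t \succeq V_{t-1}$ in the Loewner order, where $V_0 := \lambda I_d$. Since both matrices are positive definite and $V_t - V_{t-1}$ is positive semidefinite, a standard fact in linear algebra (which follows, for instance, from simultaneous diagonalization of $V_{t-1}^{-1/2} V_t V_{t-1}^{-1/2} \succeq I_d$) gives $\det(V_t) \ge \det(V_{t-1})$. Iterating yields that $\det(V_t)$ is nondecreasing in $t$.

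For the upper bound, I would let $\mu_1,\ldots,\mu_d>0$ denote the eigenvalues of $V_t$ and compute the trace directly:
\begin{equation*}
\mathrm{tr}(V_t) \;=\; \mathrm{tr}(\lambda I_d) + \sum_{i=1}^{t} \mathrm{tr}(x_i x_i^\top) \;=\; d\lambda + \sum_{i=1}^{t} \|x_i\|_2^2 \;\le\; d\lambda + tL^2,
\end{equation*}
using cyclicity of trace and the uniform bound $\|x_i\|_2 \le L$. Since $\mathrm{tr}(V_t) = \sum_{j=1}^d \mu_j$ and $\det(V_t) = \prod_{j=1}^d \mu_j$, the AM--GM inequality on the positive numbers $\mu_1,\ldots,\mu_d$ gives
\begin{equation*}
\det(V_t) \;=\; \prod_{j=1}^d \mu_j \;\le\; \left(\frac{1}{d}\sum_{j=1}^d \mu_j\right)^{\!d} \;=\; \left(\frac{\mathrm{tr}(V_t)}{d}\right)^{\!d} \;\le\; \left(\lambda + \frac{tL^2}{d}\right)^{\!d},
\end{equation*}
which is exactly the claimed inequality.

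Since both ingredients are standard, I do not expect any serious obstacle; the only subtlety is verifying that one can invoke AM--GM (which requires the $\mu_j$ to be positive), and this is guaranteed because $\lambda>0$ ensures $V_t \succeq \lambda I_d \succ 0$. The proof is therefore essentially a one-line trace computation followed by AM--GM, together with a one-line monotonicity argument from the Loewner ordering.
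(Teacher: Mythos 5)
Your proof is correct and is essentially the standard argument: the paper itself states this lemma without proof, citing \citet{abbasi2011}, and the proof in that reference (their determinant--trace inequality) is exactly your combination of the trace computation $\mathrm{tr}(V_t) \le d\lambda + tL^2$ with AM--GM on the eigenvalues, plus the Loewner-order monotonicity observation. There is nothing to flag.
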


Based on the lemmas, we prove the following technical lemma that analyzes several summation terms.
\begin{lemma}\label{lem:sums}
Let $\lambda\geq L_{\varphi}^2$. For $t\geq 1$, let $\widehat \varphi_{t,s'}$ be defined as $$\widehat \varphi_{t,s'}=\varphi_{t,s'}-\sum_{s''\in\mathcal{S}_{t}} p_{t,s''}(\widehat \theta_{t+1})\varphi_{t,s''}.$$
Then the following statements hold.
\begin{align*}
(1)&\quad \sum_{t=1}^T \max_{s'\in\mathcal{S}_t}\|\varphi_{t,s'}\|_{\Sigma_{t}^{-1}}^2\leq \frac{2d}{\kappa}\log\left(1+ \frac{T\mathcal{U}L_{\varphi}^2}{\lambda d}\right).\\
(2)&\quad \sum_{t=1}^T \sum_{s'\in\mathcal{S}_t}p_{t,s'}(\widehat \theta_{t+1})\|\widehat\varphi_{t,s'}\|_{\Sigma_{t}^{-1}}^2\leq 2d\log\left(1+\frac{T\mathcal{U}L_{\varphi}^2}{\lambda d}\right) \\
(3) &\quad \sum_{t=1}^T \max_{s'\in\mathcal{S}_t}\|\widehat\varphi_{t,s'}\|_{\Sigma_{t}^{-1}}^2\leq \frac{8d}{\kappa}\log\left(1+ \frac{T\mathcal{U}L_{\varphi}^2}{\lambda d}\right).
\end{align*}
\end{lemma}
\begin{proof}
 \noindent
 \textbf{Statement (1):} 
In~\eqref{eq:psd}, we argued that for any $t$ and $\theta$,
\begin{equation}\label{eq:statement1-0}
\grad^2_\theta(\ell_t(\theta))
\succeq \kappa\sum_{s' \in \mathcal{S}_t \setminus \{ \varsigma_t \}}  {\varphi}_{t, s'} {\varphi}_{t, s'}^{\top}.
\end{equation}
Note that if two matrices $A,B\in\mathbb{R}^{d\times d}$ satisfy $A\succeq B$, then for any $x\in \mathbb{R}^d$, $\|x\|_A\geq \|x\|_B$ holds because $A-B$ is positive semidefinite and thus $x^\top (A-B)x\geq 0$. Then Lemma~\ref{lem:determinant1} implies that $\det(A)\geq \det (B)$. Note that
\begin{equation}\label{eq:statement1-1}
\begin{aligned}
\det(\Sigma_{t+1})&\geq \det\left(\Sigma_t+\kappa\sum_{s' \in \mathcal{S}_t \setminus \{ \varsigma_t \}}  {\varphi}_{t, s'} {\varphi}_{t, s'}^{\top}\right)\\
&= \det(\Sigma_t)\cdot\det\left(1+\sum_{s'\in\mathcal{S}_t\setminus\{\varsigma_t\}}
\sqrt{\kappa}\Sigma_t^{-1/2}\varphi_{t,s'}\left(\sqrt{\kappa}\Sigma_t^{-1/2}\varphi_{t,s'}\right)^\top\right)\\
&\geq \det(\Sigma_t)\left(1+ \kappa \sum_{s'\in\mathcal{S}_t\setminus\{\varsigma_t\}}\left\|\varphi_{t,s'}\right\|_{\Sigma_t^{-1}}^2\right)
\end{aligned}
\end{equation}
where the first inequality holds due to $\Sigma_{t+1}=\Sigma_t +\grad^2_\theta(\ell_t(\widehat\theta_{t+1}))$ and~\eqref{eq:statement1-0}, the equality holds because $\Sigma_t$ is positive definite, and the second inequality is from Lemma~\ref{lem:determinant2}. 

Moreover, since $\Sigma_t\succeq \Sigma_1=\lambda I_d$, we have $(1/\lambda)I_d=\Sigma_1^{-1}\succeq \Sigma_t^{-1}$. Then
$$\left\|\varphi_{t,s'}\right\|_{\Sigma_t^{-1}}^2\leq \left\|\varphi_{t,s'}\right\|_{\Sigma_1^{-1}}^2=\frac{1}{\lambda} \left\|\varphi_{t,s'}\right\|_2^2\leq 1$$
where the first inequality holds as $\Sigma_1^{-1}\succeq \Sigma_t^{-1}$ while the second inequality holds because $\lambda \geq L_{\varphi}^2$. In particular, as $\kappa\leq 1$, we deduce that
$$\kappa \cdot\max_{s'\in\mathcal{S}_t}\left\|\varphi_{t,s'}\right\|_{\Sigma_t^{-1}}^2\leq 1.$$
Note that for any $z\in[0,1]$, we have $z \leq 2\log(1+z)$, which implies that
\begin{equation}\label{eq:statement1-2}
\kappa \sum_{t=1}^T \max_{s'\in\mathcal{S}_t}\left\|\varphi_{t,s'}\right\|_{\Sigma_t^{-1}}^2\leq 2\sum_{t=1}^T\log \left(1+ \kappa\cdot\max_{s'\in\mathcal{S}_t}\left\|\varphi_{t,s'}\right\|_{\Sigma_t^{-1}}^2\right).
\end{equation}
Furthermore, as $\det(\Sigma_1)=\lambda^d$, Lemma~\ref{lem:abbasi lemma 10} implies that
\begin{equation}\label{eq:statement1-3}
\sum_{t=1}^T\log\left(1+\kappa \sum_{s'\in\mathcal{S}_t\setminus\{\varsigma_t\}}\left\|\varphi_{t,s'}\right\|_{\Sigma_t^{-1}}^2 \right)   \leq \log\left(\frac{\det(\Sigma_{T+1})}{\det(\Sigma_1)}\right)\leq d\log\left(1+ \frac{T\mathcal{U}L_{\varphi}^2}{\lambda d}\right).
\end{equation}
Combining~\eqref{eq:statement1-2} and~\eqref{eq:statement1-3}, it follows that
$$\sum_{t=1}^T \max_{s'\in\mathcal{S}_t}\left\|\varphi_{t,s'}\right\|_{\Sigma_t^{-1}}^2\leq \frac{2d}{\kappa}\log\left(1+ \frac{T\mathcal{U}L_{\varphi}^2}{\lambda d}\right),$$
as required.

\noindent
\textbf{Statement (2):} Note that
 \begin{align*}
   \grad^2_\theta(\ell_t(\widehat\theta_{t+1}))
    &= \sum_{s' \in \mathcal{S}_t} p_{t,s'}( {\widehat\theta_{t+1}}) {\varphi}_{t, s'} {\varphi}_{t, s'}^{\top} -\sum_{s' \in \mathcal{S}_t} \sum\limits_{s'' \in \mathcal{S}_t} p_{t,s'}( {\widehat\theta_{t+1}}) p_{t,s''}( {\widehat\theta_{t+1}}) {\varphi}_{t, s'} {\varphi}^{\top}_{t, s''}  \\
     &= \sum_{s' \in \mathcal{S}_t} p_{t,s'}( {\widehat\theta_{t+1}}) {\varphi}_{t, s'} \widehat {\varphi}_{t, s'}^\top\\
     &=\sum_{s' \in \mathcal{S}_t} p_{t,s'}( {\widehat\theta_{t+1}}) \widehat {\varphi}_{t, s'} \widehat {\varphi}_{t, s'}^\top + \sum_{s'\in\mathcal{S}_t}  p_{t,s'}( {\widehat\theta_{t+1}})\sum_{s''\in\mathcal{S}_t}p_{t,s''}( {\widehat\theta_{t+1}})\varphi_{t,s''} \widehat \varphi_{t,s'}^\top\\
     &=\sum_{s' \in \mathcal{S}_t} p_{t,s'}( {\widehat\theta_{t+1}}) \widehat {\varphi}_{t, s'} \widehat {\varphi}_{t, s'}^\top
    \end{align*}
    where the last equality holds because
    \begin{align*}\sum_{s'\in\mathcal{S}_t}  p_{t,s'}( {\widehat\theta_{t+1}})\widehat \varphi_{t,s'}&=\sum_{s'\in\mathcal{S}_t}  p_{t,s'}( {\widehat\theta_{t+1}})\varphi_{t,s'} -\sum_{s'\in\mathcal{S}_t}  p_{t,s'}( {\widehat\theta_{t+1}})\sum_{s''\in\mathcal{S}_t}  p_{t,s''}( {\widehat\theta_{t+1}})\varphi_{t,s''}\\
    &=\sum_{s'\in\mathcal{S}_t}  p_{t,s'}( {\widehat\theta_{t+1}})\varphi_{t,s'} -\sum_{s''\in\mathcal{S}_t}  p_{t,s''}( {\widehat\theta_{t+1}})\varphi_{t,s''}\\
    &=0.
    \end{align*}
Therefore, it follows that
\begin{equation}\label{eq:statement2-1}
\begin{aligned}
&\det(\Sigma_{t+1})\\
&=\det\left(\Sigma_t+\sum_{s' \in \mathcal{S}_t } p_{t,s'}( {\widehat\theta_{t+1}}) \widehat{\varphi}_{t, s'} \widehat{\varphi}_{t, s'}^{\top}\right)\\
&= \det(\Sigma_t)\cdot \det\left(I_d+\sum_{s'\in\mathcal{S}_t}
\sqrt{p_{t,s'}( {\widehat\theta_{t+1}}) }\Sigma_t^{-1/2}\widehat\varphi_{t,s'}\left(\sqrt{p_{t,s'}( {\widehat\theta_{t+1}}) }\Sigma_t^{-1/2}\widehat\varphi_{t,s'}\right)^\top\right)\\
&\geq \det(\Sigma_t)\left(1+ \sum_{s'\in\mathcal{S}_t}p_{t,s'}( {\widehat\theta_{t+1}}) \left\|\widehat\varphi_{t,s'}\right\|_{\Sigma_t^{-1}}^2\right)
\end{aligned}
\end{equation}
where the first equality holds due to $\Sigma_{t+1}=\Sigma_t +\grad^2_\theta(\ell_t(\widehat\theta_{t+1}))$ and~\eqref{eq:statement1-0}, the second equality holds because $\Sigma_t$ is positive definite, and the inequality is from Lemma~\ref{lem:determinant2}. Moreover,
\begin{align*}
&\sum_{s'\in\mathcal{S}_t}p_{t,s'}( {\widehat\theta_{t+1}}) \left\|\widehat\varphi_{t,s'}\right\|_{\Sigma_t^{-1}}^2\\
&\leq\frac{1}{\lambda}\sum_{s'\in\mathcal{S}_t}p_{t,s'}( {\widehat\theta_{t+1}}) \left\|\widehat\varphi_{t,s'}\right\|_{2}^2\\
&=\frac{1}{\lambda}\sum_{s'\in\mathcal{S}_t}p_{t,s'}( {\widehat\theta_{t+1}})\left(\varphi_{t,s'}-\sum_{s''\in\mathcal{S}_t}p_{t,s''}(\widehat\theta_{t+1})\varphi_{t,s''}\right)^\top\left(\varphi_{t,s'}-\sum_{s''\in\mathcal{S}_t}p_{t,s''}(\widehat\theta_{t+1})\varphi_{t,s''}\right)\\
&=\frac{1}{\lambda}\sum_{s'\in\mathcal{S}_t}p_{t,s'}( {\widehat\theta_{t+1}})\left\|\varphi_{t,s'}\right\|_2^2- \frac{1}{\lambda}\left\|\sum_{s'\in\mathcal{S}_t}p_{t,s'}( {\widehat\theta_{t+1}})\varphi_{t,s'}\right\|_2^2\\
&\leq\frac{1}{\lambda}\sum_{s'\in\mathcal{S}_t}p_{t,s'}( {\widehat\theta_{t+1}})\left\|\varphi_{t,s'}\right\|_2^2\\
&\leq \frac{1}{\lambda}L_{\varphi}^2
\end{align*}
where the first inequality holds because $(1/\lambda)I_d=\Sigma_1^{-1}\succeq \Sigma_t^{-1}$ . Since $\lambda \geq L_{\varphi}^2$, we have
$$\sum_{s'\in\mathcal{S}_t}p_{t,s'}( {\widehat\theta_{t+1}}) \left\|\widehat\varphi_{t,s'}\right\|_{\Sigma_t^{-1}}^2\leq 1.$$ Then we deduce that
\begin{align*}
\sum_{t=1}^T\sum_{s'\in\mathcal{S}_t}p_{t,s'}( {\widehat\theta_{t+1}}) \left\|\widehat\varphi_{t,s'}\right\|_{\Sigma_t^{-1}}^2&\leq 2\sum_{t=1}^T \log\left(1+\sum_{s'\in\mathcal{S}_t}p_{t,s'}( {\widehat\theta_{t+1}}) \left\|\widehat\varphi_{t,s'}\right\|_{\Sigma_t^{-1}}^2\right)\\
&\leq 2\log\left(\frac{\det(\Sigma_{T+1})}{\det(\Sigma_1)}\right)\\
&\leq 2d\log\left(1+\frac{T\mathcal{U}L_{\varphi}^2}{\lambda d}\right)
\end{align*}
where the first inequality holds because $z\leq 2\log(1+z)$ for any $z\in[0,1]$, the second inequality follows from~\eqref{eq:statement2-1}, and the third inequality is due to Lemma~\ref{lem:abbasi lemma 10}.

\noindent
\textbf{Statement (3):} Note that
$$\|\widehat \varphi_{t,s'}\|_{\Sigma_t^{-1}}\leq \|\varphi_{t,s'}\|_{\Sigma_t^{-1}} + \sum_{s''\in\mathcal{S}_t}p_{t,s''}(\widehat\theta_{t+1})\|\varphi_{t,s''}\|_{\Sigma_t^{-1}}\leq 2\cdot\max_{s''\in\mathcal{S}_t} \| \varphi_{t,s''}\|_{\Sigma_t^{-1}},$$
which implies that
$$\|\widehat \varphi_{t,s'}\|_{\Sigma_t^{-1}}^2 \leq 4\cdot \max_{s''\in\mathcal{S}_t} \| \varphi_{t,s''}\|_{\Sigma_t^{-1}}^2.$$
Then statement (3) follows from statement (1), as required.
\end{proof}

we prove Lemmas~\ref{lem:term-star} and~\ref{lem:term-starstar} which provide upper bounds on terms $(\star)$ and $(\star\star)$, respectively.

\subsection{Proof of Lemma~\ref{lem:term-star}}

Consider the $k$th episode for $k\in\{1,\ldots, K_T\}$. For $t_k\leq t\leq t_{k+1}-1$, let us use notations $\bar \varphi_{t,s'}$ and $\widehat \varphi_{t,s'}$ given by
$$
\bar \varphi_{t,s'}=\varphi_{t,s'}-\sum_{s''\in\mathcal{S}_{t}} p_{t,s''}(\widehat \theta_{t_k})\varphi_{t,s''},\quad
\widehat \varphi_{t,s'}=\varphi_{t,s'}-\sum_{s''\in\mathcal{S}_{t}} p_{t,s''}(\widehat \theta_{t+1})\varphi_{t,s''}.$$
Then we have
\begin{align}\label{eq:star-0}
\begin{aligned}
&\sum_{t=t_k}^{t_{k+1}-1}B_{s_t,a_t}^{1,t_k}\\
&= \sum_{t=t_k}^{t_{k+1}-1}\beta_t\sum_{s'\in\mathcal{S}_t}p_{t,s'}(\widehat \theta_{t_k}) \|\bar\varphi_{t,s'}\|_{\Sigma_{t_k}^{-1}}\\
&\leq \beta_T\sum_{t=t_k}^{t_{k+1}-1}\sum_{s'\in\mathcal{S}_t}p_{t,s'}(\widehat \theta_{t_k}) \|\bar\varphi_{t,s'}\|_{\Sigma_{t_k}^{-1}}\\
&\leq \beta_T\sum_{t=t_k}^{t_{k+1}-1}\sum_{s'\in\mathcal{S}_t}\left(p_{t,s'}(\widehat \theta_{t_k}) \|\bar\varphi_{t,s'}-\widehat \varphi_{t,s'}\|_{\Sigma_{t_k}^{-1}}+\left(p_{t,s'}(\widehat \theta_{t_k})-p_{t,s'}(\widehat \theta_{t+1})\right) \|\widehat \varphi_{t,s'}\|_{\Sigma_{t_k}^{-1}}\right)\\
&\quad +\beta_T{\sum_{t=t_k}^{t_{k+1}-1}\sum_{s'\in\mathcal{S}_t}p_{t,s'}(\widehat \theta_{t+1}) \|\widehat \varphi_{t,s'}\|_{\Sigma_{t_k}^{-1}}}\\
&\leq 2\beta_T\underbrace{\sum_{t=t_k}^{t_{k+1}-1}\sum_{s'\in\mathcal{S}_t}\left|p_{t,s'}(\widehat \theta_{t_k})-p_{t,s'}(\widehat \theta_{t+1})\right| \|\widehat \varphi_{t,s'}\|_{\Sigma_{t_k}^{-1}}}_{(a)}+\beta_T\underbrace{\sum_{t=t_k}^{t_{k+1}-1}\sum_{s'\in\mathcal{S}_t}p_{t,s'}(\widehat \theta_{t+1}) \|\widehat \varphi_{t,s'}\|_{\Sigma_{t_k}^{-1}}}_{(b)}
\end{aligned}
\end{align}
where the first inequality holds because $\beta_t$ increases as $t$ gets large and the last inequality follows from
\begin{equation*}
\begin{aligned}
\sum_{s'\in\mathcal{S}_t}p_{t,s'}(\widehat \theta_{t_k}) \|\bar\varphi_{t,s'}-\widehat \varphi_{t,s'}\|_{\Sigma_{t_k}^{-1}}
&=\sum_{s'\in\mathcal{S}_t}p_{t,s'}(\widehat \theta_{t_k})\left\|\sum_{s''\in\mathcal{S}_t}\left(p_{t,s''}(\widehat\theta_{t+1})-p_{t,s''}(\widehat\theta_{t_k})\right)\varphi_{t,s''}\right\|_{\Sigma_{t_k}^{-1}}\\
&=\left\|\sum_{s''\in\mathcal{S}_t}\left(p_{t,s''}(\widehat\theta_{t+1})-p_{t,s''}(\widehat\theta_{t_k})\right)\varphi_{t,s''}\right\|_{\Sigma_{t_k}^{-1}}\\
&=\left\|\sum_{s''\in\mathcal{S}_t}\left(p_{t,s''}(\widehat\theta_{t+1})-p_{t,s''}(\widehat\theta_{t_k})\right)\widehat\varphi_{t,s''}\right\|_{\Sigma_{t_k}^{-1}}
\end{aligned}
\end{equation*}
as we have
$$\sum_{s''\in\mathcal{S}_t}\left(p_{t,s''}(\widehat\theta_{t+1})-p_{t,s''}(\widehat\theta_{t_k})\right)\sum_{s''''\in\mathcal{S}_t}p_{t,s''''}(\widehat \theta_{t+1})\varphi_{t,s''''}=0.$$
Let us first consider term $(a)$. Note that 
$$\left|p_{t,s'}(\widehat \theta_{t_k})-p_{t,s'}(\widehat \theta_{t+1})\right| \leq \left|p_{t,s'}(\widehat \theta_{t_k})-p_{t,s'}(\theta^*)\right|+\left|p_{t,s'}(\theta^*)-p_{t,s'}(\widehat \theta_t)\right|+\left|p_{t,s'}(\widehat\theta_t)-p_{t,s'}(\widehat \theta_{t+1})\right|.$$
Moreover, by the triangle inequality, 
$$\|\widehat \varphi_{t,s'}\|_{\Sigma_{t_k}^{-1}}\leq \|\varphi_{t,s'}\|_{\Sigma_{t_k}^{-1}}+\sum_{s''\in\mathcal{S}_t}p_{t,s''}(\widehat \theta_{t+1})\| \varphi_{t,s''}\|_{\Sigma_{t_k}^{-1}}\leq 2\max_{s'\in\mathcal{S}_t}\| \varphi_{t,s'}\|_{\Sigma_{t_k}^{-1}}.$$
Then it follows that
\begin{align}\label{doublstar-a}
\begin{aligned}
    (a)&\leq 2\sum_{t=t_k}^{t_{k+1}-1} \max_{s'\in\mathcal{S}_t}\| \varphi_{t,s'}\|_{\Sigma_{t_k}^{-1}}\sum_{s'\in\mathcal{S}_t}\left(\left|p_{t,s'}(\widehat \theta_{t_k})-p_{t,s'}(\theta^*)\right|+\left|p_{t,s'}(\theta^*)-p_{t,s'}(\widehat \theta_t)\right|\right)\\
    &\quad + \sum_{t=t_k}^{t_{k+1}-1} \sum_{s'\in\mathcal{S}_t}\left|p_{t,s'}(\widehat\theta_t)-p_{t,s'}(\widehat \theta_{t+1})\right|\|\widehat \varphi_{t,s'}\|_{\Sigma_{t_k}^{-1}}\\
    &\leq \underbrace{4\beta_T\sum_{t=t_k}^{t_{k+1}-1}\max_{s'\in\mathcal{S}_t}\| \varphi_{t,s'}\|_{\Sigma_{t_k}^{-1}}^2}_{(a1)} + \underbrace{\sum_{t=t_k}^{t_{k+1}-1} \sum_{s'\in\mathcal{S}_t}\left|p_{t,s'}(\widehat\theta_t)-p_{t,s'}(\widehat \theta_{t+1})\right|\|\widehat \varphi_{t,s'}\|_{\Sigma_{t_k}^{-1}}}_{(a2)}
    \end{aligned}
\end{align}
{where the second inequality follows because \Cref{lem:confidence-polytope'} holds and $\beta_t\leq \beta_T$ for any $t\leq T$.} Here, let us consider term $(a2)$. By Taylor's theorem, for any $s''\in\mathcal{S}_t$, there exists some $\alpha_{s''}\in[0,1]$ such that $\vartheta_{t,s''}=\alpha_{s''}\widehat\theta_{t+1}+(1-\alpha_{s''})\widehat\theta_{t}$ satisfying
\begin{align*}
&p_{t,s''}(\widehat\theta_{t+1})-p_{t,s''}(\widehat\theta_{t})\\
&=\grad_\theta (p_{t,s''}(\vartheta_{t,s''}))^\top(\widehat\theta_{t+1}-\widehat\theta_{t})\\
&=\left(p_{t,s''}(\vartheta_{t,s''})\varphi_{t,s''}- p_{t,s''}(\vartheta_{t,s''})\sum_{s'''\in\mathcal{S}_t}p_{t,s'''}(\vartheta_{t,s''})\varphi_{t,s'''}\right)^\top(\widehat\theta_{t+1}-\widehat\theta_{t})\\
&=\left(p_{t,s''}(\vartheta_{t,s''})\widehat\varphi_{t,s''}- p_{t,s''}(\vartheta_{t,s''})\sum_{s'''\in\mathcal{S}_t}p_{t,s'''}(\vartheta_{t,s''})\widehat\varphi_{t,s'''}\right)^\top(\widehat\theta_{t+1}-\widehat\theta_{t})
\end{align*}
where the last equality holds because
$$p_{t,s''}(\vartheta_{t,s''})\left(1-\sum_{s'''\in\mathcal{S}_t}p_{t,s'''}(\vartheta_{t,s''})\right)\sum_{s''''\in\mathcal{S}_t}p_{t,s''''}(\widehat \theta_{t+1})\varphi_{t,s''''}=0.$$
This implies that
\begin{align}\label{eq:star-3}
\begin{aligned}
&\sum_{s'\in\mathcal{S}_t}\left|p_{t,s'}(\widehat \theta_{t})-p_{t,s'}(\widehat \theta_{t+1})\right| \|\widehat \varphi_{t,s'}\|_{\Sigma_{t_k}^{-1}}\\
&\leq\sqrt{2}\sum_{s'\in\mathcal{S}_t}\left|p_{t,s'}(\widehat \theta_{t})-p_{t,s'}(\widehat \theta_{t+1})\right| \|\widehat \varphi_{t,s'}\|_{\Sigma_{t}^{-1}}\\
&\leq \sqrt{2}\sum_{s''\in\mathcal{S}_t}p_{t,s''}(\vartheta_{t,s''})\left(\|\widehat\varphi_{t,s''}\|_{\Sigma_{t}^{-1}}+\sum_{s'''\in\mathcal{S}_t}p_{t,s'''}(\vartheta_{t,s''})\|\widehat\varphi_{t,s'''}\|_{\Sigma_{t}^{-1}}\right)\left\|\widehat\theta_{t+1}-\widehat\theta_{t}\right\|_{\Sigma_{t}}\|\widehat\varphi_{t,s''}\|_{\Sigma_{t}^{-1}}\\
&\leq 2\sqrt{2}\sum_{s''\in\mathcal{S}_t}p_{t,s''}(\vartheta_{t,s''})\left\|\widehat\theta_{t+1}-\widehat\theta_{t}\right\|_{\Sigma_{t}}\max_{s'''\in\mathcal{S}_t}\|\widehat\varphi_{t,s'''}\|_{\Sigma_{t}^{-1}}^2
\end{aligned}
\end{align}
where the first inequality is implied by Lemma~\ref{lem:determinant1} because $\Sigma_{t_k}^{-1}\succeq \Sigma_t^{-1}$ and
$$\|\widehat\varphi_{t,s''}\|_{\Sigma_{t_k}^{-1}}^2\leq \|\widehat\varphi_{t,s''}\|_{\Sigma_{t}^{-1}}^2\frac{\det(\Sigma_{t_k}^{-1})}{\det(\Sigma_t^{-1})}=\|\widehat\varphi_{t,s''}\|_{\Sigma_{t}^{-1}}^2\frac{\det(\Sigma_{t})}{\det(\Sigma_{t_k})}\leq 2\|\widehat\varphi_{t,s''}\|_{\Sigma_{t}^{-1}}^2$$ 
and the second inequality is due to the Cauchy-Schwarz inequality. Here, due to our choice of $\widehat \theta_{t+1}$ in \eqref{update-core}, we have
$$ \grad_\theta(\ell_{t}(\widehat\theta_{t}))^\top\widehat\theta_{{t}+1}+ \frac{1}{2\eta}\|\widehat \theta_{{t}+1}-\widehat \theta_{t}\|_{\widehat \Sigma_{t}}^2\leq \grad_\theta(\ell_{t}(\widehat\theta_{t}))^\top\widehat\theta_{t},$$
implying in turn that
$$\|\widehat \theta_{{t}+1}-\widehat \theta_{t}\|_{\widehat \Sigma_{t}}^2\leq 2\eta\grad_\theta(\ell_{t}(\widehat\theta_{t}))^\top\left(\widehat\theta_{t}-\widehat\theta_{{t}+1}\right)\leq 2\eta\|\grad_\theta(\ell_{t}(\widehat\theta_{t}))\|_{\widehat \Sigma_{t}^{-1}}\|\widehat \theta_{{t}+1}-\widehat \theta_{t}\|_{\widehat \Sigma_{t}}.$$
Therefore, it follows that
$$\|\widehat \theta_{{t}+1}-\widehat \theta_{t}\|_{\widehat \Sigma_{t}}\leq 2\eta\|\grad_\theta(\ell_{t}(\widehat\theta_{t}))\|_{\widehat \Sigma_{t}^{-1}}.$$
Moreover, recall that for $t\geq 1$
$$\widehat \Sigma_{t}= \Sigma_{t} + \eta\grad_\theta^2(\ell_{t}(\widehat \theta_{t}))\succeq \Sigma_{t} +\eta\kappa \sum_{s'\in\mathcal{S}_{t}}\varphi_{{t},s'}\varphi_{{t},s'}\succeq \Sigma_{t} \succeq \Sigma_1=\lambda I_d$$
where the first inequality is given as in~\eqref{eq:statement1-0}. Hence, we have $\widehat \Sigma_{t}\succeq \Sigma_{t}$ and $(1/\lambda)I_d=\Sigma_1^{-1}\succeq \widehat \Sigma_{t}^{-1}$. Then it follows that
\begin{equation}\label{eq:star-2}\|\widehat \theta_{{t}+1}-\widehat \theta_{t}\|_{ \Sigma_{t}}\leq \|\widehat \theta_{{t}+1}-\widehat \theta_{t}\|_{\widehat \Sigma_{t}}\leq 2\eta \|\grad_\theta(\ell_{t}(\widehat\theta_{t}))\|_{\widehat \Sigma_{t}^{-1}}\leq \frac{2\eta}{\sqrt{\lambda}}\|\grad_\theta(\ell_{t}(\widehat\theta_{t}))\|_2.\end{equation}
Here, we have
\begin{align}\label{eq:star-1}
\begin{aligned}
\|\grad_\theta(\ell_{t}(\widehat\theta_{t}))\|_2&=\left\|- \sum_{s' \in \mathcal{S}_{{t}}} \left( y_{{t},s'}  - p_{{t},s'}( {\widehat\theta_{t}}) \right) {\varphi}_{{t},s'}\right\|_2\\
&\leq \left\|\sum_{s' \in \mathcal{S}_{{t}}}y_{{t},s'}{\varphi}_{{t},s'}\right\|_2+\left\|\sum_{s' \in \mathcal{S}_{{t}}}p_{{t},s'}( {\widehat\theta_{t}}) {\varphi}_{{t},s'}\right\|_2\\
&\leq 2\cdot \max_{s'\in\mathcal{S}_{t}}\|\varphi_{{t},s'}\|_2\\
&\leq 2L_\varphi.
\end{aligned}
\end{align}
Combining \eqref{eq:star-3}, \eqref{eq:star-2}, and \eqref{eq:star-1}, we may provide an upper bound on term $(a)$ as follows.
\begin{equation}\label{eq:star-term-a}
\begin{aligned}
&\sum_{t=t_k}^{t_{k+1}-1}\sum_{s'\in\mathcal{S}_t}\left|p_{t,s'}(\widehat \theta_t)-p_{t,s'}(\widehat \theta_{t+1})\right| \|\widehat \varphi_{t,s'}\|_{\Sigma_{t}^{-1}}\\
&\leq \frac{8\sqrt{2}L_{\varphi}\eta }{\sqrt{\lambda}}\sum_{t=t_k}^{t_{k+1}-1}\sum_{s''\in\mathcal{S}_t}p_{t,s''}(\vartheta_{t,s''})\max_{s'''\in\mathcal{S}_t}\|\widehat\varphi_{t,s'''}\|_{\Sigma_{t}^{-1}}^2\\
&= \frac{8\sqrt{2}L_{\varphi}\eta }{\sqrt{\lambda}}\sum_{t=t_k}^{t_{k+1}-1}\max_{s''\in\mathcal{S}_t}\|\widehat\varphi_{t,s''}\|_{\Sigma_{t}^{-1}}^2
\end{aligned}
\end{equation}
Moreover, \Cref{lem:determinant1} implies that for term $(a1)$,
\begin{equation}\label{eq:star-term-a1}4\beta_T\sum_{t=t_k}^{t_{k+1}-1}\max_{s'\in\mathcal{S}_t}\| \varphi_{t,s'}\|_{\Sigma_{t_k}^{-1}}^2\leq 8\beta_T\sum_{t=t_k}^{t_{k+1}-1}\max_{s'\in\mathcal{S}_t}\| \varphi_{t,s'}\|_{\Sigma_{t}^{-1}}^2
\end{equation}
and for term $(b)$,
\begin{equation}\label{eq:star-term-b}
    \begin{aligned}
    \sum_{t=t_k}^{t_{k+1}-1}\sum_{s'\in\mathcal{S}_t}p_{t,s'}(\widehat \theta_{t+1}) \|\widehat \varphi_{t,s'}\|_{\Sigma_{t_k}^{-1}}&\leq   \sqrt{2}\sum_{t=t_k}^{t_{k+1}-1}\sum_{s'\in\mathcal{S}_t}p_{t,s'}(\widehat \theta_{t+1}) \|\widehat \varphi_{t,s'}\|_{\Sigma_{t}^{-1}}.
    \end{aligned}
\end{equation}
By \eqref{eq:star-0}, we may deduce the following upper bound on term $(\star)$.
\begin{align*}
&\sum_{k=1}^{K_T}\sum_{t=t_k}^{t_{k+1}-1}B_{s_t,a_t}^{1,t_k}\\
&\leq \beta_T\left(16\beta_T\sum_{t=1}^{T}\max_{s''\in\mathcal{S}_t}\|\varphi_{t,s''}\|_{\Sigma_{t}^{-1}}^2+\frac{16\sqrt{2}L_{\varphi}\eta }{\sqrt{\lambda}}\sum_{t=1}^{T}\max_{s''\in\mathcal{S}_t}\|\widehat\varphi_{t,s''}\|_{\Sigma_{t}^{-1}}^2 + \sqrt{2}\sum_{t=1}^T \sum_{s'\in\mathcal{S}_t}p_{t,s'}(\widehat \theta_{t+1}) \|\widehat \varphi_{t,s'}\|_{\Sigma_{t}^{-1}}\right)\\
&\leq \beta_T\left(\left(\frac{32\beta_T}{\kappa}+\frac{128\sqrt{2} L_{\varphi}\eta}{\kappa\sqrt{\lambda}}\right) d\log\left(1+\frac{T\mathcal{U}L_{\varphi}^2}{\lambda d}\right)+ \sqrt{2 \sum_{t=1}^T \sum_{s'\in\mathcal{S}_t}p_{t,s'}(\widehat \theta_{t+1}) \cdot \sum_{t=1}^T \sum_{s'\in\mathcal{S}_t}p_{t,s'}(\widehat \theta_{t+1}) \|\widehat \varphi_{t,s'}\|_{\Sigma_{t}^{-1}}^2 }\right)\\
&\leq \beta_T\left(\left(\frac{32\beta_T}{\kappa}+\frac{128\sqrt{2} L_{\varphi}\eta}{\kappa\sqrt{\lambda}}\right)d\log\left(1+\frac{T\mathcal{U}L_{\varphi}^2}{\lambda d}\right)+ 2\sqrt{dT\log\left(1+\frac{T\mathcal{U}L_{\varphi}^2}{\lambda d}\right)}\right)
\end{align*}
where the first inequality is due to~\eqref{eq:star-0},~\eqref{doublstar-a},~\eqref{eq:star-term-a},~\eqref{eq:star-term-a1}, and~\eqref{eq:star-term-b}, the second inequality is by the Cauchy-Schwarz inequality, and the third inequality follows from Lemma~\ref{lem:sums}.

\subsection{Proof of Lemma~\ref{lem:term-starstar}}
Note that
\begin{align*}
\sum_{k=1}^{K_T}\sum_{t=t_k}^{t_{k+1}-1}B_{s_t,a_t}^{2,t_k} &= 3\sum_{k=1}^{K_T}\sum_{t=t_k}^{t_{k+1}-1} \beta_t^2\max_{s'\in\mathcal{S}_t}\|\varphi_{t,s'}\|_{\Sigma_{t_k}^{-1}}^2\\
&\leq 3\beta_T^2\sum_{k=1}^{K_T}\sum_{t=t_k}^{t_{k+1}-1}\max_{s'\in\mathcal{S}_t}\|\varphi_{t,s'}\|_{\Sigma_{t_k}^{-1}}^2\\
&\leq 6\beta_T^2\sum_{k=1}^{K_T}\sum_{t=t_k}^{t_{k+1}-1}\max_{s'\in\mathcal{S}_t}\|\varphi_{t,s'}\|_{\Sigma_{t}^{-1}}^2\\
&\leq \frac{12d}{\kappa}\beta_T^2 \log\left(1+\frac{T\mathcal{U}L_{\varphi}^2}{\lambda d}\right)
\end{align*}
where the first inequality holds because $\beta_t$ increases as $t$ gets large, the second inequality is by \Cref{lem:determinant1}, and the third inequality follows from Lemma~\ref{lem:sums}.

\section{Lower Bound Proof for the Finite-Horizon Episodic Setting}\label{sec:lb-finite}

Recall that the transition core $\bar \theta_h$ for each step $h\in[H]$ is given by
$$\bar \theta_h = \left(\frac{\theta_h}{\alpha}, \frac{1}{\beta}\right)\quad\text{where}\quad \theta_h\in\left\{-\bar \Delta,\bar\Delta\right\}^{d-1},\quad \bar \Delta= \frac{1}{d-1}\log\left(\frac{(1-\delta)(\delta+(d-1)\Delta)}{\delta(1-\delta-(d-1)\Delta)}\right),$$
and $\delta=1/H$ and $\Delta=1/(4\sqrt{2HK})$.

\subsection{Linear Approximation of the Multinomial Logistic Model}

We consider a multinomial logistic function given by $f:\mathbb{R}\to \mathbb{R}$ as
$$f(x) = \frac{1}{1+ \frac{1-\delta}{\delta}\exp(-x)}.$$
In contrast to the infinite-horizon average-reward case, we take $\delta=1/H$ where $H$ is the horizon of each episode. Recall that the derivative of $f$ is given by
$$f'(x) = \frac{\frac{1-\delta}{\delta}\exp(-x)}{\left(1+ \frac{1-\delta}{\delta}\exp(-x)\right)^2}=f(x) -f(x)^2.$$
For simplicity, for $h\in[H]$, we use notation $p_{\theta_h}$ given by
$$p_{\theta_h}(x_i\mid x_h,a):=p(x_i\mid x_h,a,\bar\theta_h)=\begin{cases}
f(a^\top \theta_h),&\text{if $i=H+2$}\\
1-f(a^\top \theta_h),&\text{if $i=h+1$}.
\end{cases}.$$
Note that $-(d-1)\bar\Delta\leq a^\top\theta_h\leq (d-1)\bar\Delta$ for any $a\in\mathcal{A}$, which means that $f(-(d-1)\bar\Delta)\leq p_{\theta_h}(x_{H+2}\mid x_h, a) \leq f((d-1)\bar\Delta)$.
The following lemma is analogous to~Lemma \ref{mvt-mnl}. 
\begin{lemma}\label{mvt-mnl-finite}
For any $x,y\in[-(d-1)\bar\Delta,(d-1)\bar\Delta]$ with $x\geq y$, we have
$$0\leq f(x)-f(y)\leq (\delta +(d-1)\Delta) (x-y).$$
\end{lemma}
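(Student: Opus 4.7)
The plan is to mirror the argument already carried out for Lemma \ref{mvt-mnl}, which is the analogous statement for the infinite-horizon construction; only the parameter $\Delta$ is rescaled to $(d-1)\Delta$ in the finite-horizon hard instance, so structurally nothing changes.

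First I would invoke the mean value theorem on $f$: for $x\ge y$ in $[-(d-1)\bar\Delta,(d-1)\bar\Delta]$ there exists $z\in[y,x]$ with $f(x)-f(y)=f'(z)(x-y)$. Since $f$ is the standard logistic-type sigmoid, $f'(z)=f(z)-f(z)^2=f(z)(1-f(z))\in[0,f(z)]$, which immediately gives the lower bound $f(x)-f(y)\ge 0$ and reduces the upper bound to controlling $f(z)$ from above.

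Next I would use monotonicity of $f$ together with $z\le (d-1)\bar\Delta$ to conclude $f(z)\le f((d-1)\bar\Delta)$, and then verify the key identity $f((d-1)\bar\Delta)=\delta+(d-1)\Delta$. Plugging the definition of $\bar\Delta$ in this finite-horizon setting,
\[
\exp\bigl(-(d-1)\bar\Delta\bigr)=\frac{\delta\bigl(1-\delta-(d-1)\Delta\bigr)}{(1-\delta)\bigl(\delta+(d-1)\Delta\bigr)},
\]
a short algebraic simplification gives $1+\tfrac{1-\delta}{\delta}\exp(-(d-1)\bar\Delta)=1/(\delta+(d-1)\Delta)$, hence $f((d-1)\bar\Delta)=\delta+(d-1)\Delta$ exactly. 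Combining this with $f'(z)\le f(z)$ yields $f(x)-f(y)\le(\delta+(d-1)\Delta)(x-y)$, as required.

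There is no real obstacle here; the only thing to be careful about is matching the rescaled definition of $\bar\Delta$ to make the boundary evaluation $f((d-1)\bar\Delta)=\delta+(d-1)\Delta$ come out cleanly, which is exactly how $\bar\Delta$ was chosen. The rest is the same one-line MVT argument used in Lemma \ref{mvt-mnl}.
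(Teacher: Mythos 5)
Your proposal is correct and matches the paper's proof essentially verbatim: both apply the mean value theorem, bound $f'(z)=f(z)-f(z)^2$ between $0$ and $f(z)$, use monotonicity to get $f(z)\le f((d-1)\bar\Delta)$, and conclude via the identity $f((d-1)\bar\Delta)=\delta+(d-1)\Delta$, which the paper asserts "by our choice of $\bar\Delta$" and which you verify explicitly (correctly). No gaps.
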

\begin{proof}
By the mean value theorem, there exists $y\leq z\leq x$ such that $f(x)-f(y)= f'(z)(x-y)$. Note that
$f'(z)= f(z) - f(z)^2\leq f(z)\leq f((d-1)\bar\Delta) = \delta + (d-1)\Delta$ where the last equality holds by our choice of $\bar\Delta$.
\end{proof}
\noindent

\subsection{Basic Properties of the Hard Finite-Horizon Episodic MDP Instance}

Recall that $\delta$ and $\Delta$ are given by $$\delta=\frac{1}{D}\quad\text{and}\quad\Delta =\frac{1}{45\sqrt{(2/5)\log 2}}\cdot \frac{(d-1)}{\sqrt{DT}},$$
respectively. The following lemma characterizes the sizes of parameters $\delta$ and $\Delta$ under the setting of our hard-t0-learn MDP.
\begin{lemma}\label{parameter-bound1-finite}
Suppose that $T\geq H^3(d-1)^2/32$. Then $(d-1)\Delta\leq \delta/H$
\end{lemma}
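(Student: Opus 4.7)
The plan is to carry out a direct algebraic verification by plugging in the explicit definitions of $\delta$ and $\Delta$ from the finite-horizon construction (rather than from the infinite-horizon paragraph quoted just above the lemma). Recall that in the finite-horizon setup we have $\delta = 1/H$ and $\Delta = 1/(4\sqrt{2HK})$, so the target inequality $(d-1)\Delta \le \delta/H$ becomes
\[
\frac{d-1}{4\sqrt{2HK}} \;\le\; \frac{1}{H^{2}}.
\]

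I would first note that both sides are strictly positive, so the inequality is equivalent to its square after cross-multiplication: $(d-1)\, H^{2} \le 4\sqrt{2HK}$, i.e.\ $(d-1)^{2} H^{4} \le 32 HK$, which simplifies to $K \ge (d-1)^{2} H^{3}/32$. This is precisely the hypothesis (reading $T$ as $K$, matching the finite-horizon regret notation used throughout \Cref{sec:main-lb-finite}, where $T = HK$ and the standing condition of \Cref{thm:lb-finite} is $K \ge H^{3}(d-1)^{2}/32$). The chain of equivalences is reversible because every quantity in sight is positive, so the hypothesis directly yields the conclusion.

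There is essentially no obstacle here: the lemma is a bookkeeping statement ensuring that $(d-1)\Delta$ is small relative to $\delta$, which will presumably be used downstream (as in the infinite-horizon analysis via \Cref{parameter-bound1} and Lemmas~\ref{lb-min-prob} and~\ref{lemma:kl-infinite}) to guarantee that the minimum transition probability $f(-(d-1)\bar\Delta)$ stays bounded below by $\delta/2$ and that $\bar\Delta$ itself can be linearized. The only mild care needed is to keep the identifier $T$ versus $K$ consistent with the rest of the finite-horizon section, but no nontrivial estimation is required.
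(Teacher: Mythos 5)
Your proof is correct and matches the paper's, which simply asserts that $(d-1)\Delta\leq\delta/H$ is equivalent to $K\geq H^3(d-1)^2/32$ after substituting $\delta=1/H$ and $\Delta=1/(4\sqrt{2HK})$; you have merely spelled out the squaring/cross-multiplication that the paper leaves implicit. You are also right that the hypothesis should read $K$ rather than $T$ (a typo carried over from the infinite-horizon section), consistent with the standing assumption of \Cref{thm:lb-finite}.
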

\begin{proof}
Note that $(d-1)\Delta\leq \delta/H$ if and only if $K\geq H^3(d-1)^2/32$.
\end{proof}

The following lemma provides upper bounds on $L_\varphi$ and $L_\theta$. Moreover, 
\begin{lemma}\label{parameter-bound2-finite}
Suppose that $H\geq 3$. For any $\bar\theta=(\theta/\alpha,1/\beta)$, we have $\|\bar\theta\|_2\leq 3/2$. Moreover, for any $a\in\mathcal{A}$ and $(i,j)\in\{(h,h+1):h\in[H]\}\cup \{(h,H+2):h\in[H]\}$, $\|\varphi(x_i,a,x_j)\|_2\leq 1+ \log(H-1)$.
\end{lemma}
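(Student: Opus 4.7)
The plan is to compute $\|\bar\theta\|_2$ exactly and then bound $(d-1)\bar\Delta$ via a logarithm inequality, using Lemma \ref{parameter-bound1-finite} as the key input. First I unfold the squared norm. Since $\theta\in\{\pm\bar\Delta\}^{d-1}$ gives $\|\theta\|_2^2 = (d-1)\bar\Delta^2$, and $\alpha^2 = \bar\Delta/(1+(d-1)\bar\Delta)$, $\beta^2 = 1/(1+(d-1)\bar\Delta)$, a direct substitution yields
\begin{equation*}
\|\bar\theta\|_2^2 \;=\; \frac{\|\theta\|_2^2}{\alpha^2}+\frac{1}{\beta^2} \;=\; (d-1)\bar\Delta\bigl(1+(d-1)\bar\Delta\bigr) + \bigl(1+(d-1)\bar\Delta\bigr) \;=\; \bigl(1+(d-1)\bar\Delta\bigr)^2,
\end{equation*}
so $\|\bar\theta\|_2 = 1+(d-1)\bar\Delta$. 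It then suffices to show $(d-1)\bar\Delta \leq 1/2$.

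For that bound, I rewrite the definition of $\bar\Delta$ as
\begin{equation*}
e^{(d-1)\bar\Delta} \;=\; \frac{(1-\delta)(\delta+(d-1)\Delta)}{\delta(1-\delta-(d-1)\Delta)} \;=\; 1 + \frac{(d-1)\Delta}{\delta\bigl(1-\delta-(d-1)\Delta\bigr)},
\end{equation*}
which is verified by clearing denominators. Applying $\log(1+x)\leq x$ and invoking Lemma \ref{parameter-bound1-finite} (which supplies $(d-1)\Delta\leq \delta/H = 1/H^2$), together with $\delta=1/H$, I obtain
\begin{equation*}
(d-1)\bar\Delta \;\leq\; \frac{(d-1)\Delta}{\delta\bigl(1-\delta-(d-1)\Delta\bigr)} \;\leq\; \frac{1/H^2}{(1/H)\bigl(1-1/H-1/H^2\bigr)},
\end{equation*}
which I then check is at most $1/2$ under the assumption $H\geq 3$.

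For the feature norm, the case $\varphi(x_h,a,x_{H+2})=0$ is immediate. For the remaining case, using $\|a\|_2^2=d-1$ for $a\in\{-1,1\}^{d-1}$, I compute
\begin{equation*}
\|\varphi(x_h,a,x_{h+1})\|_2^2 \;=\; \alpha^2(d-1) + \beta^2(\log(H-1))^2 \;=\; \frac{(d-1)\bar\Delta + (\log(H-1))^2}{1+(d-1)\bar\Delta}.
\end{equation*}
Splitting the fraction and using $(d-1)\bar\Delta/(1+(d-1)\bar\Delta)\leq 1$ and $1+(d-1)\bar\Delta\geq 1$, this is at most $1+(\log(H-1))^2$, which in turn is at most $(1+\log(H-1))^2$ since the cross term $2\log(H-1)\geq 2\log 2 > 0$ for $H\geq 3$.

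There is no substantive obstacle here; both parts are routine calculations. The only subtlety is that the bound $(d-1)\bar\Delta\leq 1/2$ needed for $\|\bar\theta\|_2\leq 3/2$ is close to tight at $H=3$ under the estimate $(d-1)\Delta\leq \delta/H$, so one must keep track of the $1-\delta-(d-1)\Delta$ factor in the denominator rather than discarding it.
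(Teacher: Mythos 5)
Your argument is the same as the paper's: compute $\|\bar\theta\|_2^2=(1+(d-1)\bar\Delta)^2$ exactly, bound $(d-1)\bar\Delta$ by $\tfrac{(d-1)\Delta}{\delta(1-\delta-(d-1)\Delta)}$ via $\log(1+x)\le x$, feed in $(d-1)\Delta\le\delta/H$ from Lemma~\ref{parameter-bound1-finite}, and handle the feature norm by splitting $\tfrac{(d-1)\bar\Delta+(\log(H-1))^2}{1+(d-1)\bar\Delta}\le 1+(\log(H-1))^2\le(1+\log(H-1))^2$. The feature-norm part is fine.

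However, the final numerical check you claim for the first part does not go through at $H=3$: with $\delta=1/H$ and $(d-1)\Delta\le 1/H^2$ your bound is
\begin{equation*}
(d-1)\bar\Delta \;\le\; \frac{1/H^2}{(1/H)\bigl(1-1/H-1/H^2\bigr)} \;=\; \frac{1}{H-1-1/H},
\end{equation*}
which at $H=3$ equals $\tfrac{1}{9}\big/\tfrac{5}{27}=\tfrac{3}{5}>\tfrac12$, giving only $\|\bar\theta\|_2\le 8/5$, not $3/2$. Since the hypothesis $K\ge H^3(d-1)^2/32$ makes $(d-1)\Delta\le\delta/H$ potentially tight, this cannot be repaired by a sharper use of the same inputs; one needs either $H\ge4$ or a relaxed constant. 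You flagged this as ``close to tight'' but asserted the check succeeds, which it does not at $H=3$. For what it is worth, the paper's own proof has the identical defect: it passes from $\tfrac{1}{H-(H+1)/H}$ to $\tfrac{1}{H-1}$, which is a false inequality since $(H+1)/H>1$. So your proposal faithfully reproduces the paper's route, including its one genuine (though inconsequential, constant-level) gap at $H=3$.
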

\begin{proof}
Recall that $\alpha=\sqrt{\bar\Delta/(1+(d-1)\bar\Delta)}$ and $\beta = \sqrt{1/(1+(d-1)\bar\Delta)}$. Moreover,
$$\|\bar\theta\|_2^2 = \frac{\|\theta\|_2^2}{\alpha^2}+\frac{1}{\beta^2}=(1+(d-1)\bar\Delta)^2.$$
Note that
$$(d-1)\bar\Delta =\log\left(\frac{(d-1)\Delta + \delta(1-\delta-(d-1)\Delta)}{\delta(1-\delta-(d-1)\Delta)}\right)\leq \frac{(d-1)\Delta }{\delta(1-\delta-(d-1)\Delta)}.$$
Since $(d-1)\Delta\leq \delta/H$ by Lemma \ref{parameter-bound1-finite}, it follows that
$$(d-1)\bar\Delta\leq \frac{1}{H}\cdot\frac{1}{1-\frac{H+1}{H}\delta}=\frac{1}{H-(H+1)/H}\leq \frac{1}{H-1},$$
which implies that $\|\bar\theta\|_2\leq 1+\bar\Delta\leq 3/2$.
Moreover, for any $(i,j)\in\{(h,h+1):h\in[H]\}\cup \{(h,H+2):h\in[H]\}$,
\begin{align*}
\|\varphi(x_i,a,x_j)\|^2&\leq \alpha^2\|a\|_2^2 +\beta^2(\log(H-1))^2\\
&=\frac{(d-1)\bar\Delta}{1+(d-1)\bar\Delta}+ \frac{(\log(H-1))^2}{1+(d-1)\bar\Delta}\\
&\leq (1+\log(H-1))^2,
\end{align*}
as required.
\end{proof}

\subsection{Proof of \Cref{thm:lb-finite}}

Let $\pi=\{\pi_h\}_{h=1}^H$ be a policy for the $H$-horizon MDP. Recall that the value function $V_1^\pi$ under policy $\pi$ is given by
$$V_1^\pi(x_1)=\mathbb{E}_{\theta,\pi}\left[\sum_{h=1}^Hr(s_h, a_h)\mid s_1=x_1\right]$$
where the expectation is taken with respect to the distribution that has dependency on the transition core $\theta$ and the policy $\pi$. Let $N_h$ denote the event that the process visits state $x_h$ in step $h$ and then enters $x_{H+2}$, i.e., $N_h=\{s_h=x_h, x_{h+1}=x_{H+2}\}$.
Then we have that
$$V_1^\pi(x_1)=\sum_{h=1}^{H-1}(H-h)\mathbb{P}_{\theta,\pi}(N_h\mid s_1=x_1).$$
Moreover, note that
\begin{align*}
&\mathbb{P}_{\theta,\pi}(s_{h+1}=x_{H+2}\mid s_h=x_h, s_1=x_1)\\
&=\sum_{a\in\mathcal{A}}\mathbb{P}_{\theta,\pi}(s_{h+1}=x_{H+2}\mid s_h=x_h,a_h=a)\mathbb{P}_{\theta,\pi}(a_h=a\mid s_h=x_h, s_1=x_1)\\
&=\sum_{a\in\mathcal{A}}f(a^\top \theta_h) \mathbb{P}_{\theta,\pi}(a_h=a\mid s_h=x_h, s_1=x_1)\\
&=\delta + \underbrace{\sum_{a\in\mathcal{A}}(f(a^\top\theta_h)-\delta)\mathbb{P}_{\theta,\pi}(a_h=a\mid s_h=x_h,s_1=x_1)}_{a_h}.
\end{align*}
Then it follows that
$$\mathbb{P}_{\theta,\pi}(s_{h+1}=x_{h+1}\mid s_h=x_h, s_1=x_1)= 1-\delta - a_h,$$
which implies that
$$\mathbb{P}_{\theta,\pi}(N_h) = (\delta + a_h)\prod_{j=1}^{h-1}(1-\delta-a_j).$$
Therefore, we deduce that
$$V_1^\pi(x_1)=\sum_{h=1}^H (H-h)(\delta+a_h)\prod_{j=1}^{h-1}(1-\delta-a_j).$$
Note that the optimal policy $\pi^*=\{\pi_h^*\}_{h=1}^H$ deterministically chooses the action maximizing $a^\top \theta_h$ at each step $h$. Recall that the maximum value of $a^\top\theta_h$ is $(d-1)\bar \Delta$ for any $h$, and moreover, $f((d-1)\bar\Delta)=\delta + (d-1)\Delta$. Therefore, under the optimal policy,
$$\mathbb{P}_{\theta,\pi^*}(s_{h+1}=x_{H+2}\mid s_h=x_h, s_1=x_1) = \delta + (d-1)\Delta$$
This further implies that the value function under the optimal policy is given by
$$V_1^*(x_1)=\sum_{h=1}^H (H-h)(\delta+(d-1)\Delta)(1-\delta-(d-1)\Delta)^{h-1}.$$
Next, let us define $S_i$ and $T_i$ for $i\in[H]$ as follows.
$$S_i=\sum_{h=i}^H(H-h)(\delta+a_h)\prod_{j=i}^{h-1}(1-\delta-a_j)\ \text{and}\  T_i = \sum_{h=i}^H(H-h)(\delta+(d-1)\Delta)(1-\delta-(d-1)\Delta)^{h-i}.$$
Following the induction argument of~\citep[Equation (C.25)]{zhou-mixture-finite-optimal} we may deduce that
$$T_1-S_1=\sum_{h=1}^{H-1}((d-1)\Delta - a_h)(H-h-T_{h+1})\prod_{j=1}^{h-1}(1-\delta-a_j).$$
Moreover, since $3(d-1)\Delta\leq \delta=1/H$ and $H\geq 3$ by Lemma \ref{parameter-bound1-finite}, it follows from~\citep[Equations (C.26)]{zhou-mixture-finite-optimal} that
$H-h-T_{h+1}\geq H/3$ for $h\leq H/2$. Moreover, as $a_j\leq(d-1)\Delta\leq \delta/3$, we have $\delta+a_j\leq 4\delta/3$. Since $H\geq 3$, it holds that 
$$\prod_{j=1}^{h-1}(1-\delta-a_j)\geq \left(1-\frac{4\delta}{3}\right)^H\geq \frac{1}{3}.$$
Consequently, we deduce that
\begin{equation}\label{eq:finite-bound-1}
V_1^*(x_1)-V_1^\pi(x_1) = T_1-S_1\geq \frac{H}{10}\sum_{h=1}^{H/2}((d-1)\Delta-a_h).
\end{equation}
From the right-hand side of~\eqref{eq:finite-bound-1}, we have that
$$(d-1)\Delta = \max_{a\in\mathcal{A}}\mu_h^\top a\quad\text{where}\quad \mu_h = \frac{\Delta}{\bar\Delta}\theta_h\in\{-\Delta, \Delta\}^{d-1}.$$
Moreover, note that
\begin{align*}
f(\theta_h^\top a) -\delta &\leq (\delta+(d-1)\Delta)\theta_h^\top a=\frac{\bar\Delta(\delta+(d-1)\Delta)}{\Delta}\mu_h^\top a\leq \frac{\delta+(d-1)\Delta}{\delta(1-\delta-(d-1)\Delta)}\mu_h^\top a
\end{align*}
where the first inequality is due to Lemma \ref{mvt-mnl-finite} and the second inequality holds because
$$\bar\Delta=\frac{1}{d-1}\log\left(1+ \frac{(d-1)\Delta}{\delta(1-\delta-(d-1)\Delta)}\right)\leq \frac{1}{d-1}\cdot \frac{(d-1)\Delta}{\delta(1-\delta-(d-1)\Delta)}= \frac{\Delta}{\delta(1-\delta-(d-1)\Delta)}.$$
Furthermore, as $(d-1)\Delta \leq \delta/H$ by Lemma \ref{parameter-bound1-finite}, we have
$$\frac{\delta+(d-1)\Delta}{\delta(1-\delta-(d-1)\Delta)}\leq \frac{(1+1/H)\delta}{\delta(1-(1+1/H)\delta)}=\frac{H^2+H}{H^2-H-1}=1+\frac{2H+1}{H^2-H-1}\leq 1+ \frac{3}{H}$$
where the first inequality holds because $(d-1)\Delta\leq \delta/H$, the first equality holds due to $\delta=1/H$, and the last inequality is by $H\geq 3$.
Then it follows that
$$f(\theta_h^\top a) -\delta \leq \frac{\delta+(d-1)\Delta}{\delta(1-\delta-(d-1)\Delta)}\mu_h^\top a\leq \mu_h^\top a + \frac{3}{H}\mu_h^\top a \leq \mu_h^\top a + \frac{3(d-1)\Delta}{H}$$
where the last inequality holds because $\mu_h\in\{-\Delta,\Delta\}^{d-1}$ and thus $\mu_h^\top a\leq (d-1)\Delta$. This in turn implies that
$$a_h\leq \frac{3(d-1)\Delta}{H} + \mu_h^\top \underbrace{\sum_{a\in\mathcal{A}}\mathbb{P}_{\theta,\pi}(a_h=a\mid s_h=x_h,s_1=x_1)\cdot a}_{\bar a_h^\pi}$$
Based on~\eqref{eq:finite-bound-1}, we get
\begin{equation}\label{eq:finite-bound-2}
V_1^*(x_1)-V_1^\pi(x_1) = T_1-S_1\geq \frac{H}{10}\sum_{h=1}^{H/2}\left(\max_{a\in\mathcal{A}}\mu_h^\top a - \mu_h^\top \bar a_h^\pi\right) - \frac{H(d-1)\Delta}{20}.
\end{equation}
Let $\mathfrak{A}$ be an algorithm that takes policy $\pi^k=\{\pi_h^k\}_{h=1}^H$ for episodes $k\in[K]$. Then we deduce from~\eqref{eq:finite-bound-2} that
\begin{equation}\label{eq:finite-bound-3}
    \begin{aligned}
        \mathbb{E}\left[\regret(M_\theta, \mathfrak{A},K)\right]&=\mathbb{E}\left[\sum_{k=1}^K \left(V_1^*(x_1)-V_1^{\pi^k}(x_1)\right)\right]\\
        &\geq \frac{H}{10}\sum_{h=1}^{H/2}\underbrace{\mathbb{E}\left[\sum_{k=1}^K\left(\max_{a\in\mathcal{A}}\mu_h^\top a - \mu_h^\top \bar a_h^{\pi^k}\right)\right]}_{I_h(\theta,\pi)} - \frac{H(d-1)}{20}K\Delta.
    \end{aligned}
\end{equation}
Here, we now argue that the term $I_h(\theta,\pi)$ corresponds to the regret under a bandit algorithm for a linear bandit problem. Let $\mathcal{L}_{\mu_h}$ denote the linear bandit problem parameterized by $\mu_h\in\{-\Delta,\Delta\}^{d-1}$ where the action set is $\mathcal{A}=\{-1,1\}^{d-1}$ and the reward distribution for taking action $a\in\mathcal{A}$ is a Bernoulli distribution $B(\delta + \mu_h^\top a)$. Recall that $\bar a_h^{\pi^k}$ is given by
$$\bar a_h^{\pi^k}=\sum_{a\in\mathcal{A}}\mathbb{P}_{\theta,\pi^k}(a_h=a\mid s_h=x_h,s_1=x_1)\cdot a.$$
Basically, $\mathfrak{A}$ corresponds to a bandit algorithm that takes action $a\in\mathcal{A}$ with probability $\mathbb{P}_{\theta,\pi^k}(a_h=a\mid s_h=x_h,s_1=x_1)$ in episode $k$. Let $a_h^{\pi^k}$ denote the random action taken by $\mathfrak{A}$. Then by linearity of expectation, 
$$I_h(\theta,\pi)=\mathbb{E}\left[\sum_{k=1}^K\left(\max_{a\in\mathcal{A}}\mu_h^\top a - \mu_h^\top  a_h^{\pi^k}\right)\right]$$
where the expectation is taken with respect to the randomness generated by $\mathfrak{A}$ and 
which is the expected pseudo-regret under $\mathfrak{A}$. The following lemma provides a lower bound on the expected pseudo-regret for the particular linear bandit instance.
\begin{lemma}\label{lemma:linear-bandit}{\em \citep[Lemma C.8]{zhou-mixture-finite-optimal}.} Suppose that $0<\delta\leq 1/3$ and $K\geq (d-1)^2/(2\delta)$. Let $\Delta = 4\sqrt{2\delta/K}$ and consider the linear bandit problems $\mathcal{L}_{\mu_h}$ described above. Then for any bandit algorithm $\mathfrak{A}$, there exists a parameter $\mu_h^*\in\{-\Delta,\Delta\}^{d-1}$ such that the expected pseudo-regret of $\mathfrak{A}$ over the first $K$ steps on $\mathcal{L}_{\mu_h^*}$ is at least $(d-1)\sqrt{K\delta}/(8\sqrt{2})$.
\end{lemma}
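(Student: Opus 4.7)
The plan is to establish this standard information-theoretic lower bound for hypercube linear bandits, adapted to the Bernoulli noise with base level $\delta$. By averaging, it suffices to show that $\frac{1}{|\Theta|}\sum_{\mu \in \Theta}\mathbb{E}_\mu[\mathrm{Regret}_K(\mathfrak{A},\mu)] \geq (d-1)\sqrt{K\delta}/(8\sqrt{2})$ where $\Theta = \{-\Delta,\Delta\}^{d-1}$, since some $\mu^*$ must attain the average. For each fixed $\mu$, the optimal action is the coordinate-wise sign $a^*(\mu)$ achieving value $(d-1)\Delta$, and the instantaneous pseudo-regret factors coordinate-wise as $2\Delta \sum_{j=1}^{d-1} \mathbf{1}\{a_{k,j} \neq \mathrm{sign}(\mu_j)\}$. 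Hence $\mathbb{E}_\mu[\mathrm{Regret}_K] = 2\Delta\sum_{j=1}^{d-1} \mathbb{E}_\mu[N_j^{\mathrm{wrong}}]$ where $N_j^{\mathrm{wrong}} := \#\{k \leq K : a_{k,j} \neq \mathrm{sign}(\mu_j)\}$.

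Next I would pair each $\mu \in \Theta$ with the neighbor $\mu^{(j)}$ obtained by flipping its $j$-th coordinate, and let $T_j := \#\{k \leq K : a_{k,j} = +1\}$, a measurable functional of the trajectory. Then $N_j^{\mathrm{wrong}}(\mu) + N_j^{\mathrm{wrong}}(\mu^{(j)}) = K$ pathwise, and the standard coupling bound yields $\mathbb{E}_\mu[N_j^{\mathrm{wrong}}] + \mathbb{E}_{\mu^{(j)}}[N_j^{\mathrm{wrong}}] \geq K\bigl(1 - \|\mathcal{P}_\mu - \mathcal{P}_{\mu^{(j)}}\|_{\mathrm{TV}}\bigr)$, where $\mathcal{P}_\mu$ is the joint law of the $K$-step action--reward trajectory under $\mathfrak{A}$ in environment $\mu$. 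Averaging over $\mu \in \Theta$ in these neighbor pairs and summing over $j$ reduces the lemma to an upper bound on the TV distance that is strictly less than $1$.

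For the TV bound, the KL chain rule gives $\mathrm{KL}(\mathcal{P}_\mu \| \mathcal{P}_{\mu^{(j)}}) = \sum_{k=1}^K \mathbb{E}_\mu[\mathrm{KL}(B(\delta + \mu^\top a_k)\,\|\,B(\delta + \mu^{(j)\top} a_k))]$, since both environments share the same history-conditional action distribution and differ only through the reward kernel. The two Bernoulli means differ by exactly $|2\mu_j a_{k,j}| = 2\Delta$, and Lemma \ref{lem:Lemma 20 in Auer_Jacsch2010} (applied with $\delta' = \delta + \mu^\top a_k$ and $\epsilon' = \pm 2\Delta$) bounds each summand by an explicit $C\Delta^2/\delta$, giving $\mathrm{KL}(\mathcal{P}_\mu\|\mathcal{P}_{\mu^{(j)}}) \leq CK\Delta^2/\delta$. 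Pinsker's inequality then yields $\|\mathcal{P}_\mu - \mathcal{P}_{\mu^{(j)}}\|_{\mathrm{TV}} \leq \sqrt{CK\Delta^2/(2\delta)}$.

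Combining, the averaged regret is at least $(d-1)\Delta K\bigl(1 - \sqrt{CK\Delta^2/(2\delta)}\bigr)$; substituting $\Delta = 4\sqrt{2\delta/K}$ makes $K\Delta^2/\delta = 32$, a constant, so the factor in parentheses becomes a universal positive constant independent of $K,d,\delta$, and a direct numerical check produces the claimed prefactor $1/(8\sqrt{2})$. The main obstacle is purely numerical: tracing the constant $C$ from Lemma \ref{lem:Lemma 20 in Auer_Jacsch2010} carefully enough that $1 - \sqrt{16C}$ lines up with $1/(8\sqrt{2})$, and verifying that the assumption $K \geq (d-1)^2/(2\delta)$ together with $\delta \leq 1/3$ places every $\delta + \mu^\top a_k$ in the admissible range $\delta' \in (0, 1/2]$ with $\epsilon' \leq 1-2\delta'$ required to invoke that lemma; in the borderline regime the Bretagnolle--Huber inequality can replace Pinsker to recover the constants without requiring the KL to be small.
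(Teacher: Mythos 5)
The paper does not prove this lemma at all: it is imported verbatim, with citation, as Lemma C.8 of \citet{zhou-mixture-finite-optimal}, so there is no in-paper proof to compare against. Your sketch reconstructs the standard Assouad-type argument (coordinate-wise regret decomposition, neighbor pairing over $\{-\Delta,\Delta\}^{d-1}$, divergence decomposition for the trajectory law, Bernoulli KL bound via Lemma \ref{lem:Lemma 20 in Auer_Jacsch2010}, then Pinsker), which is exactly the route the cited source takes, and every structural step you describe is sound.

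The one concrete issue is the numerical endgame you yourself flag. With $\Delta=4\sqrt{2\delta/K}$ as literally stated, $K\Delta^2/\delta=32$, so Pinsker gives a total-variation bound larger than $1$ and the argument is vacuous; Bretagnolle--Huber then gives $1-\mathrm{TV}\geq\tfrac12 e^{-\mathrm{KL}}$ with $\mathrm{KL}$ a constant of order $64$, which is far too small to recover the prefactor $1/(8\sqrt2)$. Moreover $(d-1)\Delta$ can then exceed $\delta$, so some Bernoulli means $\delta+\mu^\top a$ leave $(0,1)$ and Lemma \ref{lem:Lemma 20 in Auer_Jacsch2010} is not applicable. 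This is a transcription artifact in the statement rather than a flaw in your plan: the instance actually constructed in \Cref{sec:lb-finite} uses $\Delta=1/(4\sqrt{2HK})=\tfrac14\sqrt{\delta/(2K)}$ with $\delta=1/H$, for which $K\Delta^2/\delta=1/32$, the condition $K\geq(d-1)^2/(2\delta)$ forces $(d-1)\Delta\leq\delta/4$ so all Bernoulli parameters are admissible, and plain Pinsker yields $\mathrm{TV}\leq 1/(4\sqrt2)$, whence the averaged regret is at least $(d-1)\Delta K(1-\mathrm{TV})\geq (d-1)\sqrt{K\delta}/(8\sqrt2)$ as claimed. With that corrected value of $\Delta$ your argument closes completely.
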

Applying Lemma~\ref{lemma:linear-bandit} to~\eqref{eq:finite-bound-3}, we deduce that
\begin{align*}
        \mathbb{E}\left[\regret(M_\theta, \mathfrak{A},K)\right]&\geq \frac{H^{3/2}(d-1)\sqrt{K}}{160\sqrt{2}} - \frac{H^{1/2}(d-1)\sqrt{K}}{80\sqrt{2}}\\
        &\geq \frac{H^{3/2}(d-1)\sqrt{K}}{160\sqrt{2}} - \frac{H^{3/2}(d-1)\sqrt{K}}{240\sqrt{2}}\\
        &=\frac{H^{3/2}(d-1)\sqrt{K}}{480\sqrt{2}}
\end{align*}
where the second inequality holds because $H\geq 3$.

\section{Lower Bound Proofs for the Infinite-Horizon Setting}\label{sec:lb-infinite}

Recall that the transition core $\bar\theta$ is given by 
$$\bar \theta = \left(\frac{\theta}{\alpha}, \frac{1}{\beta}\right)\quad\text{where}\quad \theta\in\left\{-\frac{\bar \Delta}{d-1},\frac{\bar\Delta}{d-1}\right\}^{d-1},\quad \bar \Delta= \log\left(\frac{(1-\delta)(\delta+\Delta)}{\delta(1-\delta-\Delta)}\right),$$
and $\Delta=(d-1)/(45\sqrt{(2/5)(T/\delta)\log 2})$. Moreover, we set $\delta$ as
$$\delta = \begin{cases}
1/D&\text{for the average-reward case},\\
1-\gamma & \text{for the discounted-reward case}.
\end{cases}$$
The following lemma characterizes the sizes of parameters $\delta$ and $\Delta$ under the setting of our hard-to-learn MDP.
\begin{lemma}\label{parameter-bound1}
Suppose that $d\geq 2$, $\delta \leq 1/101$, $T\geq 45(d-1)^2/\delta$. Then the following statements hold.
$$100\Delta\leq \delta,\quad 2\delta+ \Delta\leq1, \quad  \Delta\leq \delta (1-\delta),\quad \frac{1}{\delta}\leq  \left(\frac{3}{2}\cdot \frac{4}{5}\cdot \left(\frac{99}{101}\right)^4-1\right)T.$$
\end{lemma}
\begin{proof}
If $T\geq 45(d-1)^2/\delta$, then $T\geq (100/15)^2(d-1)^2/\delta$. Note that $\sqrt{(2/5)\log 2}>1/3$. Then $100\Delta<(100/15)(d-1)/\sqrt{T/\delta}$, and as $T\geq (100/15)^2(d-1)^2/\delta$, we get that $100\Delta<\delta$. Moreover, since $\delta\leq1/3$, we also have that $2\delta +\Delta\leq 1$ and $\Delta\leq \delta(1-\delta)$. Moreover, we know that 
$$\frac{3}{2}\cdot \frac{4}{5}\cdot \left(\frac{99}{101}\right)^4>\frac{11}{10}.$$
Since $T\geq 45(d-1)^2/\delta\geq 10/\delta$, the last inequality holds.
\end{proof}

The following lemma provides upper bounds on $L_\varphi$ and $L_\theta$.
\begin{lemma}\label{parameter-bound2}
For any $\bar\theta=(\theta/\alpha,1/\beta)$, we have $\|\bar\theta\|_2\leq 100/99$. Moreover, for any $a\in\mathcal{A}$ and $i,j\in\{0,1\}$, $\|\varphi(x_i,a,x_j)\|_2\leq 1+ \log((1/\delta)-1)$.
\end{lemma}
\begin{proof}
Recall that $\alpha=\sqrt{\bar\Delta/((d-1)(1+\bar\Delta))}$ and $\beta = \sqrt{1/(1+\bar\Delta)}$. Moreover,
$$\|\bar\theta\|_2^2 = \frac{\|\theta\|_2^2}{\alpha^2}+\frac{1}{\beta^2}=(1+\bar\Delta)^2.$$
Note that
$$\bar\Delta = \log\left(\frac{(1-\delta)(\delta+\Delta)}{\delta(1-\delta-\Delta)}\right)=\log\left(\frac{\Delta + \delta(1-\delta-\Delta)}{\delta(1-\delta-\Delta)}\right)\leq \frac{\Delta }{\delta(1-\delta-\Delta)}.$$
Then it follows from Lemma \ref{parameter-bound1} that
$$\bar\Delta\leq \frac{1}{100}\cdot\frac{1}{1-\frac{101}{100}\delta}=\frac{1}{100-101\delta}\leq \frac{1}{99},$$
which implies that $\|\bar\theta\|_2\leq 1+\bar\Delta\leq 100/99$.
Moreover, for any $i,j\in\{0,1\}$,
$$\|\varphi(x_i,a,x_j)\|^2\leq \alpha^2\|a\|_2^2 +\beta^2(\log((1/\delta)-1))^2=\frac{\bar\Delta}{1+\bar\Delta}+ \frac{(\log((1/\delta)-1))^2}{1+\bar\Delta},$$
in which case, we have
$(1+\log((1/\delta)-1))^2\leq (1+\log((1/\delta)-1))^2$, as required.
\end{proof}

\subsection{Linear Approximation of the Multinomial Logistic Model}

Let us define a function $f:\mathbb{R}\to \mathbb{R}$ as
$$f(x) = \frac{1}{1+ \frac{1-\delta}{\delta}\exp(-x)}.$$
The derivative of $f$ is given by
$$f'(x) = \frac{\frac{1-\delta}{\delta}\exp(-x)}{\left(1+ \frac{1-\delta}{\delta}\exp(-x)\right)^2}=f(x) -f(x)^2.$$
The following lemma bridges the multinomial logistic function $x$ and a linear function based on the mean value theorem.
\begin{lemma}\label{mvt-mnl}
For any $x,y\in[-\bar\Delta,\bar\Delta]$ with $x\geq y$, we have
$$0\leq f(x)-f(y)\leq (\delta +\Delta) (x-y).$$
\end{lemma}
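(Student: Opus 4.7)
The plan is to use a mean-value argument to bound $f(x)-f(y)$ by controlling $f'$ on the interval $[-\bar\Delta,\bar\Delta]$. First, I would observe that the logistic derivative identity $f'(x)=f(x)-f(x)^2 = f(x)(1-f(x))$ (which is stated right before the lemma) shows $f'>0$ everywhere, so $f$ is strictly increasing on all of $\mathbb{R}$. This immediately yields the lower bound $f(x)-f(y)\ge 0$ whenever $x\ge y$.

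For the upper bound, I would invoke the mean value theorem to obtain some $\xi\in[y,x]\subseteq[-\bar\Delta,\bar\Delta]$ with $f(x)-f(y)=f'(\xi)(x-y)$. Since $f(\xi)\in(0,1)$, a trivial bound gives $f'(\xi)=f(\xi)(1-f(\xi))\le f(\xi)$. Combined with monotonicity of $f$, this reduces the whole problem to showing $f(\bar\Delta)\le \delta+\Delta$.

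The key identity, which is just an algebraic simplification using the definition $\bar\Delta=\log\bigl(\tfrac{(1-\delta)(\delta+\Delta)}{\delta(1-\delta-\Delta)}\bigr)$, is
\[
f(\bar\Delta)=\frac{1}{1+\frac{1-\delta}{\delta}\exp(-\bar\Delta)}=\frac{1}{1+\frac{1-\delta-\Delta}{\delta+\Delta}}=\delta+\Delta.
\]
Putting the pieces together, for any $\xi\in[-\bar\Delta,\bar\Delta]$ we have $f'(\xi)\le f(\xi)\le f(\bar\Delta)=\delta+\Delta$, and therefore $f(x)-f(y)=f'(\xi)(x-y)\le(\delta+\Delta)(x-y)$, completing the proof.

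The argument is essentially routine; the only slightly delicate step is verifying the closed-form evaluation $f(\bar\Delta)=\delta+\Delta$, but this is a direct substitution. Note that the definition of $\bar\Delta$ in the paper is specifically engineered to make this identity hold, which is what makes the MNL transition at the extreme parameter exactly realize the target probability $\delta+\Delta$. This linearization will later let the authors relate the MNL construction to a linear mixture MDP style lower bound.
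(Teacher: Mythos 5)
Your proof is correct and follows essentially the same route as the paper: the mean value theorem gives $f(x)-f(y)=f'(\xi)(x-y)$, and the chain $f'(\xi)=f(\xi)-f(\xi)^2\le f(\xi)\le f(\bar\Delta)=\delta+\Delta$ is exactly the paper's argument. Your explicit verification of the identity $f(\bar\Delta)=\delta+\Delta$ (which the paper asserts without computation) and your remark that $f'>0$ yields the lower bound are welcome additions but do not change the substance.
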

\begin{proof}
By the mean value theorem, there exists $y\leq z\leq x$ such that $f(x)-f(y)= f'(z)(x-y)$. Note that
$f'(z)= f(z) - f(z)^2\leq f(z)\leq f(\bar\Delta) = \delta + \Delta$ where the last equality holds by our choice of $\bar\Delta$.
\end{proof}
\noindent
By our choice of feature vector $\varphi$ and transition core $\bar \theta=(\theta/\alpha, 1/\beta)$, we have
$$p(x_1\mid x_0, a) = \frac{1}{1+((1/\delta)-1)\exp(-a^\top\theta)}=f(a^\top\theta) \quad\text{and}\quad p(x_0\mid x_1, a)=\delta=f(0).$$

\subsection{Upper Bound on the Number of Visits to State 1}
For simplicity, we introduce notation $p_{\theta}$ given by
$$p_{ \theta}(x_j\mid x_i, a) := p(x_j\mid x_i, a, \bar\theta)$$
for any $i,j\in\{0,1\}$.
Note that inducing a higher probability of transitioning to $x_1$ from $x_0$ results in a larger reward. This means that the optimal policy chooses action $a$ that maximizes $a^\top \theta$ so that $p(x_1\mid x_0, a)$ is maximized. Then under the optimal policy, we take action $a_\theta$ such that $a_\theta^\top \theta =  \bar\Delta$. Hence, the transition probability under the optimal policy is given by
$$p_\theta(x_1\mid x_0, a_\theta)= f(\bar\Delta) = \delta + \Delta$$
where the second equality follows from our choice of $\bar\Delta$. 

To provide a lower bound, it is sufficient to consider deterministic stationary policies~\citep{Auer2002,puterman2014markov}. Let $\pi$ be a deterministic (non-stationary) policy. Let $\mathcal{P}_\theta$ denote the distribution over $\mathcal{S}^T$ where $s_1=x_0$, $a_t$ is determined by $\pi$, and $s_{t+1}$ is sampled from $p_{\theta}(\cdot\mid s_t,a_t)$. Let $\mathbb{E}_{\theta}$ denote the expectation taken over $\mathcal{P}_\theta$. Moreover, we define $N_i$ for $i\in\{0,1\}$ and $N_0^a$ as the number of times $x_i$ is visited for $i\in\{0,1\}$ and the number of time steps in which state $x_0$ is visited and action $a$ is chosen. We also define $N_0^{\mathcal{V}}$ for $\mathcal{V}\subseteq \mathcal{A}$ as the number of time steps in which state $x_0$ is visited and an action from the set $\mathcal{V}$ is chosen.

In this section, we analyze term $\mathbb{E}_{\theta}N_1$ and provide an upper bound on it, which is crucial for coming up the desired lower bounds for the both average-reward and discounted-reward settings. We prove the following lemma that is analogous to \citep[Lemma C.2]{yuewu2022}.
\begin{lemma}\label{lemma:lb-infinite-1}
Suppose that $2\delta+\Delta\leq 1$, $\Delta\leq \delta(1-\delta)$, and $$\frac{1}{\delta}\leq \left(\frac{3}{2}\cdot \frac{4}{5}\cdot \left(\frac{99}{101}\right)^4-1\right)T.$$ Then
$$
\mathbb{E}_\theta N_1\leq\frac{T}{2}+ \frac{\delta+\Delta}{2\delta} \sum_{a\in\mathcal{A}}a^\top\theta\cdot \mathbb{E}_\theta N_0^a\quad\text{and}\quad \mathbb{E}_\theta N_0\leq\left(\frac{99}{101}\right)^4\cdot\frac{4}{5}T.$$
\end{lemma}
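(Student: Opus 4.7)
The plan is to start from a flow-balance identity for visits to $x_1$. Let $N_{01}$ and $N_{10}$ denote the counts of transitions $x_0\to x_1$ and $x_1\to x_0$ along the trajectory; on every sample path they differ by at most one. Taking expectations and substituting the per-step transition probabilities (namely $\delta$ out of $x_1$ regardless of action, and $f(a^\top\theta)$ out of $(x_0,a)$) yields
$$\bigl|\delta\,\mathbb{E}_\theta N_1 - \sum_{a \in \mathcal{A}} f(a^\top\theta)\,\mathbb{E}_\theta N_0^a\bigr| \leq 1,$$
and both claims of the lemma will come from inserting complementary estimates of $f(a^\top\theta)$ into this balance together with the identity $\mathbb{E}_\theta N_0 + \mathbb{E}_\theta N_1 = T$.

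For the first inequality I would invoke the linear upper envelope $f(u) \leq \delta + (\delta+\Delta) u$ supplied by Lemma~\ref{mvt-mnl}. Multiplying through by $\mathbb{E}_\theta N_0^a$, summing across $a$, and combining with the balance inequality yields $2\delta\,\mathbb{E}_\theta N_1 \leq \delta T + (\delta+\Delta)\sum_a (a^\top\theta)\,\mathbb{E}_\theta N_0^a + O(1)$ after substituting $\mathbb{E}_\theta N_0 = T - \mathbb{E}_\theta N_1$. Dividing through by $2\delta$ then reproduces the stated form, with the remaining $O(1/\delta)$ additive slack absorbed into the $T/2$ term via the hypothesis on $T$.

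For the second inequality I would use the matching lower envelope. Since $f$ is monotone and $a^\top\theta \in [-\bar\Delta,\bar\Delta]$, we have $f(a^\top\theta)\geq f(-\bar\Delta)$, and a direct manipulation of the closed form $f(-\bar\Delta)=\delta^{2}(1-\delta-\Delta)/(\delta(1-\delta)+\Delta(1-2\delta))$ combined with $\Delta\leq\delta(1-\delta)$ gives $f(-\bar\Delta)\geq\delta/2$. Plugging this into the balance produces $(\delta/2)\,\mathbb{E}_\theta N_0 \leq \delta\,\mathbb{E}_\theta N_1 + 1$, and after substituting $\mathbb{E}_\theta N_1 = T - \mathbb{E}_\theta N_0$ this rearranges to $\mathbb{E}_\theta N_0 \leq 2T/3 + 2/(3\delta)$. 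The threshold $1/\delta \leq ((3/2)(4/5)(99/101)^4-1)T$ is calibrated precisely so that $2/(3\delta) \leq ((4/5)(99/101)^4 - 2/3)T$, which closes the bound.

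The main obstacle I anticipate lies in step one: Lemma~\ref{mvt-mnl} yields $f(u)-\delta \leq (\delta+\Delta) u$ cleanly only for $u \geq 0$, and for $u<0$ the same lemma produces a matching lower bound instead, so the pointwise discrepancy $f(u) - \delta - (\delta+\Delta) u$ is non-negative on $\{u<0\}$. Closing the gap requires either splitting the sum over $\{a:a^\top\theta\geq 0\}$ and its complement and exploiting the symmetry of $\mathcal{A}=\{-1,1\}^{d-1}$ to pair $a$ with $-a$ in the visit counts, or verifying that the residual contribution is dominated by the additive slack already afforded by the hypothesis on $T$.
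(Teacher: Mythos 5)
Your strategy is essentially the paper's: relate the inflow $\sum_a f(a^\top\theta)\,\mathbb{E}_\theta N_0^a$ to the outflow $\delta\,\mathbb{E}_\theta N_1$, combine with $\mathbb{E}_\theta N_0+\mathbb{E}_\theta N_1=T$, and then insert the upper linearization $f(u)-f(0)\le(\delta+\Delta)u$ for the first claim and $f(a^\top\theta)\ge f(-\bar\Delta)\ge\delta/2$ (your algebraic criterion is exactly Lemma~\ref{lb-min-prob}, i.e.\ equivalence with $\Delta\le\delta(1-\delta)$) plus the hypothesis on $1/\delta$ for the second. The one structural difference is how the balance is obtained: you count transitions pathwise via $|N_{01}-N_{10}|\le 1$, whereas the paper conditions and telescopes to the exact identity~\eqref{lb-infinite-N1-0}, $\delta\,\mathbb{E}_\theta N_1=\sum_a f(a^\top\theta)\,\mathbb{E}_\theta N_0^a-\delta\psi_\theta$ with an explicit boundary term $\psi_\theta\ge 0$. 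Because $\psi_\theta\ge 0$ can be dropped in exactly the direction needed, the paper reaches~\eqref{lb-infinite-N1} with \emph{no} additive remainder; your two-sided balance leaves an $O(1/\delta)$ remainder (or $O(1)$ if you use only the one-sided inequality $N_{01}\ge N_{10}$), and the lemma as stated has no such term, so "absorbing it into $T/2$" is not available without changing the statement. For the second inequality your bookkeeping ($\mathbb{E}_\theta N_0\le\tfrac{2T}{3}+\tfrac{2}{3\delta}$ and then the hypothesis closing the constant) is the same computation the paper does.

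The obstacle you raise at the end is real, and it is worth noting that the paper's proof does not address it either: it invokes Lemma~\ref{mvt-mnl} to assert $f(a^\top\theta)-f(0)\le(\delta+\Delta)a^\top\theta$ for \emph{every} $a$, but for $a^\top\theta<0$ the mean value theorem gives $f(a^\top\theta)-f(0)=f'(z)\,a^\top\theta\ge(\delta+\Delta)\,a^\top\theta$, the reverse direction (to first order $f(u)-\delta\approx\delta(1-\delta)u>(\delta+\Delta)u$ when $u<0$). Of your two proposed repairs, the symmetry/pairing one does not work: $\mathbb{E}_\theta N_0^a$ and $\mathbb{E}_\theta N_0^{-a}$ are whatever the algorithm makes them and need not be comparable. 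The residual-domination one does work: since $f(-\bar\Delta)=\delta-\Delta(1+O(\delta))$, one has $f'(z)\ge\delta-2\Delta-2\delta^2$ on $[-\bar\Delta,0]$, so the per-action defect is at most $(2\Delta+2\delta^2)\bar\Delta=O(\Delta^2/\delta+\delta\Delta)$, and its total contribution to $\tfrac{1}{2\delta}\sum_a(\cdot)\mathbb{E}_\theta N_0^a$ is $O(\Delta^2T/\delta^2+\Delta T)=O(d^2D+d\sqrt{T/D})$, which is lower order than the target $\Omega(d\sqrt{DT})$ under $T\ge 45(d-1)^2D$ --- though again it perturbs the lemma's statement by a lower-order additive term. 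In short: your proposal proves the lemma only up to additive lower-order corrections, but on the delicate sign issue it is more careful than the paper's own argument.
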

\begin{proof}
See Lemma \ref{sec:lemma:lb-infinite-1}.
\end{proof}
Note that since $a\in\{-1,1\}^{d-1}$,
$$(\delta+\Delta)a^\top\theta \leq (\delta+\Delta)\frac{\bar\Delta}{d-1}\sum_{j=1}^{d-1}\mathbf{1}\left\{\mathrm{sign}(a_j)=\mathrm{sign}(\theta_j)\right\}.$$
Moreover, 
$$\bar\Delta = \log\left(\frac{(1-\delta)(\delta+\Delta)}{\delta(1-\delta-\Delta)}\right)=\log\left(\frac{\Delta + \delta(1-\delta-\Delta)}{\delta(1-\delta-\Delta)}\right)\leq \frac{\Delta }{\delta(1-\delta-\Delta)}$$
where the inequality holds because $1+x\leq \exp(x)$ for any $x\in\mathbb{R}$. Moreover, since $100\Delta\leq \delta$ and $\delta\leq 1/101$ by Lemma~\ref{parameter-bound1}, we have 
\begin{equation}\label{lb-infinite-auxiliary}(\delta+\Delta)\bar \Delta\leq \frac{(\delta+\Delta) }{\delta(1-\delta-\Delta)}\leq \frac{101}{100}\cdot \frac{1}{1-\frac{101}{100}\delta}\cdot\Delta\leq \frac{101}{99}\Delta.
\end{equation}
Then it follows from Lemma \ref{lemma:lb-infinite-1} that
\begin{align}\label{lb-infinite-intermediate}
\begin{aligned}
\frac{1}{|\Theta|}\sum_{\theta\in\Theta}\mathbb{E}_\theta N_1&\leq \frac{T}{2}+ \frac{1}{|\Theta|}\sum_{\theta\in\Theta}\frac{\Delta}{\delta(d-1)}\sum_{a\in\mathcal{A}} \sum_{j=1}^{d-1}\mathbf{1}\left\{\mathrm{sign}(a_j)=\mathrm{sign}(\theta_j)\right\}\frac{101\mathbb{E}_{\theta}N_0^a}{198}\\
&\leq \frac{T}{2}+ \frac{101\Delta}{198\delta|\Theta|(d-1)} \sum_{j=1}^{d-1}\sum_{\theta\in\Theta}\sum_{a\in\mathcal{A}}\mathbb{E}_{\theta}\left[\mathbf{1}\left\{\mathrm{sign}(a_j)=\mathrm{sign}(\theta_j)\right\}N_0^a\right].
\end{aligned}
\end{align}

For a given $\theta$ and a coordinate  $j\in[d-1]$, we consider $\theta'$ that differs from $\theta$ only in the $j$th coordinate. Then we have
\begin{align*}&\mathbb{E}_{\theta}\left[\mathbf{1}\left\{\mathrm{sign}(a_j)=\mathrm{sign}(\theta_j)\right\}N_0^a\right]+\mathbb{E}_{\theta'}\left[\mathbf{1}\left\{\mathrm{sign}(a_j)=\mathrm{sign}(\theta_j')\right\}N_0^a\right]\\
&=\mathbb{E}_{\theta'}N_0^a + \mathbb{E}_{\theta}\left[\mathbf{1}\left\{\mathrm{sign}(a_j)=\mathrm{sign}(\theta_j)\right\}N_0^a\right]-\mathbb{E}_{\theta'}\left[\mathbf{1}\left\{\mathrm{sign}(a_j)=\mathrm{sign}(\theta_j)\right\}N_0^a\right]
\end{align*}
because $\mathbf{1}\left\{\mathrm{sign}(a_j)=\mathrm{sign}(\theta_j)\right\}+\mathbf{1}\left\{\mathrm{sign}(a_j)=\mathrm{sign}(\theta_j')\right\}=1$. Summing up this equality for $\theta\in\Theta$ and $a\in\mathcal{A}$, we obtain
\begin{align*}
&2\sum_{\theta\in \Theta}\sum_{a\in\mathcal{A}}\mathbb{E}_{\theta}\left[\mathbf{1}\left\{\mathrm{sign}(a_j)=\mathrm{sign}(\theta_j)\right\}N_0^a\right]\\
&=\sum_{\theta\in \Theta}\mathbb{E}_{\theta'}N_0 + \sum_{\theta\in \Theta}\mathbb{E}_{\theta}\left[\sum_{a\in\mathcal{A}}\mathbf{1}\left\{\mathrm{sign}(a_j)=\mathrm{sign}(\theta_j)\right\}N_0^a\right]\\
&\quad - \sum_{\theta\in \Theta}\mathbb{E}_{\theta'}\left[\sum_{a\in\mathcal{A}}\mathbf{1}\left\{\mathrm{sign}(a_j)=\mathrm{sign}(\theta_j)\right\}N_0^a\right]\\
&=\sum_{\theta\in \Theta}\mathbb{E}_{\theta'}N_0+ \sum_{\theta\in \Theta}\left(\mathbb{E}_{\theta}\left[N_0^{\mathcal{A}_j}\right]-\mathbb{E}_{\theta'}\left[N_0^{\mathcal{A}_j}\right]\right)
\end{align*}
where $\mathcal{A}_j$ is the set of all actions $a$ which satisfy $\mathbf{1}\{ \mathrm{sign}(a_j)=\mathrm{sign}(\theta_j)\}$. Here, to provide an upper bound on the term $\mathbb{E}_{\theta}[N_0^{\mathcal{A}_j}]-\mathbb{E}_{\theta'}[N_0^{\mathcal{A}_j}]$, we apply the following version of Pinsker's inequality due to~\cite{Auer_Jaksch2010}.
\begin{lemma}\label{lemma:pinsker}{\em \citep[Equation (49)]{Auer_Jaksch2010}}. 
Let $s = \{s_1,\dots,s_T\} \in \mathcal{S}^{T}$ denote the sequence of the observed states from time step $1$ to $T$. Then for any two distributions $\mathcal{P}_1$ and $\mathcal{P}_2$ over $\mathcal{S}^{T}$ and any bounded function $f: \mathcal{S}^{T}\rightarrow [0, B]$, we have
\begin{align*}
    \mathbb{E}_{\mathcal{P}_1} f(s) - \mathbb{E}_{\mathcal{P}_2} f(s) \leq \sqrt{\log 2/2}B\sqrt{\mathrm{\mathrm{KL}}(\mathcal{P}_2||\mathcal{P}_1)}
\end{align*}
where $\mathrm{KL}(\mathcal{P}_2||\mathcal{P}_1)$ is the Kullback–Leibler divergence of $\mathcal{P}_2$ from $\mathcal{P}_1$.
\end{lemma}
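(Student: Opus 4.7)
The plan is to decouple the inequality into two classical components: first bound the expectation difference by the total variation distance between $\mathcal{P}_1$ and $\mathcal{P}_2$, then invoke Pinsker's inequality to pass from total variation to KL divergence. I would begin by normalizing, setting $g = f/B$ so that $g \colon \mathcal{S}^T \to [0,1]$, reducing the target inequality to $\mathbb{E}_{\mathcal{P}_1} g - \mathbb{E}_{\mathcal{P}_2} g \leq \sqrt{\log 2/2}\cdot\sqrt{\mathrm{KL}(\mathcal{P}_2 \| \mathcal{P}_1)}$. Using the variational characterization of total variation distance, I would write
\begin{align*}
\mathbb{E}_{\mathcal{P}_1} g - \mathbb{E}_{\mathcal{P}_2} g
&= \sum_{s \in \mathcal{S}^T} g(s) \bigl(\mathcal{P}_1(s) - \mathcal{P}_2(s)\bigr) \\
&\leq \sum_{s : \mathcal{P}_1(s) \geq \mathcal{P}_2(s)} \bigl(\mathcal{P}_1(s) - \mathcal{P}_2(s)\bigr)
= \|\mathcal{P}_1 - \mathcal{P}_2\|_{\mathrm{TV}},
\end{align*}
where the inequality uses $g \in [0,1]$ and the last equality follows from $\sum_s (\mathcal{P}_1(s) - \mathcal{P}_2(s)) = 0$ combined with the definition of $\|\cdot\|_{\mathrm{TV}}$ as half the $\ell_1$-distance between probability measures.

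Next, I would apply Pinsker's inequality to bound the total variation distance, $\|\mathcal{P}_1 - \mathcal{P}_2\|_{\mathrm{TV}} \leq \sqrt{\log 2/2}\cdot\sqrt{\mathrm{KL}(\mathcal{P}_2 \| \mathcal{P}_1)}$, and multiplying by $B$ yields the claim. Nothing in the argument uses the product structure of $\mathcal{S}^T$; the statement is really a general fact about any two probability measures on a common space, so the fact that $s$ is a state sequence of length $T$ plays no role until this lemma is applied to compute $\mathrm{KL}(\mathcal{P}_\theta \| \mathcal{P}_{\theta'})$ via a chain-rule decomposition over time steps.

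The main subtlety is the specific constant $\sqrt{\log 2/2}$, which is strictly smaller than the standard Pinsker constant $\sqrt{1/2}$ when $\log$ denotes the natural logarithm. The tighter constant arises naturally if KL is measured in bits (log base $2$): the conversion $\mathrm{KL}_{\ln} = \ln 2\cdot \mathrm{KL}_{\log_2}$ injects a $\ln 2$ factor inside the square root, turning standard Pinsker into $\|\mathcal{P}_1 - \mathcal{P}_2\|_{\mathrm{TV}} \leq \sqrt{(\ln 2)/2}\cdot\sqrt{\mathrm{KL}_{\log_2}}$, which matches the stated bound. I expect the main obstacle to be aligning the log-base convention with that of Auer--Jaksch to reproduce the exact constant; the structural steps—reduction to total variation and application of Pinsker—are otherwise standard.
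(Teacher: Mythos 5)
Your proof is correct and is the standard derivation: bound the expectation gap by $B$ times the total variation distance (using $0\le f\le B$ and that the signed measure $\mathcal{P}_1-\mathcal{P}_2$ sums to zero), then apply Pinsker's inequality; this is exactly how the cited inequality is obtained in \citet{Auer_Jaksch2010}, and the paper itself offers no proof beyond the citation. Your observation about the constant is also the right one to make: $\sqrt{\log 2/2}$ (rather than $\sqrt{1/2}$) is correct precisely when the KL divergence is measured in bits, so the only remaining bookkeeping is to confirm that the log base used here matches the one used in the paper's subsequent KL computation.
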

By~Lemma \ref{lemma:pinsker}, it holds that
\begin{align*}
2\sum_{\theta\in \Theta}\sum_{a\in\mathcal{A}}\mathbb{E}_{\theta}\left[\mathbf{1}\left\{\mathrm{sign}(a_j)=\mathrm{sign}(\theta_j)\right\}N_0^a\right]\leq \sum_{\theta\in \Theta}\mathbb{E}_{\theta'}N_0+ \sum_{\theta\in \Theta}\sqrt{\log 2/2}T\sqrt{\mathrm{KL}(\mathcal{P}_{\theta'}\parallel\mathcal{P}_{\theta})}.
\end{align*}
Here, we need to provide an upper bound on the KL divergence term $\mathrm{KL}(\mathcal{P}_{\theta'}\parallel\mathcal{P}_{\theta})$. For this, we prove the following lemma which is analogous to \citep[Lemma C.4]{yuewu2022}.
\begin{lemma}\label{lemma:kl-infinite}
    Suppose that $\theta$ and $\theta'$ only differ in the $j$th coordinate and $100\Delta\leq\delta \leq 1/101$. Then we have the following bound for the KL divergence of $\mathcal{P}_{\theta'}$ from $\mathcal{P}_{\theta}$.
\begin{align*}
   \mathrm{KL}(\mathcal{P}_{\theta'}\parallel\mathcal{P}_{\theta}) \leq \left(\frac{101}{99}\right)^2\frac{16 \Delta^2}{(d-1)^2 \delta} \mathbb{E}_{\theta'}{N}_0
\end{align*}
\end{lemma}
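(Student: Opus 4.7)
}
The plan is to apply the chain-rule decomposition of KL divergence for trajectories induced by a deterministic algorithm. Since $\mathfrak{A}$ is deterministic, both $\mathcal{P}_\theta$ and $\mathcal{P}_{\theta'}$ pick the same action conditioned on the history, so the KL telescopes into a sum of per-step transition KL divergences:
\begin{equation*}
\mathrm{KL}(\mathcal{P}_{\theta'}\parallel\mathcal{P}_\theta)=\sum_{t=1}^{T}\mathbb{E}_{\theta'}\!\left[\mathrm{KL}\bigl(p_{\theta'}(\cdot\mid s_t,a_t)\,\big\|\,p_{\theta}(\cdot\mid s_t,a_t)\bigr)\right].
\end{equation*}
The key observation is that, in the hard MDP of \Cref{fig:infinite}, the transition from state $x_1$ equals $1/D$ regardless of the action and is independent of $\theta$, so the KL contribution at such time steps vanishes. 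Only visits to $x_0$ matter, reducing the sum to
\begin{equation*}
\mathrm{KL}(\mathcal{P}_{\theta'}\parallel\mathcal{P}_\theta)=\sum_{a\in\mathcal{A}}\mathbb{E}_{\theta'}[N_0^a]\cdot\mathrm{KL}\bigl(\mathrm{Ber}(p_{\theta'}(x_1\mid x_0,a))\,\big\|\,\mathrm{Ber}(p_{\theta}(x_1\mid x_0,a))\bigr).
\end{equation*}

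Next, I invoke the standard Bernoulli-KL estimate $\mathrm{KL}(\mathrm{Ber}(p)\parallel\mathrm{Ber}(q))\leq(p-q)^{2}/(q(1-q))$. Writing $p=f(a^\top\theta')$ and $q=f(a^\top\theta)$ with $f$ as introduced before Lemma~\ref{mvt-mnl}, since $\theta$ and $\theta'$ differ only in the $j$th coordinate by $2\bar\Delta/(d-1)$, we have $|a^\top\theta'-a^\top\theta|=2\bar\Delta/(d-1)$. Lemma~\ref{mvt-mnl} then yields $|p-q|\leq 2(\delta+\Delta)\bar\Delta/(d-1)$, and combining with the bound $(\delta+\Delta)\bar\Delta\leq(101/99)\Delta$ obtained in~\eqref{lb-infinite-auxiliary} gives
\begin{equation*}
(p-q)^{2}\leq\frac{4(101/99)^{2}\Delta^{2}}{(d-1)^{2}}.
\end{equation*}

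To finish I need a lower bound on $q(1-q)$. Since $q=f(a^\top\theta)\in[f(-\bar\Delta),f(\bar\Delta)]=[f(-\bar\Delta),\delta+\Delta]$, the bound $1-q\geq 1-(\delta+\Delta)\geq 99/101$ is immediate from $100\Delta\leq\delta\leq 1/101$. The more delicate piece is showing $q\geq c\delta$ for an absolute constant $c$: using $\exp(\bar\Delta)=(1-\delta)(\delta+\Delta)/(\delta(1-\delta-\Delta))$ one can write $f(-\bar\Delta)$ explicitly, and crude bounds on its denominator (at most $3\delta$ under our parameter regime) yield $q\geq\delta/4$ with room to spare. Hence $q(1-q)\geq\delta/4$, and substituting into the per-step bound, followed by summing over $a\in\mathcal{A}$ and using $\sum_a \mathbb{E}_{\theta'}[N_0^a]=\mathbb{E}_{\theta'}N_0$, produces the stated bound with constant $16(101/99)^{2}$. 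The only real obstacle is this constant chase for $q(1-q)\geq\delta/4$; the rest is a direct application of the chain rule together with Lemma~\ref{mvt-mnl} and inequality~\eqref{lb-infinite-auxiliary} already established above.
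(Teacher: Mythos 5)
Your proposal is correct and follows essentially the same route as the paper: the chain-rule decomposition of the trajectory KL, the observation that only visits to $x_0$ contribute, the reduction to a per-action Bernoulli KL, and the same quantitative inputs (Lemma~\ref{mvt-mnl}, the bound $(\delta+\Delta)\bar\Delta\leq(101/99)\Delta$, and a lower bound of order $\delta$ on the transition probability, which the paper gets as $f(-\bar\Delta)\geq\delta/2$ in Lemma~\ref{lb-min-prob}). The only difference is cosmetic: you bound the Bernoulli KL by the $\chi^2$-type estimate $(p-q)^2/(q(1-q))$ while the paper invokes \citep[Lemma 20]{Auer_Jaksch2010}, i.e.\ $2(\epsilon')^2/\delta'$; both yield $4(p-q)^2/\delta$ per step and hence the same constant $16(101/99)^2$.
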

\begin{proof}
See Lemma \ref{sec:lemma:kl-infinite}.
\end{proof}

By~Lemma \ref{lemma:kl-infinite}, we deduce that
\begin{align}\label{lb-infinite-almost-final}
\begin{aligned}
&2\sum_{\theta\in \Theta}\sum_{a\in\mathcal{A}}\mathbb{E}_{\theta}\left[\mathbf{1}\left\{\mathrm{sign}(a_j)=\mathrm{sign}(\theta_j)\right\}N_0^a\right]\\
&\leq \sum_{\theta\in \Theta}\mathbb{E}_{\theta'}N_0+ \sum_{\theta\in \Theta}\frac{202}{99}\sqrt{2\log 2}\frac{T\Delta}{(d-1)\sqrt{\delta}}\sqrt{\mathbb{E}_{\theta'}N_0}\\
&\leq \sum_{\theta\in \Theta}\mathbb{E}_{\theta}N_0+ \sum_{\theta\in \Theta}\frac{202}{99}\sqrt{2\log 2}\frac{T\Delta}{(d-1)\sqrt{\delta}}\sqrt{\mathbb{E}_{\theta}N_0}.
\end{aligned}
\end{align}
Combining~\eqref{lb-infinite-intermediate} and~\eqref{lb-infinite-almost-final}, we deduce that
\begin{align}\label{eq:N1}
\begin{aligned}
\frac{1}{|\Theta|}\sum_{\theta\in\Theta}\mathbb{E}_\theta N_1&\leq \frac{T}{2}+ \frac{101\Delta}{396\delta|\Theta|} \sum_{\theta\in\Theta}\left(\mathbb{E}_{\theta}N_0+ \frac{202}{99}\sqrt{2\log 2}\frac{T\Delta}{(d-1)\sqrt{\delta}}\sqrt{\mathbb{E}_{\theta}N_0}\right)\\
&\leq \frac{T}{2}+ \frac{\Delta}{4\delta|\Theta|} \sum_{\theta\in\Theta}\left(\frac{4}{5}T+ 2\sqrt{2\log 2}\frac{T\Delta}{(d-1)\sqrt{\delta}}\frac{2\sqrt{T}}{\sqrt{5}}\right)\\
&\leq \frac{T}{2}+ \frac{\Delta T}{5\delta}+ \sqrt{\frac{2}{5}\log 2}\frac{\Delta^2 T^{3/2}}{(d-1)\delta^{3/2}}
\end{aligned}
\end{align}
where the second inequality follows from Lemma \ref{lemma:lb-infinite-1}.

\subsection{Proof of \Cref{thm:lb-infinite}}

Let us first argue that the diameter of $M_\theta$ is $D$.
\begin{lemma}\label{lem:lb-diameter}
The diameter of $M_\theta$ is $1/\delta$.
\end{lemma}
\begin{proof}
    Note that the expected travel time from state $x_1$ to state $x_0$ is $1/(\delta+\Delta)$ which is less than $1/\delta$, while the expected travel time from state $x_0$ to state $x_1$ is $1/\delta$. Hence, the diameter of our hard-to-learn MDP $M_\theta$ is $1/\delta$.
\end{proof}
As we set $\delta = 1/D$ for the average-reward setting, the diameter of $M_\theta$ equals $D$ by Lemma~\ref{lem:lb-diameter}. 

Recall that under the optimal policy, we have $p^*(x_1\mid x_0,a)=\delta + \Delta$. This means that under the optimal policy, the stationary distribution over states $x_0$ and $x_1$ is given by
$$\mu=\left(\frac{\delta}{2\delta+\Delta},\ \frac{\delta + \Delta}{2\delta +\Delta}\right).$$
As $r(x_0,a)=0$ and $r(x_1,a)=1$ for any $a\in\mathcal{A}$, it follows that the optimal average reward equals 
\begin{align}\label{eq:average-opt}
   J^*(M_\theta)=\frac{\delta+\Delta}{2\delta +\Delta}.
\end{align}
For simplicity, we refer to the regret of policy $\pi$ as $\regret_\theta(T)$. Then we have
$$\mathbb{E}_{\theta}\left[\regret_\theta(T)\right]=TJ^*(M_\theta) - \mathbb{E}_{\theta}\left[\sum_{t=1}^T r(s_t,a_t)\right]=TJ^*(M_\theta) - \mathbb{E}_{\theta}N_1.$$
Taking $\Theta = \{-\bar\Delta/(d-1),\bar\Delta/(d-1)\}^{d-1}$, we deduce that
\begin{equation}
\label{lb-infinite-beginning}\frac{1}{|\Theta|}\sum_{\theta\in\Theta}\mathbb{E}_{\theta}\left[\regret_\theta(T)\right]=TJ^*(M_\theta) - \frac{1}{|\Theta|}\sum_{\theta\in\Theta}\mathbb{E}_{\theta}N_1.
\end{equation}
Then it follows from~\eqref{eq:N1} that 
\begin{align*}
\frac{1}{|\Theta|}\sum_{\theta\in\Theta}\mathbb{E}_{\theta}\left[\regret_\theta(T)\right]&\geq \frac{(\delta+\Delta)T}{2\delta+\Delta} -\frac{T}{2}-\frac{\Delta T}{5\delta}- \sqrt{\frac{2}{5}\log 2}\frac{\Delta^2 T^{3/2}}{(d-1)\delta^{3/2}}\\
&=\frac{\Delta(\delta-2\Delta)T}{10\delta(2\delta+\Delta)} - \sqrt{\frac{2}{5}\log 2}\frac{\Delta^2 T^{3/2}}{(d-1)\delta^{3/2}}\\
&\geq \frac{2\Delta}{45\delta}T - \sqrt{\frac{2}{5}\log 2}\frac{\Delta^2 T^{3/2}}{(d-1)\delta^{3/2}}
\end{align*}
where the second inequality holds because $0<4\Delta\leq \delta$. Setting $\Delta$ as
$$\Delta =\frac{1}{45\sqrt{(2/5)\log 2}}\cdot \frac{(d-1)}{\sqrt{DT}},$$
the rightmost side equals
$$\frac{1}{2025\sqrt{(2/5)\log2}}(d-1)\sqrt{DT}.$$
When $d\geq 2$, we have $2(d-1)\geq d$. Moreover, $\sqrt{(2/5)\log 2}\leq 1/2$. So, we get that
$$
\frac{1}{|\Theta|}\sum_{\theta\in\Theta}\mathbb{E}_{\theta}\left[\regret_\theta(T)\right]\geq \frac{1}
{2025}d\sqrt{DT},$$
as required.

\subsection{Proof of \Cref{thm:lb-infinite-discounted}}

As in the discounted-reward setting, we refer to the regret of policy $\pi$ as $\regret_\theta(T)$. Let us show the following lemma that is useful to provide a lower bound on the regret. 
\begin{lemma} \label{lem:lower bound discounted regret transformation}
We have
  $$\mathbb{E}_{\theta} [\regret_\theta(T)]\geq\mathbb{E}_{\theta} \left[ \sum^T_{t=1} V^*(s_t) - \frac{1}{1-\gamma} \sum^T_{t=1} r(s_{t}, a_{t}) - \frac{\gamma}{(1-\gamma)^2} \right].$$
\end{lemma}
\begin{proof}
By the definition of $V_t^\pi$ and the regret in \Cref{sec:infinite-discounted}, we deduce that
    \begin{align*}
       \mathbb{E}_{\theta} [\regret_\theta(T)]  &= \mathbb{E}_{\theta} \left[ \sum^T_{t=1} V^*(s_t) - \sum^T_{t=1}\sum^{\infty}_{t^{\prime}=0} \gamma^{t^{\prime}} r(s_{t+t^{\prime}}, a_{t+t^{\prime}}) \right]\\ 
       &= \mathbb{E}_{\theta} \left[ \sum^T_{t=1} V^*(s_t) - \underbrace{\sum^{T}_{t=1} r(s_t, a_t) \sum^{t-1}_{t^{\prime}=0} \gamma^{t^{\prime}}}_{I_1} - \underbrace{\sum^{\infty}_{t=T+1} r(s_t, a_t) \sum^{t-1}_{t^{\prime}=t-T} \gamma^{t^{\prime}}}_{I_2} \right].
    \end{align*}
Note that
    \begin{align*}
        I_1 = \sum^{T}_{t=1} r(s_t, a_t) \sum^{t-1}_{t^{\prime}=0} \gamma^{t^{\prime}} \leq \sum^{T}_{t=1} r(s_t, a_t) \sum^{\infty}_{t^{\prime}=0} \gamma^{t^{\prime}} = \sum^{T}_{t=1} \frac{r(s_t, a_t)}{1-\gamma}.
    \end{align*}
Moreover,
    \begin{align*}
        I_2 = \sum^{\infty}_{t=T+1} r(s_t, a_t) \sum^{t-1}_{t^{\prime}=t-T} \gamma^{t^{\prime}} \leq \sum^{\infty}_{t=T+1} 1 \sum^{\infty}_{t^{\prime}=t-T} \gamma^{t^{\prime}} = \sum^{\infty}_{t=T+1} 1 \cdot \frac{\gamma^{t-T}}{1-\gamma} = \frac{\gamma}{(1-\gamma)^2},
    \end{align*}
    where the first equality holds by $r(s_t, a_t) \leq 1$. These bounds on $I_1$ and $I_2$ lead to the desired lower bound on the expected regret. 
\end{proof}

Recall that in state $x_0$, the optimal policy always takes action $a_\theta$ such that $a_\theta^\top \theta =  \bar\Delta$. Hence, the transition probability under the optimal policy is given by
\begin{align*}
      p_{\theta}(x_0 | x_0, a_{\theta}) &= 1 - f(\bar{\Delta}) = 1 - \delta - \Delta,\\
       p_{\theta}(x_1 | x_0, a_{\theta}) &= f(\bar{\Delta}) = \delta + \Delta,\\
     p_{\theta}(x_0 | x_1, a_{\theta}) &= f(0) = \delta,\\
        p_{\theta}(x_1 | x_1, a_{\theta}) &= 1 - f(0) = 1 - \delta.
    \end{align*}
Then it follows from the Bellman optimality equation~\eqref{bellman-discounted} that
\begin{align*}
        V^*(x_0) &= 0 + \gamma(1 - \delta - \Delta)V^*(x_0) + \gamma(\delta + \Delta)V^*(x_1),
        \\ V^*(x_1) &= 1 + \gamma \delta V^*(x_0) + \gamma (1 - \delta)V^*(x_1).
    \end{align*}
Therefore, the optimal value functions are given by
\begin{align}\label{eq:discounted-value-opt}
        V^*(x_0) = \frac{\gamma(\Delta + \delta)}{(1-\gamma)(\gamma(2\delta + \Delta - 1) + 1)}, \quad
        V^*(x_1) = \frac{\gamma(\Delta + \delta) + 1 - \gamma}{(1-\gamma)(\gamma(2\delta + \Delta - 1) + 1)}.
    \end{align}
Note that
\begin{align}\label{eq:discounted-regret-lb-1}
 \begin{aligned}
        &\frac{1}{| \Theta |} \sum_{\theta \in \Theta} \left[ \mathbb{E}_{\theta} [\regret_{\theta}(T)] + \frac{\gamma}{(1-\gamma)^2} \right]
        \\ &\geq \frac{1}{| \Theta |} \sum_{\theta \in \Theta} \mathbb{E}_{\theta} \left[ N_0 V^*(x_0) + N_1 V^*(x_1) - \frac{1}{1-\gamma} N_1 \right]
         \\ &= \frac{1}{(1-\gamma)| \Theta |} \sum_{\theta \in \Theta} \mathbb{E}_{\theta} \left[ N_0 \frac{\gamma(\Delta + \delta)}{\gamma(2\delta + \Delta - 1) + 1} + N_1 \frac{-\gamma \delta}{\gamma(2\delta + \Delta - 1) + 1} \right]
        \\ &= \frac{1}{(1-\gamma)| \Theta |} \sum_{\theta \in \Theta} \mathbb{E}_{\theta} \left[ T \frac{\gamma(\Delta + \delta)}{\gamma(2\delta + \Delta - 1) + 1} - N_1 \frac{\gamma(\Delta + 2\delta)}{\gamma(2\delta + \Delta - 1) + 1} \right]
        \\&=\frac{\gamma}{(1-\gamma)(\gamma(2\delta+\Delta-1)+1)}\left((\Delta + \delta)T-\frac{\Delta+2\delta}{|\Theta|}\sum_{\theta \in \Theta} \mathbb{E}_{\theta} [N_1]\right)
    \end{aligned}
    \end{align}
    where the the inequality is implied by $r(x_0, a)=0$ and $r(x_1, a)=1$ for any $a$ and Lemma~\ref{lem:lower bound discounted regret transformation}, the first equality is due to~\eqref{eq:discounted-value-opt}, and the second equality holds because $T = N_0 + N_1$. Moreover, 
    \begin{align}\label{eq:discounted-regret-lb-2}
    \begin{aligned}
    &(\Delta + \delta)T-\frac{\Delta+2\delta}{|\Theta|}\sum_{\theta \in \Theta} \mathbb{E}_{\theta} [N_1]\\
    &\geq (\Delta + \delta)T - \frac{(\Delta+2\delta)T}{2} - (\Delta+2\delta)\left(\frac{\Delta T}{5\delta} + \sqrt{\frac{2}{5}\log 2}\frac{\Delta^2 T^{3/2}}{(d-1)\delta^{3/2}}\right)\\
    &=\left(\frac{\Delta}{2} - \frac{(\Delta+2\delta)\Delta}{5\delta}\right) T - (\Delta+2\delta)\sqrt{\frac{2}{5}\log 2}\frac{\Delta^2 T^{3/2}}{(d-1)\delta^{3/2}}
    \end{aligned}
    \end{align}
    Here, by Lemma~\ref{parameter-bound1}, we know that $100\Delta\leq \delta$, which implies that
    $$\Delta+2\delta \leq \frac{201}{100}\delta,\quad \frac{\Delta}{2} - \frac{(\Delta+2\delta)\Delta}{5\delta}\geq \frac{49}{500}\Delta.$$
    Then it follows that
     \begin{align}\label{eq:discounted-regret-lb-3}
     \begin{aligned}
    &\left(\frac{\Delta}{2} - \frac{(\Delta+2\delta)\Delta}{5\delta}\right) T - (\Delta+2\delta)\sqrt{\frac{2}{5}\log 2}\frac{\Delta^2 T^{3/2}}{(d-1)\delta^{3/2}}\\
    &\geq \left(\frac{49}{500} - \frac{201}{100} \sqrt{\frac{2T}{5\delta}\log 2}\frac{\Delta}{(d-1)}\right) \Delta T\\
    &\geq \left(\frac{49}{500} -\frac{201}{4500}\right)\Delta T\\
    &\geq \frac{240}{4500}\Delta T
    \end{aligned}
    \end{align}
    where the second inequality is due to our choice of $\Delta$. Moreover, since $\Delta\leq 100\Delta\leq \delta$, we have 
    \begin{equation}\label{eq:discounted-regret-lb-4}
    \gamma(2\delta+ \Delta -1) +1 \leq 1-\gamma + 3\delta\gamma= 1-\gamma + 3(1-\gamma)\gamma\leq 4 (1-\gamma).
    \end{equation}
    Combining \eqref{eq:discounted-regret-lb-1}, \eqref{eq:discounted-regret-lb-2}, \eqref{eq:discounted-regret-lb-3}, and \eqref{eq:discounted-regret-lb-4}, we obtain 
    \begin{align*}
      \frac{1}{| \Theta |} \sum_{\theta \in \Theta}  \mathbb{E}_{\theta} [\regret_{\theta}(T)]
        &\geq \frac{24}{1800(1-\gamma)^2}\gamma\Delta T -\frac{\gamma}{(1-\gamma)^2}\\
        &=\frac{\gamma (d-1)\sqrt{T}}{3375(1-\gamma)^{3/2}\sqrt{(2/5)\log 2}}-\frac{\gamma}{(1-\gamma)^2}\\
        &\geq \frac{\gamma}{3375(1-\gamma)^{3/2}}d\sqrt{T}-\frac{\gamma}{(1-\gamma)^2}
    \end{align*}
    where the last inequality holds because $2(d-1)\geq 1$ and $\sqrt{(2/5)\log 2}\leq 1/2$.

\subsection{Proof of Lemma \ref{lemma:lb-infinite-1}}\label{sec:lemma:lb-infinite-1}

We have that
\begin{equation}\label{lb-infinite-1}
\begin{aligned}
    \mathbb{E}_{{\theta}} {N}_{1}&= \sum_{t=2}^T\mathcal{P}_\theta(s_t=x_1)\\
    &= \underbrace{\sum_{t=2}^{T}   \mathcal{P}_{{\theta}}(s_t=x_{1} \mid s_{t-1}=x_{1})\mathcal{P}_{{\theta}} (s_{t-1} = x_{1}) }_{I_1} 
    + \underbrace{\sum_{t=2}^{T} \mathcal{P}_{{\theta}} (s_{t}=x_{1}, s_{t-1}=x_{0}) }_{I_2}. 
    \end{aligned}
    \end{equation}
For $I_1$, note that $\mathcal{P}_{{\theta}}(s_t=x_{1} \mid s_{t-1}=1-\delta$ regardless of action $a_{t-1}$, so we have
\begin{equation}\label{lb-infinite-I1}
I_1=(1-\delta)\sum_{t=2}^{T}   \mathcal{P}_{{\theta}} (s_{t-1} = x_{1})= (1-\delta)\mathbb{E}_{{\theta}} {N}_{1} - (1-\delta)\mathcal{P}_{{\theta}} (s_T = x_{1}).
\end{equation}
For $I_2$, note that
\begin{equation}\label{lb-infinite-I2}
\begin{aligned}
I_2&=\sum_{t=2}^T\sum_{a\in\mathcal{A}}\mathcal{P}_{{\theta}} (s_{t}=x_{1}\mid s_{t-1}=x_{0}, a_{t-1}=a)\mathcal{P}_{{\theta}} (s_{t-1}=x_{0},a_{t-1}=a)\\
&=\sum_{t=2}^T\sum_{a\in\mathcal{A}}f(a^\top\theta) \mathcal{P}_{{\theta}} (s_{t-1}=x_{0},a_{t-1}=a)\\
&=\sum_{a\in\mathcal{A}}f(a^\top\theta) \left(\mathbb{E}N_0^a -\mathcal{P}_{\theta}(s_T=x_0, a_T=a)\right).
\end{aligned}
\end{equation}
Plugging~\eqref{lb-infinite-I1} and~\eqref{lb-infinite-I2} to~\eqref{lb-infinite-1}, we deduce that
\begin{align}\label{lb-infinite-N1-0}
\begin{aligned}
\mathbb{E}_\theta N_1 &= \sum_{a\in\mathcal{A}} \frac{f(a^\top\theta)}{\delta} \mathbb{E}_\theta N_0^a -\underbrace{\left(\frac{1-\delta}{\delta}\mathcal{P}_{\theta}(x_T=x_1)+\sum_{a\in\mathcal{A}} \frac{f(a^\top\theta)}{\delta}\mathcal{P}_{\theta}(s_T=x_0,a_T=a)\right)}_{\psi_\theta}\\
&=\mathbb{E}_\theta N_0 + \frac{1}{\delta}\sum_{a\in\mathcal{A}}(f(a^\top\theta)-\delta) \mathbb{E}_\theta N_0^a -\psi_\theta.
\end{aligned}
\end{align}
Since $T=\mathbb{E}_\theta N_0+\mathbb{E}_\theta N_1$, it follows that
\begin{equation}\label{lb-infinite-N1}
\mathbb{E}_\theta N_1\leq \frac{T}{2}+ \frac{1}{2\delta}\sum_{a\in\mathcal{A}}(f(a^\top\theta)-\delta) \mathbb{E}_\theta N_0^a.
\end{equation}
Note that
$$f(a^\top\theta)-\delta = f(a^\top\theta)-f(0)\leq (\delta+\Delta)a^\top\theta$$
where the first inequality is from~Lemma \ref{mvt-mnl}. 

Next, for $\mathbb{E}_{\theta}N_0$, since $f(-\bar\Delta)\leq f(a^\top\theta)\leq f(\bar\Delta)=\delta+\Delta$, we have from \eqref{lb-infinite-N1-0} that
\begin{align*}
\begin{aligned}
\mathbb{E}_{\theta}N_1 &\geq \left(1+ \frac{f(-\bar\Delta)-f(0)}{\delta}\right)\mathbb{E}_{\theta}N_0 - \frac{1-\delta}{\delta}\mathcal{P}_{\theta}(x_T=x_1)-\frac{\delta+\Delta}{\delta}\mathcal{P}_{\theta}(s_T=x_0)\\
&\geq  \left(1+ \frac{f(-\bar\Delta)-f(0)}{\delta}\right)\mathbb{E}_{\theta}N_0 - \frac{1-\delta}{\delta}+\frac{1-2\delta-\Delta}{\delta}\mathcal{P}_{\theta}(s_T=x_0)\\
&\geq  \left(1+ \frac{f(-\bar\Delta)-f(0)}{\delta}\right)\mathbb{E}_{\theta}N_0 - \frac{1-\delta}{\delta}
\end{aligned}
\end{align*}
where the second inequality holds because $2\delta+\Delta\leq 1$. This implies that
$$\mathbb{E}_\theta N_0\leq \frac{T+\frac{1-\delta}{\delta}}{2-\frac{1}{\delta}\left(\delta - f(-\bar\Delta)\right)}.$$
The following lemma provides a lower bound on $f(-\bar\Delta)$.
\begin{lemma}\label{lb-min-prob}
$f(-\bar\Delta)\geq \delta/2$ if and only if $\Delta\leq \delta(1-\delta)$.
\end{lemma}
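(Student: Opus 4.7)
}
The plan is a direct algebraic calculation, substituting the explicit form of $\bar\Delta$ into $f(-\bar\Delta)$ and rearranging the target inequality until it collapses to $\Delta \le \delta(1-\delta)$. No probabilistic or analytic tools are needed.

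First, I would unwind $\bar\Delta$. By its definition,
\[
\exp(\bar\Delta) = \frac{(1-\delta)(\delta+\Delta)}{\delta(1-\delta-\Delta)},
\]
and plugging this into $f(-\bar\Delta) = 1/\bigl(1 + \tfrac{1-\delta}{\delta}\exp(\bar\Delta)\bigr)$ yields, after multiplying numerator and denominator by $\delta^2(1-\delta-\Delta)$,
\[
f(-\bar\Delta) \;=\; \frac{\delta^{2}(1-\delta-\Delta)}{\delta^{2}(1-\delta-\Delta) + (1-\delta)^{2}(\delta+\Delta)}.
\]

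Next, the inequality $f(-\bar\Delta) \ge \delta/2$ is equivalent (both denominators being positive since $\Delta \le \delta(1-\delta) < 1-\delta$ on the range of interest) to
\[
2\delta(1-\delta-\Delta) \;\ge\; \delta^{2}(1-\delta-\Delta) + (1-\delta)^{2}(\delta+\Delta),
\]
which rearranges to
\[
\delta(2-\delta)(1-\delta-\Delta) \;\ge\; (1-\delta)^{2}(\delta+\Delta).
\]
At $\Delta = \delta(1-\delta)$ one checks $1-\delta-\Delta = (1-\delta)^{2}$ and $\delta+\Delta = \delta(2-\delta)$, so both sides equal $\delta(1-\delta)^{2}(2-\delta)$; the inequality holds with equality. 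The final step is monotonicity: the left-hand side is linear and strictly decreasing in $\Delta$, while the right-hand side is linear and strictly increasing in $\Delta$, so the inequality holds if and only if $\Delta \le \delta(1-\delta)$.

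The only potential subtlety, and the step I would double-check, is the positivity of the denominators and the sign of $1-\delta-\Delta$ used when clearing fractions; both are safely positive precisely under the hypotheses of the lemma. Otherwise the argument is a one-line substitution followed by a sign check, so I do not anticipate a serious obstacle.
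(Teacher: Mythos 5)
Your proposal is correct and follows essentially the same route as the paper: both substitute the explicit value of $\exp(\pm\bar\Delta)$ into $f(-\bar\Delta)$, clear the (positive) denominators, and reduce the claim to the cross-multiplied inequality $\delta(2-\delta)(1-\delta-\Delta)\geq(1-\delta)^2(\delta+\Delta)$, which is equivalent to $\Delta\leq\delta(1-\delta)$. Your closing step via equality at $\Delta=\delta(1-\delta)$ plus monotonicity in $\Delta$ is a minor stylistic variant of the paper's direct algebraic simplification, and your positivity caveat is harmless since $1-\delta-\Delta>0$ is already needed for $\bar\Delta$ to be defined.
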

\begin{proof}
$f(-\bar\Delta)\geq \delta/2$ if and only if  $1+\frac{1-\delta}{\delta}\exp({\bar\Delta})\leq 2/\delta$, which is equivalent to $\exp(-\bar\Delta)\geq (1-\delta)/(2-\delta)$. By plugging in the definition of $\bar \Delta$ to the inequality, we get that $f(-\bar\Delta)\geq \delta/2$ if and only if $\delta(1-\delta-\Delta)/((1-\delta)(\delta+\Delta)) \geq (1-\delta)/(2-\delta)$, which is equivalent to $\Delta\leq \delta(1-\delta)$.
\end{proof}
\noindent By simple algebra, we may derive from $f(-\bar\Delta)\geq \delta/2$ that $2(\delta-f(-\bar \Delta))\leq \delta$ holds. Since we assumed that $\Delta\leq \delta(1-\delta)$, it follows that
$$\mathbb{E}_\theta N_0\leq \frac{T+ \frac{1-\delta}{\delta}}{3/2}=\left(\frac{99}{101}\right)^4\cdot \frac{4}{5}T$$
where the inequality holds because  
$$\frac{1-\delta}{\delta}\leq \frac{1}{\delta}\leq \left(\frac{3}{2}\cdot \frac{4}{5}\cdot \left(\frac{99}{101}\right)^4-1\right)T,$$
as required.

\subsection{Proof of Lemma \ref{lemma:kl-infinite}}\label{sec:lemma:kl-infinite}

First of all, we consider the following lemma.
    \begin{lemma}\label{lem:Lemma 20 in Auer_Jacsch2010}{\em \citep[Lemma 20] {Auer_Jaksch2010}}. 
    Suppose $0 \le \delta' \le 1/2$ and $\epsilon' \le 1-2\delta'$, then
        \begin{align*}
        \delta' \log \frac{\delta'}{\delta' + \epsilon'} + (1-\delta')\log \frac{(1-\delta')}{1-\delta'-\epsilon'} \le \frac{2 (\epsilon')^2}{\delta'}.
        \end{align*}
    \end{lemma}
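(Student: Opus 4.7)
The plan is to prove the stated inequality by linearizing each logarithm via the elementary bound $\log x \le x-1$ (valid for all $x>0$) and then bounding the resulting denominator using the two hypotheses on $\delta'$ and $\epsilon'$. The left-hand side is exactly the Kullback--Leibler divergence $\mathrm{KL}(\mathrm{Ber}(\delta')\,\|\,\mathrm{Ber}(\delta'+\epsilon'))$, and the target $2(\epsilon')^2/\delta'$ is a Pinsker-style quadratic upper bound in the ``gap'' $\epsilon'$, so I would aim for an ``entropy/KL upper bound + denominator estimate'' proof rather than any Taylor expansion (which gives a strictly weaker constant).

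First I would apply $\log x \le x-1$ termwise, producing
\[
\delta'\log\frac{\delta'}{\delta'+\epsilon'} + (1-\delta')\log\frac{1-\delta'}{1-\delta'-\epsilon'}
\;\le\; \delta'\!\left(\frac{\delta'}{\delta'+\epsilon'}-1\right) + (1-\delta')\!\left(\frac{1-\delta'}{1-\delta'-\epsilon'}-1\right) = \frac{-\delta'\epsilon'}{\delta'+\epsilon'} + \frac{(1-\delta')\epsilon'}{1-\delta'-\epsilon'}.
\]
Placing the right-hand side over the common denominator $(\delta'+\epsilon')(1-\delta'-\epsilon')$ and expanding the numerator, the $\delta'$, $(\delta')^2$, and $\delta'\epsilon'$ contributions cancel, leaving exactly $(\epsilon')^2$. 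Hence the full left-hand side is bounded by
\[
\frac{(\epsilon')^2}{(\delta'+\epsilon')(1-\delta'-\epsilon')}.
\]

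It then remains to show that the denominator is at least $\delta'/2$. Writing $q := \delta'+\epsilon'$, the denominator is the concave parabola $q(1-q)$. The hypothesis $\epsilon' \le 1-2\delta'$ gives $q \le 1-\delta'$, and in the intended regime $\epsilon'\ge 0$ one also has $q \ge \delta'$, so $q$ lies in the symmetric subinterval $[\delta',\,1-\delta']\subseteq[0,1]$. By the concavity of $q(1-q)$ together with the symmetry of this interval about $1/2$, the minimum of $q(1-q)$ on $[\delta',1-\delta']$ is attained at the two endpoints and equals $\delta'(1-\delta')$. Combining with $\delta'\le 1/2$, which forces $1-\delta'\ge 1/2$, gives $q(1-q)\ge \delta'/2$, and substituting yields $(\epsilon')^2/[q(1-q)]\le 2(\epsilon')^2/\delta'$, as required.

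The main obstacle I anticipate is precisely this final lower bound on the denominator: both hypotheses $\delta'\le 1/2$ and $\epsilon'\le 1-2\delta'$ are used simultaneously, and they are exactly tight for this constant $2$. If $\epsilon'$ is allowed to be negative then $q$ may drop below $\delta'$ and the clean bound $q(1-q)\ge \delta'/2$ can fail; the lemma is therefore tacitly used only for $\epsilon'\ge 0$ (as is standard in the Auer--Jaksch framework, and as matches its application in the proof of \Cref{lemma:kl-infinite}). Everything else is routine algebra driven by the single inequality $\log x \le x-1$.
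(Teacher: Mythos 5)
Your proof is correct and coincides with the standard argument for this lemma (the paper itself offers no proof, only the citation to Auer--Jaksch, whose Lemma 20 is proved in exactly this way): apply $\log x \le x-1$ termwise, simplify to $(\epsilon')^2/\bigl[(\delta'+\epsilon')(1-\delta'-\epsilon')\bigr]$, and lower-bound the denominator by $\delta'/2$ using $\delta'\le 1/2$ and $\epsilon'\le 1-2\delta'$. Your observation that $\epsilon'\ge 0$ is tacitly required (both for the denominator bound and for the truth of the inequality, since the left-hand side diverges as $\epsilon'\to -\delta'$) is accurate and worth keeping.
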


Let $\bm{s}_t$ denote the sequence of states $\{ s_1,\ldots,s_t \}$ from time step 1 to $T$. By the Markovian property of MDP, we may decompose the KL divergence term of $\mathcal{P}_{\theta'}$ from $\mathcal{P}_{\theta}$ as follows.
    \begin{align*}
    \mathrm{KL} \left( \mathcal{P}_{ {\theta'} } \parallel \mathcal{P}_{ {\theta} } \right)
    = \sum_{t=1}^{T-1} \mathrm{KL} \left(\mathcal{P}_{ {\theta'} } \left( s_{t+1} \mid \bm{s}_t \right) \parallel \mathcal{P}_{ {\theta} } \left( s_{t+1} \mid \bm{s}_t \right) \right)
    \end{align*}
where the KL divergence of $\mathcal{P}_{ {\theta}' } \left( s_{t+1} \mid \bm{s}_t \right)$ from $\mathcal{P}_{ {\theta} } \left( s_{t+1} \mid \bm{s}_t \right)$ is given by
    \begin{align*}
    \mathrm{KL} \left(\mathcal{P}_{ {\theta'} } \left( s_{t+1} \mid \bm{s}_t \right) \parallel \mathcal{P}_{ {\theta} } \left( s_{t+1} \mid \bm{s}_t \right) \right)
    = \sum_{\bm{s}_{t+1} \in \mathcal{S}^{t+1}} \mathcal{P}_{ {\theta}' } \left( \bm{s}_{t+1} \right) \log \frac{\mathcal{P}_{ {\theta}' } \left( s_{t+1} \mid \bm{s}_t \right)}{\mathcal{P}_{ {\theta} } \left( s_{t+1} \mid \bm{s}_t \right)}.
    \end{align*}
The right-hand side can be further decomposed as follows.
    \begin{align*}
    &\sum_{\bm{s}_{t+1} \in \mathcal{S}^{t+1}} \mathcal{P}_{ {\theta}' } \left( \bm{s}_{t+1} \right) \log \frac{\mathcal{P}_{ {\theta}' } \left( s_{t+1} \mid \bm{s}_t \right)}{\mathcal{P}_{ {\theta} } \left( s_{t+1} \mid \bm{s}_t \right)} \\
    &= \sum_{\bm{s}_{t} \in \mathcal{S}^{t}} \mathcal{P}_{ {\theta}' } \left( \bm{s}_{t} \right) 
    \sum_{x \in \mathcal{S}} \mathcal{P}_{ {\theta}' } \left( s_{t+1} = x \mid \bm{s}_t \right) \log \frac{\mathcal{P}_{ {\theta}' } \left( s_{t+1} = x \mid \bm{s}_t \right)}{\mathcal{P}_{ {\theta} } \left( s_{t+1} = x \mid \bm{s}_t \right)} \\
    &= \sum_{\bm{s}_{t-1} \in \mathcal{S}^{t-1}} \mathcal{P}_{ {\theta}' } \left( \bm{s}_{t-1} \right) \sum_{x' \in \mathcal{S}}\sum_{{a} \in \mathcal{A}} \mathcal{P}_{ {\theta}' } \left( s_{t} = x', a_t = {a} \mid \bm{s}_{t-1} \right) \\
    &\quad \times \sum_{x \in \mathcal{S}} \mathcal{P}_{ {\theta}' } \left( s_{t+1} = x \mid \bm{s}_{t-1}, s_t = x', a_t = {a} \right) 
    \underbrace{ \log \frac{\mathcal{P}_{ {\theta}' } \left( s_{t+1} = x \mid \bm{s}_{t-1}, s_t = x', a_t = {a} \right)}{\mathcal{P}_{ {\theta} } \left( s_{t+1} = x \mid \bm{s}_{t-1}, s_t = x', a_t = {a} \right)} }_{I_1}.
    \end{align*}
Note that at state $x_1$, the transition probability does not depend on the action taken and the underlying transition core. This implies that
$\mathcal{P}_{ {\theta}' } \left( s_{t+1} = x \mid \bm{s}_{t-1}, s_t = x', a_t = {a} \right)= \mathcal{P}_{ {\theta}} \left( s_{t+1} = x \mid \bm{s}_{t-1}, s_t = x', a_t = {a} \right)$ for all ${\theta}$, ${\theta}'$. This means that if $x'=x_1$, we have $I_1=0$. Then it holds that
    \begin{align*}
    &\sum_{\bm{s}_{t+1} \in \mathcal{S}^{t+1}} \mathcal{P}_{ {\theta}' } \left( \bm{s}_{t+1} \right) \log \frac{\mathcal{P}_{ {\theta}' } \left( s_{t+1} \mid \bm{s}_t \right)}{\mathcal{P}_{ {\theta} } \left( s_{t+1} \mid \bm{s}_t \right)} \\
    &=\sum_{\bm{s}_{t-1} \in \mathcal{S}^{t-1}} \mathcal{P}_{ {\theta}' } \left( \bm{s}_{t+1} \right) \sum_{{a}} \mathcal{P}_{ {\theta}' } \left( s_t = x_{0}, a_t = {a} \mid \bm{s}_{t-1}\right) \\
    &\quad \times \sum_{x \in \mathcal{S}} \mathcal{P}_{ {\theta}' } \left( s_{t+1} = x \mid \bm{s}_{t-1}, s_t = x_{0}, a_t = {a} \right)
    \log \frac{\mathcal{P}_{ {\theta}' } \left( s_{t+1}=s \mid \bm{s}_{t-1}, s_t=x_{0},a_t={a} \right)}{\mathcal{P}_{ {\theta} } \left( s_{t+1}=s \mid \bm{s}_{t-1}, s_t=x_{0},a_t={a} \right)} \\
    &= \sum_{{a}} \mathcal{P}_{ {\theta}' } \left( s_t = x_{0,1}, a_t = {a} \right) \\
    &\quad \times \underbrace{ \sum_{x \in \mathcal{S}} \mathcal{P}_{ {\theta}' } \left( s_{t+1}=s \mid s_{t} = x_{0}, a_t = {a} \right)
    \log \frac{\mathcal{P}_{ {\theta}' } \left( s_{t+1}=x \mid s_t=x_{0},a_t={a} \right)}{\mathcal{P}_{ {\theta} } \left( s_{t+1}=x \mid s_t=x_{0},a_t={a} \right)} }_{I_2}.
    \end{align*}
To bound $I_2$, we know that $s_{t+1}$ follows the Bernoulli distribution over $x_0$ and $x_1$ with probability $1-f(a^\top\theta')$ and $f(a^\top\theta')$. Then, we have
    \begin{align*}
    I_2 
    &= \left( 1-f(a^\top\theta')\right) \log \frac{ 1-f(a^\top\theta') }{ 1-f(a^\top\theta) } 
    +  f(a^\top\theta')  \log \frac{ f(a^\top\theta') }{ f(a^\top\theta)}.
    \end{align*}
Note that 
$$\frac{1}{100}\geq \frac{101}{100}\delta\geq \delta + \Delta=f(\bar\Delta)\geq f(a^\top\theta')\geq f(-\bar\Delta) \geq \frac{\delta}{2}$$
where the first inequality is due to $\delta\leq 1/101$, the second holds because $100\Delta\leq \delta$, and the last inequality is by Lemma \ref{lb-min-prob}. Moreover, since $f(\bar\Delta)\leq 1/100$, 
$$f(a^\top\theta)-f(a^\top\theta')\leq f(\bar\Delta)\leq \frac{1}{100}\leq 1- f(\bar \Delta)\leq 1-f(a^\top\theta').$$
Then we deduce that
\begin{align*}
    I_2\leq \frac{2 \left(f(a^\top\theta')-f(a^\top\theta)\right)^2}{f(a^\top\theta')}\leq \frac{16(\delta+\Delta)^2\bar\Delta^2}{\delta(d-1)^2}\leq \left(\frac{101}{99}\right)^2\frac{16\Delta^2}{\delta(d-1)^2}
    \end{align*}
    where the first inequality is implied by Lemma~\ref{lem:Lemma 20 in Auer_Jacsch2010} with $\delta' = f(a^\top\theta')$ and $\epsilon' = f(a^\top\theta)-f(a^\top\theta')$, the second inequality holds because of $f(a^\top \theta')\geq \delta/2$ and Lemma \ref{mvt-mnl}. Then 
    \begin{align*}
    \mathrm{KL} \left( \mathcal{P}_{ {\theta'} } \parallel \mathcal{P}_{ {\theta} } \right)
    &= \sum_{t=1}^{T-1} \sum_{\bm{s}_{t+1} \in \mathcal{S}^{t+1}} \mathcal{P}_{ {\theta'} } (\bm{s}_{t+1}) \log \frac{ \mathcal{P}_{ {\theta'} } (s_{t+1} \mid \bm{s}_t) }{ \mathcal{P}_{ {\theta} } (s_{t+1} \mid \bm{s}_t)} \\
    &\le \left(\frac{101}{99}\right)^2\frac{16 \Delta^2}{(d-1)^2 \delta} \sum_{t=1}^{T-1} \sum_{{a}} \mathcal{P}_{ {\theta}' } (s_{t} = x_{0} \mid a_t = {a}) \\
    &= \left(\frac{101}{99}\right)^2\frac{16 \Delta^2}{(d-1)^2 \delta} \sum_{t=1}^{T-1} \mathcal{P}_{ {\theta}' } (s_{t} = x_{0}) \\
    &= \left(\frac{101}{99}\right)^2\frac{16 \Delta^2}{(d-1)^2 \delta} \mathbb{E}_{\theta'}{N}_0,
    \end{align*}
as required.

\end{document}